\pdfoutput=1
\documentclass{article}
\usepackage{utils/utils}
\usepackage[accepted]{utils/icml2026}

\usepackage{xurl}
\usepackage{tabularx}
\usepackage{array}

\begin{document}
\twocolumn[
\icmltitle{Planar Symmetric Pattern Generation}

\icmlsetsymbol{equal}{*}

\begin{icmlauthorlist}
\icmlauthor{Ning Lin}{equal,sch}
\icmlauthor{Luxi Chen}{equal,sch}
\icmlauthor{Huaguan Chen}{sch}  
\icmlauthor{Jiacheng Cen}{sch} 
\icmlauthor{Chongxuan Li}{sch} 
\icmlauthor{Wenbing Huang}{sch} 
\icmlauthor{Hao Sun}{sch} 
\end{icmlauthorlist}

\icmlaffiliation{sch}{Gaoling School of Artificial Intelligence, Renmin University of China}
\icmlcorrespondingauthor{Hao Sun}{haosun@ruc.edu.cn}

\vskip 0.3in
]

\printAffiliationsAndNotice{\icmlEqualContribution} %

\begin{abstract}
Generating objects with specific symmetries is essential in various real-world scenarios. However, adapting existing 2D continuous representations to enforce planar group symmetry remains a challenge, as the transformation of non-reflective group elements may disrupt continuity.
To overcome this limitation, we propose a symmetrization framework for arbitrary planar groups. Our method transforms any 2D continuous representation into a symmetric one while preserving continuity. We provide the mathematical formulation of this representation, demonstrate its approximation capability for symmetric functions, and detail the construction methodology.
We validate our approach through three visual design tasks (pattern design, paper-cutting design and stylized topology design) and one material design task. Experiments confirm that our representation enables effective symmetry control and demonstrate its broader applicability. Code is available at \href{https://github.com/GLAD-RUC//Sym2D}{https://github.com/GLAD-RUC/Sym2D}.
\end{abstract}

\begin{figure}[!t]
    \centering
    \includegraphics[width=0.95\linewidth]{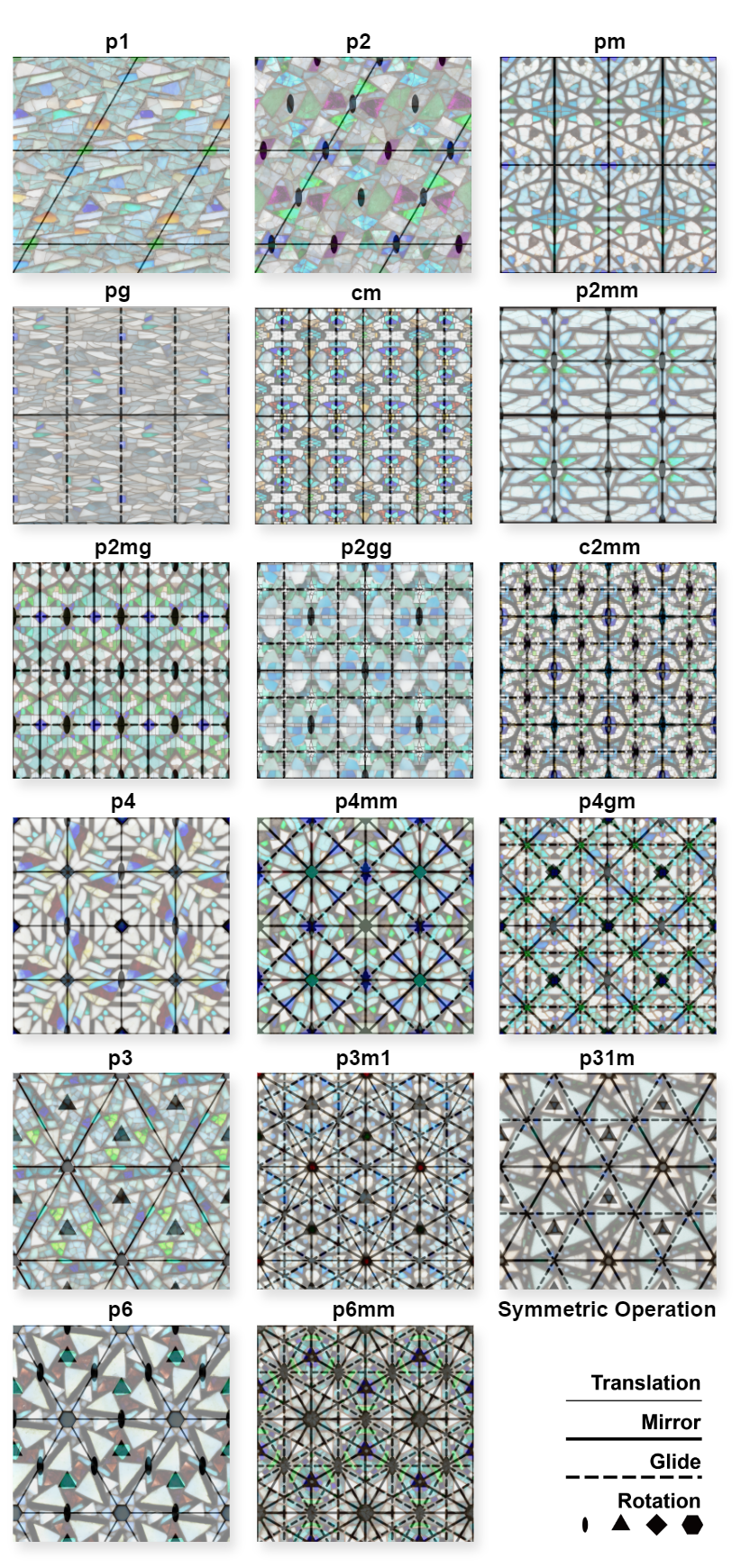}
    \caption{Generated images for the 17 planar groups using the prompt \emph{stained-glass mosaic fragments...} Annotated symmetry transformations demonstrate that our patterns exhibit perceptual preservation of the target symmetries.}
    \label{fig:title_fig}
\end{figure}

\section{Introduction}
\label{sec:introduction}

In visual arts and manufacturing engineering, symmetry plays an important role. It shapes geometric aesthetics and serves as a geometric prior that facilitates manufacturing processes, for example by enabling mold reuse, while also helping satisfy physical requirements. Classic examples range from wallpaper art to reflection-symmetric patterns in paper-cutting crafts formed by folding and unfolding, as well as periodic designs in architectural ornaments and lattice structures. All planar symmetry groups are illustrated in \cref{fig:title_fig}. In these tasks, designers often seek to impose constraints associated with planar symmetry groups while preserving spatial continuity and avoiding abrupt transitions.

However, existing multimodal large language models (MLLMs) fail to produce strictly symmetric images even with rich textual and visual guidance. Previous work such as \citet{bergmannLearningTextureManifolds2017a} studies periodic texture generation, but does not provide an approach with exact symmetry across planar groups. We therefore seek symmetry-embedded representations and begin with the symmetrization of general 2D representations. A naive approach is to define the representation only on the asymmetric unit and extend it to the full plane via group transformations.

Unfortunately, for continuous representation, extending a function from the asymmetric unit by symmetry can introduce boundary discontinuities for non-reflective transformations: points just inside the boundary remain fixed, while points just outside are abruptly mapped to the opposite side. To resolve this, we propose a symmetric continuous representation framework that embeds any planar group into an affine reflection group to preserve boundary continuity, and constructs a continuous $G$-invariant field via a combination of high-symmetry coefficients and low-symmetry bases.

Based on this representation, we develop a unified pipeline for symmetry-constrained controllable generation with diffusion priors. Given a target planar group, we optimize the parameters of the proposed symmetric representation with respect to the loss functions. This separates symmetry constraints from other task-specific objectives: symmetry is handled by the representation, while other task-specific objectives can be imposed through loss functions. As a result, the same framework can be applied across diverse design tasks without collecting symmetric data or training symmetry-specific generative models.

We validate the versatility of our framework through three visual design tasks: 
(i) Pattern Design: Generating symmetric RGB images that align with given text descriptions (\textit{Visual Semantic Constraints}).
(ii) Paper-Cutting Design: Generating globally connected, symmetric binary masks subject to volume constraints and text semantics (\textit{Visual + Connectivity Constraints}).
(iii) Topology Design: Generating connected binary masks optimized for mechanical properties under volume constraints, alongside text-aligned stylized images (\textit{Visual + Connectivity + Mechanical Constraints}). In addition to visual design, we further extend the same idea of symmetry control to metamaterial design based on diffusion prior. Experimental results demonstrate that our framework achieves stable symmetry control under physical constraints.

\textbf{Organization.}
\cref{sec:formulation} details the theoretical construction of the symmetric representation based on affine reflection group embeddings. \cref{sec:computation} presents the computation of the required bases. \cref{sec:gen_obj_and_loss} introduces the unified generative framework and loss functions. \cref{sec:experiment} elaborates on the implementation and results of the four applications.

\section{Preliminaries}
\label{sec:preliminaries}

\textbf{Planar groups.}
In $n$-dimensional space, a Euclidean transformation is defined as the composition of an orthogonal transformation and a translation. The set of all Euclidean transformations constitutes the \textbf{Euclidean group}, denoted as $E(n)$. A \textbf{crystallographic group} $G$ is a discrete subgroup of $E(n)$ that contains $n$ linearly independent translations. Specifically, crystallographic groups are referred to as \textbf{planar groups} when $n=2$, and \textbf{space groups} when $n=3$. We primarily focus on the case where $n=2$.

A classical result states that planar groups are classified into 17 types up to affine coordinate transformations. The simplest planar group is $p1$, generated by translations along two linearly independent directions. For any other planar group $G$, its elements contain nontrivial orthogonal components in addition to translations, and the quotient $G/T(G)$ is finite, where $T(G)$ denotes the translation subgroup of $G$. An important subclass is formed by \textbf{affine reflection groups}, which are generated by affine reflections. There are four such groups: $p2mm$, $p4mm$, $p3m1$, and $p6mm$.

\textbf{Symmetric functions.}
For a planar group $G$, a function $f$ on $\mathbb{R}^2$ is called $G$-invariant if $f(g(\mathbf{x}))=f(\mathbf{x})$ for all $g\in G$. We denote by $C_G(\mathbb{R}^2)$ the space of continuous $G$-invariant functions. Consider a $p1$-invariant function $f$ periodic with respect to two fundamental translations $\mathbf{a}_1$ and $\mathbf{a}_2$. Let $S = \{m\mathbf{b}_1+n\mathbf{b}_2\}$ be the reciprocal lattice, where  $\langle\mathbf{a}_i,\mathbf{b}_j\rangle=2\pi\delta_{ij}$. We choose one representative from each non-zero pair $(\mathbf{k},-\mathbf{k})$. Denoting such a half reciprocal lattice by $S^+$, then $f$ can be written as a Fourier series
\begin{equation}
    f(\mathbf{x}) = \frac{a_{0}}{2}+\sum_{\mathbf{k} \in S^+} [a_{\mathbf{k}}\cos(\langle \mathbf{k}, \mathbf{x}\rangle) + b_{\mathbf{k}}\sin(\langle \mathbf{k}, \mathbf{x}\rangle)].
\end{equation}

\textbf{Asymmetric unit.}
A symmetric function is determined by its values on the asymmetric unit. The asymmetric unit represents the minimal, non-redundant portion of space from which the full periodic structure is obtained through the symmetry operations of the group. For $p1$, any fundamental period constitutes an asymmetric unit. However, for other planar groups, the existence of additional transformations implies that the asymmetric unit is only a portion of the period. Specifically for affine reflection groups, any point in space can be mapped into the asymmetric unit via a finite sequence of reflection transformations.

\begin{figure*}[!t]
    \centering
    \includegraphics[width=1\linewidth]{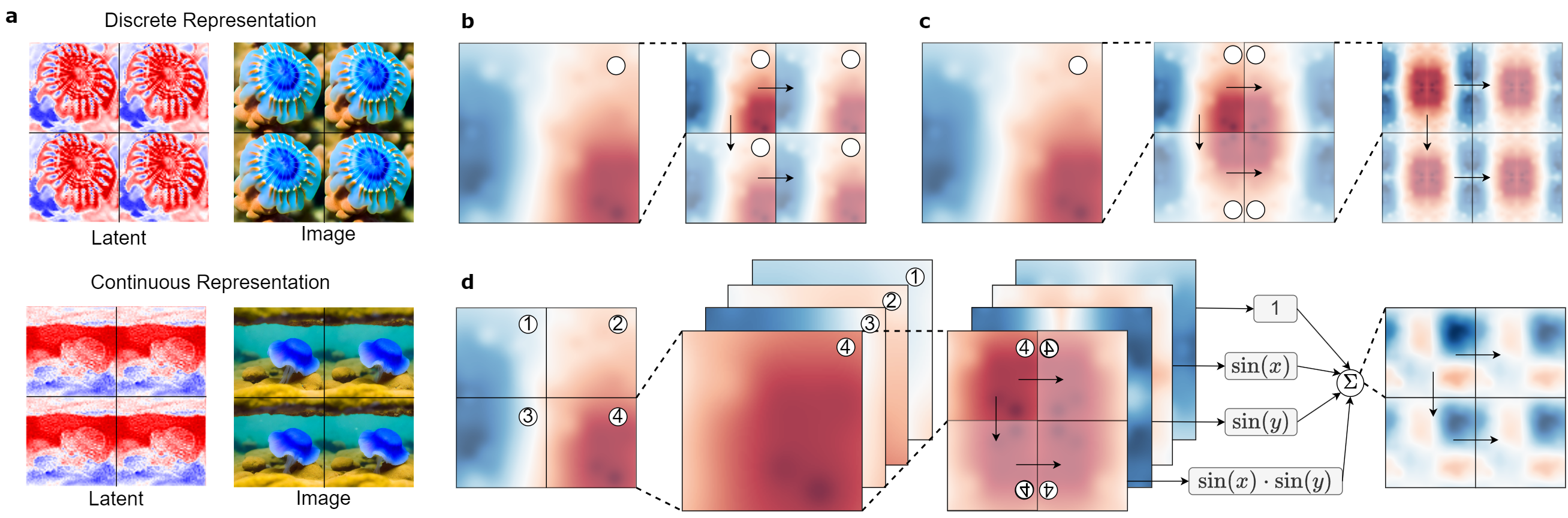}
    \caption{
\textbf{Naive symmetrization vs.\ our symmetrization.}
(a) A discrete representation extended by group transformations can induce abrupt latent transitions and local clustering, whereas continuous fields yield smoother result.
(b) For continuous representations, naive extension may introduce seams near boundaries of asymmetric units for non-reflective groups.
(c) For affine reflection groups, the same strategy is valid because reflections preserve boundary continuity.
(d) Our construction combines high-symmetry coefficients and low-symmetry bases to produce a continuous symmetric function which is seamless across boundaries of asymmetric units.}
    \label{fig:decompostion}
\end{figure*}

\section{Symmetric Continuous Representations}
\label{sec:formulation}

Continuous representations play a critical role in modeling 2D and 3D objects. Continuity prior prevents abrupt variations during optimization, as illustrated in \cref{fig:decompostion}. In this section, we discuss how to symmetrize 2D continuous representations with respect to planar groups.

\subsection{Representations for Affine Reflection Groups}
\label{sec:formulation_reflection_groups}

For planar groups, a naive strategy to adapt general representations to symmetric ones involves directly parameterizing the minimal asymmetric unit and extending it to the rest of the plane via group transformations. 

In the specific case of affine reflection groups, since any point in the plane can be mapped into the asymmetric unit through a finite sequence of reflections and mirror reflections preserve boundary continuity, this approach can directly transform general continuous representations into symmetric ones. However, for other planar groups, this approach disrupts boundary continuity, as illustrated in \cref{fig:decompostion}. To address this, we require a method that bridges continuous modeling across different symmetries.

\begin{example}[Decomposition for $p1$ cases]
\label{ex:decomposition_for_p1}

Consider the $p1$ symmetry and its Fourier bases. Let $c_1,s_1$ correspond to $\mathbf{k}=(1,0)$ and $c_2,s_2$ to $\mathbf{k} = (0,1)$. Using the multiple-angle formulas $\cos(nx)=T_n(\cos x)$ and $\sin(nx)=U_{n-1}(\cos x)\sin x$ ($T_{n}$ and $U_{n}$ denote the Chebyshev polynomials of the first and second kind) and trigonometric identities, truncated Fourier series can be written as a polynomial in $\mathbb{R}[c_1,c_2,s_1,s_2]$, denoted by $P_{\mathrm{torus}}$. With $\eta_1=1$, $\eta_2=s_1$, $\eta_3=s_2$, and $\eta_4=s_1s_2$, every $f\in P_{\mathrm{torus}}$ admits
\begin{equation}
    f(\mathbf{x})=\sum_{i=1}^4 p_i(c_1,c_2)\eta_i(\mathbf{x}),
\end{equation}
where $p_i$ are bivariate polynomials.

Considering the unit basis vectors $\mathbf{e}_{1}, \mathbf{e}_{2}$ as fundamental translations for the $p1$ symmetric function, the coefficient functions $p_{i}(c_{1},c_{2})$ actually exhibit higher $p2mm$ symmetry, while the basis terms $g_{i}$ correspond to the lower $p1$ symmetry. Consequently, the low-symmetry function is successfully decomposed into a combination of high-symmetry coefficients and low-symmetry bases.
\end{example}

This example provides the intuition for constructing symmetric continuous representations. To generalize this method to arbitrary planar groups via affine reflection groups, we address three key questions:
(i) \textbf{Existence of Higher Symmetry:} For any planar group $G$, does there always exist an affine reflection group $W_{a}$ serving as a supergroup?
(ii) \textbf{Existence of Decomposition:} If such a group exists, does the decomposition into high-symmetry coefficients and low-symmetry bases always hold for any $G$-invariant continuous function?
 (iii) \textbf{Computation of Bases:} How do we explicitly construct the low-symmetry bases?

We answer the first two existence questions using rigorous mathematical theory in \cref{sec:hironaka_decomposition_r2}, and present the computational approach for the basis in \cref{sec:computation}.

\subsection{Hironaka Decomposition}
\label{sec:hironaka_decomposition_r2}

We first address the problem of symmetry group inclusion. While the general answer is negative, we can transform any planar group into a subgroup of an affine reflection group up to conjugation by an invertible linear transformation.

\begin{theorem}
\label{thm:existence_of_higher_Wa_n=2}
Any planar group $G$ is conjugate to a subgroup of some affine reflection group $W_{a}$, i.e., there exists $A\in \GL(2,\mathbb{R})$ such that $AGA^{-1}\subseteq W_{a}$.
\end{theorem}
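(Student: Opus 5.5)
The plan is to reduce the statement to the classification of the 17 planar groups and then exhibit the embeddings case by case, grouped by lattice type. Since conjugation by $A\in\GL(2,\mathbb{R})$ acts on affine maps $\mathbf{x}\mapsto R\mathbf{x}+\mathbf{t}$ by $R\mapsto ARA^{-1}$, $\mathbf{t}\mapsto A\mathbf{t}$, we may first conjugate $G$ to a standard form. By Bieberbach's first theorem, the translation subgroup $T(G)$ is a lattice $L$, and the point group $G/T(G)$ is a finite group preserving $L$. So we must also allow translations of the origin. A translation $\tau$ can be absorbed because it is itself an element of $E(2)$, and $W_a$ conjugated by a translation is still an affine reflection group. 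Alternatively, we simply choose the origin in the standard model appropriately. We then fix one of the four target groups $p2mm,p4mm,p3m1,p6mm$ depending on the crystal system of $G$.

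\textbf{Square and hexagonal families.} For $G\in\{p1,p2,p4,p4mm,p4gm\}$ and for the rectangular family $pm,pg,cm,p2mm,p2mg,p2gg,c2mm$, we conjugate so that the point group lies in $D_4$ or $D_2$ with respect to orthonormal axes. Then we verify generator by generator that $G$ sits inside $p4mm$ or $p2mm$ for a suitably scaled lattice. The key observation is that glide reflections and rotations whose translation parts are half-lattice vectors become genuine elements of $p2mm$ (or $p4mm$) once we refine the reflection lattice by a factor of $2$. Concretely, $p2mm$ with mirrors at $x\in\frac12\mathbb{Z}$, $y\in\frac12\mathbb{Z}$ contains the glide $(x,y)\mapsto(-x,y+\tfrac12)$ as the composition of the mirror $x\mapsto -x$ with a lattice translation. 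The centered groups $cm$ and $c2mm$ are handled by taking the primitive rectangular sublattice description and again halving. For the hexagonal family $p3,p3m1,p31m,p6,p6mm$, we conjugate to the equilateral triangular lattice and embed into $p6mm$, rescaled so that the rotation centers of $G$, at lattice points and at the centroids of triangles, are rotation centers of $p6mm$. For $p31m$, the mirrors lie along directions that are mirrors of $p6mm$ on the finer lattice obtained by rotating by $30^\circ$ and scaling by $1/\sqrt3$.

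\textbf{Oblique groups.} For $p1$ and $p2$, the point group is $\{\pm I\}$, which preserves every lattice. So we may take $A$ sending $L$ to $\mathbb{Z}^2$, and the statement follows since $\mathbb{Z}^2$ translations and the point inversions at half-integer points lie in $p2mm$. This is exactly the mechanism behind \cref{ex:decomposition_for_p1}.

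The main obstacle I expect is the bookkeeping for the non-symmorphic and centered cases ($pg,p2mg,p2gg,p4gm,cm,c2mm,p31m$). There, one must check that the translation parts of the glide reflections and of the off-origin rotations land on the correct cosets of the refined reflection lattice. To verify this, I would write explicit generators from the International Tables in the conventional basis, choose $A$ as the conventional-to-orthonormal change of basis composed with a scaling by $\tfrac12$ (or by $1/\sqrt3$ in the hexagonal case), and check that each generator maps to a product of the standard reflections of the target $W_a$.
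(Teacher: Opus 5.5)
Your argument is correct in substance, but it takes a genuinely different route from the paper. You invoke the classification of the $17$ wallpaper groups and exhibit, type by type, an explicit supergroup $p2mm$, $p4mm$ or $p6mm$ on a suitably refined lattice. The paper never enumerates groups. Instead it proceeds in three steps:
\begin{enumerate}
\item It embeds any non-symmorphic $G$ in a symmorphic group. It enlarges the lattice to $\frac1m T(G)$ with $m=|P(G)|$, and shifts the origin to the average $c=\frac1m\sum_g a_g$ of the translation parts of coset representatives. Your remark that a translate of $W_a$ is still an affine reflection group is exactly how that shift is absorbed.
\item It shows that a symmorphic $G$ conjugates linearly into a symmorphic $K$ iff the $\mathbb{Q}$-class of $P(G)$ lies in that of $P(K)$. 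The proof scales a rational conjugator to an integer matrix.
\item It uses that every maximal finite subgroup of $\mathrm{GL}(2,\mathbb{Q})$ is $\mathbb{Q}$-conjugate to $W(B_2)$ or $W(G_2)$ acting on its root lattice.
\end{enumerate}
The paper's route gives a criterion valid in every dimension, which it reuses for space groups and for the counterexamples when $n>3$. Your route avoids $\mathbb{Q}$-classes and maximal finite subgroups altogether and is fully constructive. It refines the lattice only by a factor $2$, where the averaging argument uses $|P(G)|$ (e.g.\ $8$ for $p4gm$). That yields the small indices $r=[W_a:G]$ the appendix table actually uses, so your proof is in effect a verification of that table. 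The cost is reliance on the full planar classification, with no route to higher dimensions short of a much larger enumeration.

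There are two slips to fix when you write it out.
\begin{itemize}
\item Your sample computation is off by a factor of $2$. The $p2mm$ with mirrors at $x,y\in\frac12\mathbb{Z}$ has translation lattice $\mathbb{Z}^2$, so it does not contain $(x,y)\mapsto(-x,y+\frac12)$. You need mirrors at $y\in\frac14\mathbb{Z}$, i.e.\ translations $\mathbb{Z}\times\frac12\mathbb{Z}$, which is the paper's choice $(\mathbf{a},\mathbf{b}/2)$ for $pg$.
\item The hexagonal family needs no rescaling or $30^\circ$ rotation. The groups $p3$, $p6$, $p3m1$, $p31m$ are symmorphic on the hexagonal lattice, with point groups $C_3$, $C_6$ and the two $D_3$ subgroups of $D_6$. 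So all of them already lie in $p6mm$ with the same lattice and origin. Your finer lattice for $p31m$ still works, but it raises the index from $2$ to $6$.
\end{itemize}
Relatedly, in the hexagonal case $A$ must be a similarity. A conventional-to-orthonormal change of basis would make the $60^\circ$ rotation non-orthogonal and take $AGA^{-1}$ out of $E(2)$.
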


In the subsequent discussion, we will focus solely on the symmetries obtained after such a transformation, as the corresponding parameterizations can be mutually converted via invertible transformations.

We can construct $G$-invariant continuous representations by representation of affine reflection group $W_a$. Concretely, we start from an underlying continuous representation and extend it via group transformations to obtain a $W_a$-invariant coefficient function which is continuous by construction.
Given $r$ fixed $G$-invariant functions $\eta_1,\dots,\eta_r$ and $W_a$-invariant functions $\{h_i\}_{i=1}^r$, we obtain a $G$-symmetric continuous representation in the form
\begin{equation}
\label{eq:hironaka_param}
f(\mathbf{x}) \;=\; \sum_{i=1}^{r} h_i(\mathbf{x})\,\eta_i(\mathbf{x}).
\end{equation}
For $G$-invariant continuous functions, the existence of a high-symmetry coefficients and low-symmetry bases decomposition implies the approximation power of the form \cref{eq:hironaka_param}. Due to the algebraic structure of continuous functions, an exact decomposition may not exist in general. Instead, we establish an approximation guarantee.

\begin{theorem}
\label{thm:hironaka_decomposition_func}
Let $r=[W_a:G]$, and let $\Omega$ denote the unit cell. 
Then there exist $r$ fixed $G$-invariant basis functions $\eta_1,\dots,\eta_r$ such that
for any $f\in C_G(\mathbb{R}^2)$ and any $\epsilon>0$, there exist
$h_1,\dots,h_r\in C_{W_a}(\mathbb{R}^2)$ satisfying
\begin{equation}
\int_{\Omega}\bigg| f(\mathbf{x})-\sum_{i=1}^{r} h_i(\mathbf{x})\,\eta_i(\mathbf{x}) \bigg|^2\,\dd\mathbf{x} \;<\; \epsilon.
\end{equation}
\end{theorem}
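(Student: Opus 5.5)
Following the paper's hint (the ``Hironaka decomposition''), we first prove an exact algebraic statement for trigonometric polynomials and then pass to continuous functions by density. By \cref{thm:existence_of_higher_Wa_n=2} we may assume $G\subseteq W_a$ after a linear change of coordinates. Let $T$ be the translation lattice of $W_a$. Then $T(G)$ has finite index in $T$, and $r=[W_a:G]$ is finite. Work with $W_a$-periodic trigonometric polynomials, i.e.\ the ring $R$ of $T'$-periodic trigonometric polynomials for a common sublattice $T'\subseteq T(G)$, with $T'$ normal in $W_a$. The finite group $\bar W=W_a/T'$ acts on $R\cong\mathbb{R}[\text{coordinates of the torus}]$.

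\textbf{Step 1: the invariant rings.} The invariant ring $R^{W_a}$ of an affine reflection group is a polynomial ring. This is the classical result of Bourbaki and Looijenga for affine Weyl groups: $R^{W_a}=\mathbb{R}[\phi_1,\phi_2]$, where $\phi_i$ are orbit sums of fundamental weight exponentials. Example~\ref{ex:decomposition_for_p1} shows this for $p2mm$, with $\phi_i=c_i$. The ring $R^G$ is Cohen--Macaulay because it is the invariant ring of a finite group in characteristic zero (Hochster--Roberts). It is also a finitely generated module over $R^{W_a}$, since it is integral over $R^{W_a}$ via the orbit polynomials $\prod_{g\in\bar W/\bar G}(t-g\cdot u)$. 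A Cohen--Macaulay module over a polynomial ring (graded via a Chebyshev-degree filtration, or locally after localization) is free. This gives the Hironaka decomposition
\[R^G=\bigoplus_{i=1}^{r'} R^{W_a}\,\eta_i .\]
The rank is computed from fraction fields: $[\mathrm{Frac}(R^G):\mathrm{Frac}(R^{W_a})]=[W_a:G]=r$ by Galois theory, since $W_a/T'$ acts faithfully on $R$. Hence $r'=r$, and the $\eta_i$ are fixed $G$-invariant trigonometric polynomials.

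\textbf{Step 2: density.} Let $f\in C_G(\mathbb{R}^2)$. Since $f$ is $T(G)$-periodic, the Fejér means (or Stone--Weierstrass on the torus) give trigonometric polynomials $q$ with $\|f-q\|_\infty$ small. Replacing $q$ by the group average $\frac{1}{|G/T'|}\sum_{g} q\circ g$ over coset representatives keeps $q$ $G$-invariant, keeps it in $R$, and does not increase the sup-distance, because $f$ is already $G$-invariant. By Step 1, $q=\sum_i h_i\eta_i$ with $h_i\in R^{W_a}\subseteq C_{W_a}(\mathbb{R}^2)$. The uniform bound then yields the $L^2(\Omega)$ bound $|\Omega|\,\|f-q\|_\infty^2<\epsilon$.

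\textbf{Main obstacle.} The crux is the freeness in Step 1. Trigonometric invariant rings are not naturally graded, so the standard graded Hironaka argument needs care. I would first try to use the filtration by orbit degree, whose associated graded ring is the ordinary polynomial invariant ring of the point group acting on a weight lattice. I would then lift a homogeneous basis of that graded module. Alternatively, one can argue with the Quillen--Suslin theorem: a finitely generated projective module over a polynomial ring is free. Projectivity follows from Cohen--Macaulayness together with the Auslander--Buchsbaum formula at every maximal ideal. If freeness proved hard, a weaker statement would still suffice. Namely, the $r$ $G$-conjugate ``Galois'' elements span $\mathrm{Frac}(R^G)$, and clearing a nonvanishing denominator on a dense open set gives the $L^2$ approximation, which only requires $L^2$ and not a uniform bound.
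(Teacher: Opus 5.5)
Your proposal is correct and follows essentially the same route as the paper. You approximate $f$ by a trigonometric polynomial, Reynolds-average it into $P_{\mathrm{torus}}^G$, and then apply the exact decomposition $P_{\mathrm{torus}}^G=\bigoplus_{i=1}^{r}P_{\mathrm{torus}}^{W_a}\eta_i$, which comes from polynomiality of the $W_a$-invariants, Cohen--Macaulayness of the $G$-invariants, finiteness via orbit polynomials, and the Galois-theoretic rank count $[W_a:G]$; the paper uses $L^2$ density and $L^2$-contractivity of the average instead of your uniform Fej\'er bound, and both work.

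The freeness step you flag is settled in the paper by citing Lorenz's Thm.~8.4.2 (an affine Cohen--Macaulay domain module-finite over a polynomial subalgebra is free), which in this ungraded setting amounts to your Auslander--Buchsbaum plus Quillen--Suslin argument. One small imprecision: for $W_a$ of type $A_2$ ($p3m1$), the orbit sums of the two fundamental weights are complex conjugates, so the real generators are their real and imaginary parts, as Lemma~\ref{lem:real_weyl_weight_lattice_invariants} handles.
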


Consequently, given $r$ parameterized continuous functions $\varphi_{\theta_i}\in C(\mathbb{R}^2)$, we can obtain an expressive
parameterization of $G$-invariant continuous functions. See \cref{alg:eval_G} for the evaluation procedure.

\begin{algorithm}[!t]
\caption{Parameterized planar $G$-invariant continuous function $f$ evaluation at query point}
\label{alg:eval_G}
\KwIn{Continuous parameterization $\theta \mapsto \varphi_{\theta}$; parameters $\{\theta_i\}_{i=1}^{r}$; query point $\mathbf{p}=(x,y)$}
\KwOut{G-inv func $f(\mathbf{p})$}
\tcp{Map $\mathbf{p}$ to asym unit of $W_a$}
$\hat{\mathbf{p}} \leftarrow \texttt{reflect\_to\_asym\_cell}(\mathbf{p})$\;
$f \leftarrow 0$ \;
\For{$i=1$ \KwTo $r$}{
    \tcp{Get coefs on asym unit}
    $h_i \leftarrow \varphi_{\theta_i}(\hat{\mathbf{p}})$\;
    \tcp{Comb with fixed bases}
    $f \leftarrow f + h_i \cdot \eta_i(\mathbf{p})$\;
}
\Return{$f$}
\end{algorithm}

Regarding the construction of bases, we analyze trigonometric polynomials and then extend the results to continuous functions. Recall that the polynomial ring $P_{\mathrm{torus}}$ is constructed specifically to encode the periodicity of the translation subgroup $T(G)$. For $T(G)$ is a normal subgroup of $G$, the $G$-action is closed on $P_{\mathrm{torus}}$, and since $T(G)$ acts trivially, the invariant subring $P_{\mathrm{torus}}^{G}$ (invariant truncated Fourier series) is determined entirely by the induced action of the finite quotient group $G/T(G)$. Although $P_{\mathrm{torus}}$ is a quotient ring and the action is non-linear, it remains a commutative domain. The fact allows us to apply results analogous to classical invariant theory (see, e.g., \citet{sturmfels2008algorithms}) if we relax the homogeneity requirement.

The Hironaka decomposition guarantees that there exist primary invariants $\theta_1,\theta_2$ and a finite set of secondary invariants $\eta_1,\dots,\eta_r$ such that any $f\in P_{\mathrm{torus}}^{G}$ admits the unique decomposition
$
f=\sum_{i=1}^{r} p_i(\theta_1,\theta_2)\,\eta_i.
$
One can choose $\theta_{1}, \theta_{2}$ to be the primary invariants of $P_{\mathrm{torus}}^{W_{a}}$, which in turn yields the desired
low-symmetry bases $\{\eta_i\}$. \cref{ex:decomposition_for_p1} illustrates this construction in the $p1$ case.

\subsection{General Theory in \texorpdfstring{$\mathbb{R}^n$}{Rn}}
\label{sec:genral_theory_in_rn}

The polynomial Hironaka decomposition extends to arbitrary dimensions
(see \cref{thm:hironaka_decomposition_poly}). By the same argument as
in the $n=2$ case, the approximation result of
\cref{thm:hironaka_decomposition_func} also holds whenever $G$ admits,
up to conjugation, an affine reflection supergroup $W_a$
(see \cref{thm:hironaka_func_with_higher_sym}). Thus, the only
dimension-dependent issue is the existence of such a supergroup,
which is governed by the conjugacy classes of finite subgroups of
$\GL(n,\mathbb{Q})$ (see \cref{thm:dimension_judge}).

However, every crystallographic group admits such a supergroup for $n\leq 3$
(see \cref{thm:affine_reflection_supergroup_n_eq_2_3}), whereas this
fails in general for $n>3$ (see
\cref{thm:no_affine_reflection_supergroup_n_neq_7_8,thm:no_affine_reflection_supergroup_n_eq_7_8}).
This restriction is sufficient for our practical applications, which
concern periodic structures in dimensions $n\leq 3$.

\section{Computation of Basis}
\label{sec:computation}

For a given planar group $G$, according to \cref{thm:hironaka_decomposition_func}, to obtain a continuous representation, we first need to identify the high-symmetry group $W_{a}$. For the case of $n = 2$, this can be determined via table lookup. Next, we need to calculate the secondary invariants of $P_{\mathrm{torus}}^G$ with respect to the low-symmetry bases of $W_{a}$. A construction via Fourier series can be found in \cref{ex:decomposition_for_p1}. We next discuss how to perform the computation using existing results.

Assume that $G$ is a subgroup of an affine reflection group $W_{a}$, and $G$ contains a subgroup $H$ such that $G$ is generated by $H$ and an element $\tau$. Suppose we have obtained the bases for $H$ with respect to $W_{a}$. If the action of $\tau$ maps each secondary invariant $\eta_{i}$ of $H$ to either itself or its negation, then the secondary invariants for $G$ consist of precisely those $\eta_{i}$ that remain invariant under $\tau$.

Calculations for the primary and secondary invariants of symmorphic planar groups (planar groups without glide reflections) can be found in Tab.~7 in \citet{kimRingInvariantReal2001}. We focus on the treatment of non-symmorphic planar groups. Here, the additional generator $\tau$ is a glide reflection, and the subgroup $H$ consists of all orientation-preserving operations in $G$. The computation procedure is detailed in \cref{sec:sec_inv_non-symmorphic} and \cref{sec:adj_conventions}, and the resulting invariants are summarized in \cref{sec:sec_inv_table}.

\section{Generative Objectives and Loss Constraints}
\label{sec:gen_obj_and_loss}

\begin{figure}[!t]
    \centering
    \includegraphics[width=1\linewidth]{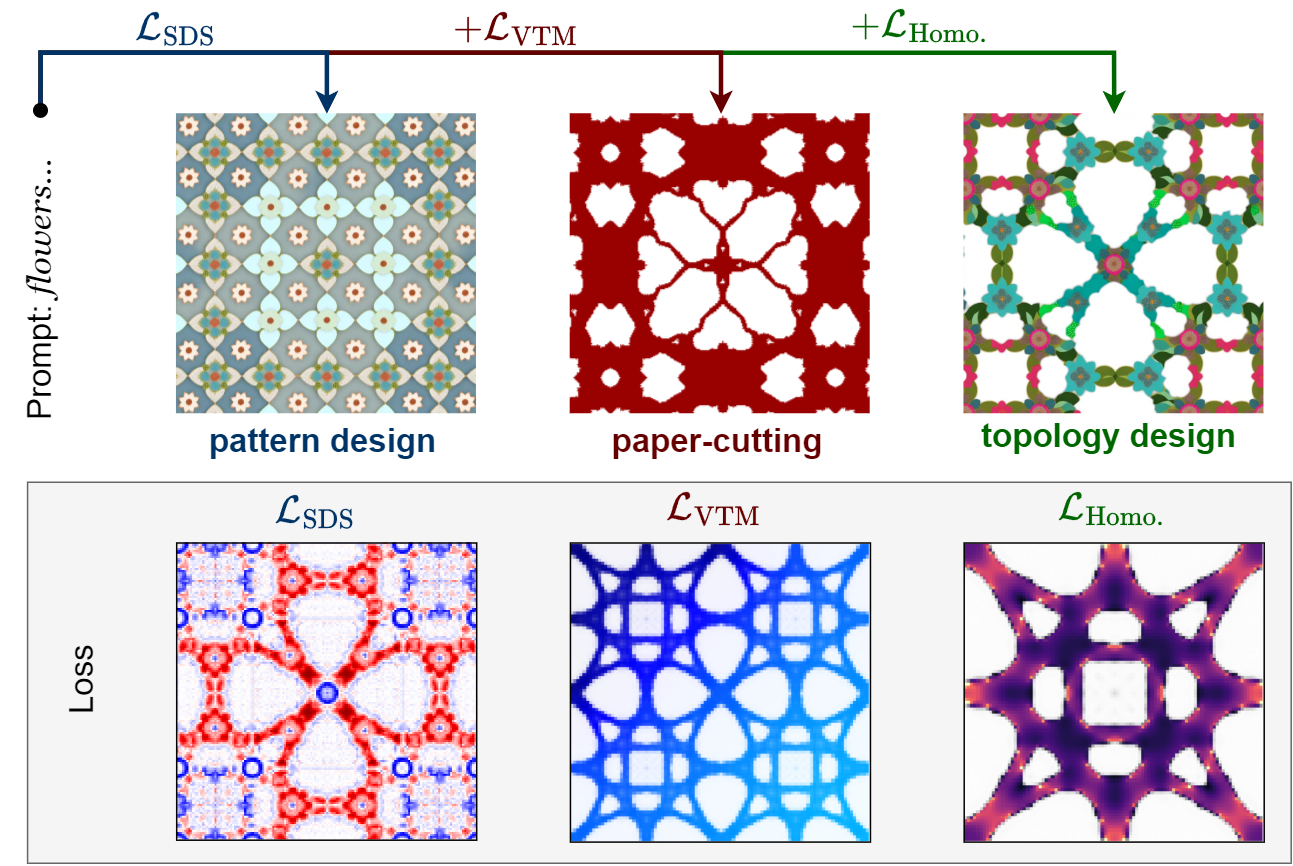}
    \caption{\textbf{Unified framework across three tasks.} Given the same prompt, pattern design optimizes $\mathcal{L}_{\mathrm{SDS}}$. Paper-cutting adds the VTM penalty $\mathcal{L}_{\mathrm{VTM}}$ to encourage connectivity. Topology design further incorporates $\mathcal{L}_{\mathrm{Homo}}$ to maximize the effective bulk modulus evaluated through homogenization.}
    \label{fig:loss}
\end{figure}

In this section, we formulate downstream generation over our symmetric continuous representation, combining diffusion priors and other task-specific constraints. \cref{fig:loss} summarizes our progressively complex downstream tasks and the corresponding losses used to impose different constraints.

\subsection{Pattern Design}
\label{sec:pattern_design}

\textbf{Objective.}
Given a planar group $G$ and a text condition $y$, we consider the general task of symmetric pattern generation: generating 2D patterns $I$ that satisfy specific group symmetry constraints. We construct symmetric continuous representations within the latent space. Leveraging the symmetrization technique proposed in the previous section, we transform a general continuous representation into a continuous $G$-invariant vector field $f_\theta:\mathbb{R}^2\rightarrow \mathbb{R}^{C}$ where $C$ denotes the number of latent channels, and $\theta$ represents the learnable parameters. We obtain the finite-dimensional latent tensor $z_{\theta} \in\mathbb{R}^{H\times W\times C}$ by evaluating the symmetric vector field on a regular grid.

\textbf{Distribution Constraint.}
In the latent space, we employ Score Distillation Sampling (SDS) \citep{poole2023dreamfusion} as the generative objective to align the generated image with the conditional distribution of the diffusion model given $y$. The gradient of the SDS loss is formulated as
\begin{equation}
    \nabla_\theta \mathcal{L}_{\mathrm{SDS}}
    \ \propto\
    \mathbb{E}_{t,\epsilon}\left[
    w(t)\left(\widehat{\epsilon}_\phi(z_t,t,y)-\epsilon\right)\,
    \nabla_\theta z_\theta
    \right],
\end{equation}
where $w(t)$ is a weighting term, $z_t = \alpha_t z_\theta + \sigma_t \epsilon$ represents the result of forward noise injection on $z_{\theta}$ at step $t$, $\alpha_t$ and $\sigma_t$ are the noise schedule parameters, and $\widehat{\epsilon}_\phi(z_t,t,y)$ is computed via classifier-free guidance as a weighted combination of the unconditional and conditional scores.

\textbf{Latent Vector Decoding.}
In the output phase, we utilize a convolutional decoder to decode $z_{\theta}$ into an image. Before decoding, we perturb $z_{\theta}$ with moderate noise and denoise it using the diffusion model to improve generation quality. Since convolutional operations are translation equivariant and the learned diffusion prior encourages coherent symmetric completions, the planar group symmetry of the image constrained by the symmetric initialization is approximately preserved after denoising and decoding, as experimentally evaluated in \cref{sec:pattern_results}. Compared with pixel-space modeling, latent-space parameterization is more computationally and memory efficient, supports higher-resolution generation, and often leads to more stable optimization in practice.

\subsection{Paper-Cutting Design}
\label{sec:paper_cutting_design}

\begin{figure}[!t]
    \centering
    \includegraphics[width=1\linewidth]{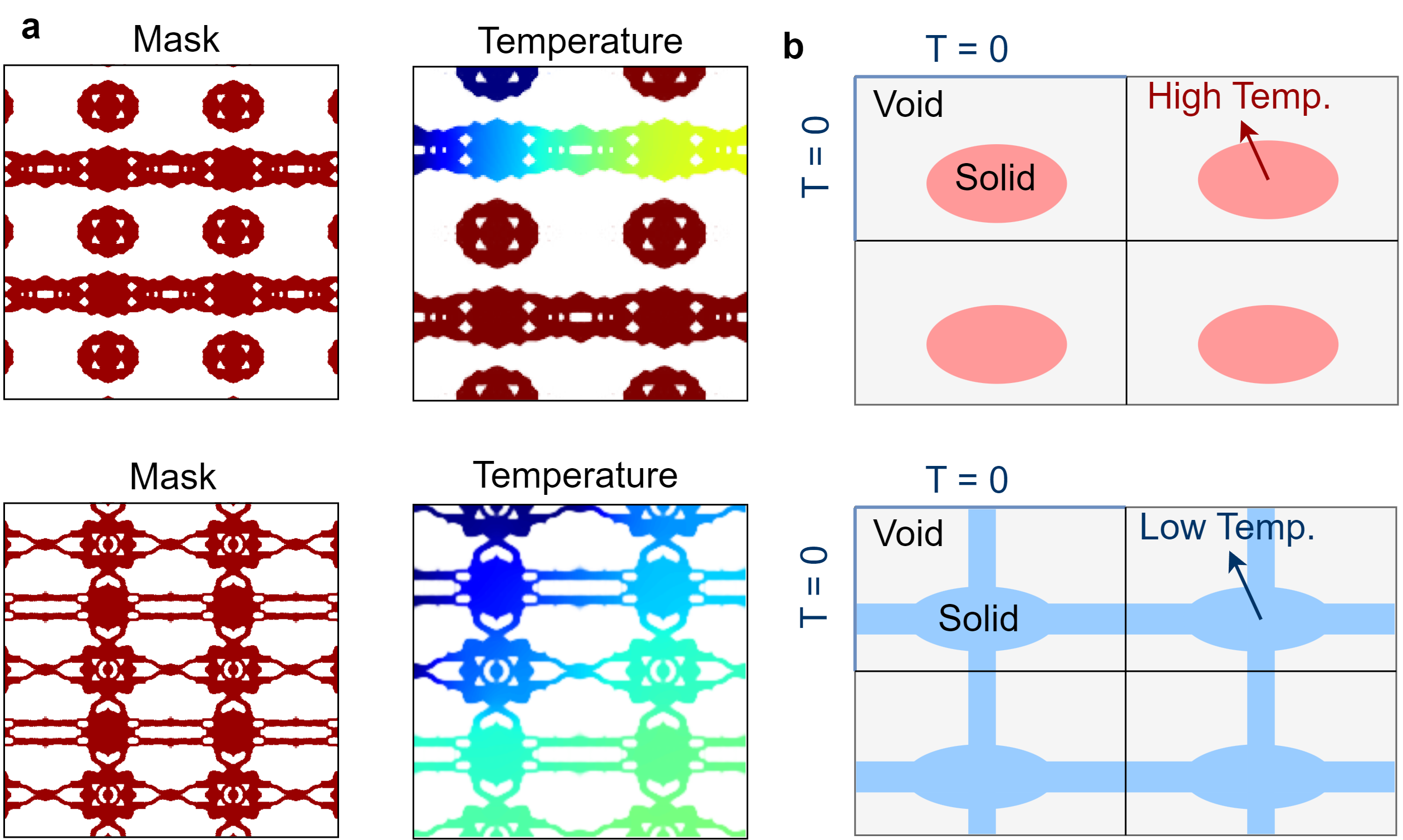}
    \caption{\textbf{VTM for connectivity.} Disconnected solid islands cannot dissipate heat to the sink on $\Gamma$ ($T=0$) and thus become high-temperature regions: (a) generative examples and (b) an abstract schematic. Minimizing the VTM loss penalizes these hot components, promoting global connectivity.}
    \label{fig:vtm}
\end{figure}

\textbf{Objective.} Given a planar group $G$ and a text condition $y$, we extend the framework to generate periodic binary masks $m$ for paper-cutting.
Here, we specifically restrict $G$ to the four affine reflection groups, as they correspond to valid fold-and-cut operations in physical fabrication. 
The goal is to optimize a mask $m$ where $m=1$ (solid) and $m=0$ (void) satisfy both semantic condition $y$ and maintain global connectivity to prevent structural detachment.

\textbf{Mask Parameterization.}
We obtain the structural mask through differentiable segmentation. We define fixed solid and void colors, $c_{\mathrm{solid}}$ and $c_{\mathrm{void}}$, and the channel-averaged distance
$d(f_\theta,c_{\mathrm{void}})=\frac{1}{C}\sum_{k=1}^{C}\lvert(f_\theta)_k-(c_{\mathrm{void}})_k\rvert$.
The soft density is
$\rho_\theta=\sigma((d(f_\theta,c_{\mathrm{void}})-d_0)/\tau)$,
where $\tau$ controls sharpness and $d_0$ the threshold. Thus, values close to $c_{\mathrm{void}}$ yield $\rho_\theta\approx0$, whereas distant values yield $\rho_\theta\approx1$. Unlike pattern design, we bypass the pixel-space decoder and optimize the mask directly in latent space, leveraging the geometric consistency in \cref{sec:pattern_design}.

To guide visual style, we construct the pseudo-binary latent vector
$z_\theta^{\mathrm{bin}}
=c_{\mathrm{solid}}\rho_\theta+c_{\mathrm{void}}(1-\rho_\theta)$.
Applying SDS directly to binary renderings often destabilizes optimization. We therefore evaluate the SDS loss on a progressively weighted mixture of $z_\theta$ and $z_\theta^{\mathrm{bin}}$. We further adapt the pre-trained diffusion model to a paper-cutting dataset using Low-Rank Adaptation (LoRA) \citep{hu2022lora}, aligning the diffusion prior with the domain distribution.

\textbf{Connectivity Constraint.} 
To prevent unmanufacturable isolated islands, we employ the virtual temperature method (VTM) \citep{liStructuralTopologyOptimization2016} to promote global connectivity. We model the material distribution as a heat conduction system within a domain $\Omega$: solid regions act as conductors that generate and transport heat, while void regions function as insulators. By placing a heat sink at a boundary $\Gamma$, any disconnected component unable to dissipate heat will accumulate high temperature. Therefore, minimizing the maximum temperature penalizes solid islands. 

Formally, the virtual temperature field $T$ is governed by the steady-state Poisson equation $-\nabla \cdot (k(\rho_\theta) \nabla T) = s(\rho_\theta)$ subject to Dirichlet boundary conditions $T=0$ on $\Gamma$ and adiabatic conditions elsewhere. The thermal conductivity $k(\rho_\theta)$ and heat source $s(\rho_\theta)$ are coupled to the density via SIMP interpolation, e.g., $k(\rho)=k_{\min}+(k_0-k_{\min})\rho^p$, where $k_0$ denotes the conductivity of the solid material and $k_{\min}$ is a small conductivity used to avoid degeneracy in void regions. The penalization $p>1$ encourages $\rho$ to converge toward binary values. We penalize the maximum temperature to enforce connectivity
\begin{equation}
    \mathcal{L}_{\mathrm{VTM}} = \left( \frac{1}{|\Omega|} \int_{\Omega} T(\rho_{\theta})(\mathbf{x})^p \dd\mathbf{x} \right)^{1/p},
\end{equation}
where the $p$-norm approximates the max operator, and gradients are efficiently computed via the adjoint method.

To ensure global connectivity of a periodic mask, the choice of the domain $\Omega$ and the boundary $\Gamma$ is crucial. Enforcing connectivity only within a single unit cell is insufficient, as it may still permit disconnected strip-like patterns. We therefore set $\Omega$ to be a $2\times2$ supercell and impose an intersection constraint with $\Gamma$ inside one constituent cell. In \cref{thm:2x2_gamma_connectivity}, we prove that this strategy promotes the global connectivity of the mask. Choosing $\Gamma$ as an arbitrary interior point of $B$ would unnecessarily force the mask to pass through a prescribed location. To avoid this over-constraint, we take $\Gamma$ to be two boundary segments of the unit cell. As illustrated in \cref{fig:vtm}, isolated solid islands in a periodic mask exhibit high temperatures, and the VTM loss penalizes these hot regions, thereby promoting global connectivity.

\textbf{Volume Constraint.}
To regulate material usage and control the pattern's sparsity, we introduce a volume constraint. Denoting the target volume fraction as $\rho_0 \in (0,1)$, we define the loss over the unit cell $\Omega$ as 
\begin{equation}
    \mathcal{L}_{\mathrm{vol}} = \left( \frac{1}{|\Omega|} \int_{\Omega}\rho_\theta(\textbf{x}) \,\dd\mathbf{x} - \rho_0 \right)^2.
\end{equation}
This term enables systematic control over the solid-to-void ratio, allowing for stylistic adjustments.

\subsection{Topology Design}
\label{sec:topo_design}

\textbf{Objective.}
Distinct from the previous tasks, we must simultaneously synthesize (i) a binary structural mask $m$ defining the geometric topology, and (ii) the texture content $I$ within the mask, effectively treating $m$ as an alpha transparency channel. The objective is to maximize the effective mechanical properties under symmetry and volume constraints while maintaining aesthetic appeal and stylistic consistency. 

Provided the background color, the image parameterization follows the pattern design, while the mask parameterization adopts the segmentation approach from \cref{sec:paper_cutting_design}. Next, we discuss the mechanical loss on the mask.

\textbf{Mechanical Constraints.} 
For periodic structures, stiffness is typically evaluated via the effective elastic properties. We employ homogenization-based topology optimization, calculating the equivalent elastic tensor
\begin{equation}
    E^H_{ijkl}(\rho ) = \frac{1}{|\Omega|}\int _{\Omega } E_{pqrs} \epsilon _{pq}^{A(ij)}\epsilon _{rs}^{A(kl)} \dd\Omega,
\end{equation}
where $\epsilon _{rs}^{A(kl)}$ represents the actual strain field induced within the unit cell $\Omega$ under the $(kl)$-th unit test strain. With the unit test strains and periodic boundary conditions, the structural equilibrium equations in $\Omega$ are given by
\begin{equation}
    \boldsymbol{\nabla}_{S}^{\top} \boldsymbol{\sigma} = \mathbf{0}, \quad \boldsymbol{\epsilon} = \boldsymbol{\nabla}_{S}\mathbf{u}, \quad \boldsymbol{\sigma} = \mathbf{D}(\rho_{\theta})\boldsymbol{\epsilon}.
\end{equation}

Here, $\boldsymbol{\nabla}_{S}$ is the symmetric gradient operator and
$\mathbf{D}(\rho)=E(\rho)\mathbf{D}_{0}$, where
$E(\rho)=E_{\min}+\rho^p(E_0-E_{\min})$ and $\mathbf{D}_{0}$ is the
plane-stress constitutive matrix for unit Young's modulus. 
In the 2D case, using Voigt notation
$(11\to 1,22\to 2,12\to 3)$, maximizing the effective bulk
modulus corresponds to maximizing
$c=E^H_{11}+E^H_{12}+E^H_{21}+E^H_{22}$. We therefore use the loss
\begin{equation}
    \mathcal{L}_{\mathrm{Homo}} = -c(\rho_{\theta}),
\end{equation}
 and gradients can be computed using the adjoint method.

\textbf{Masked Latent Decoding.}
To ensure texture content appears exclusively in solid regions, we use the mask to exclude background latents prior to decoding. Following denoising and decoding, the mask is interpolated to the image resolution and serves as the alpha transparency channel.

\begin{figure*}[!th]
    \centering
    \includegraphics[width=1\linewidth]{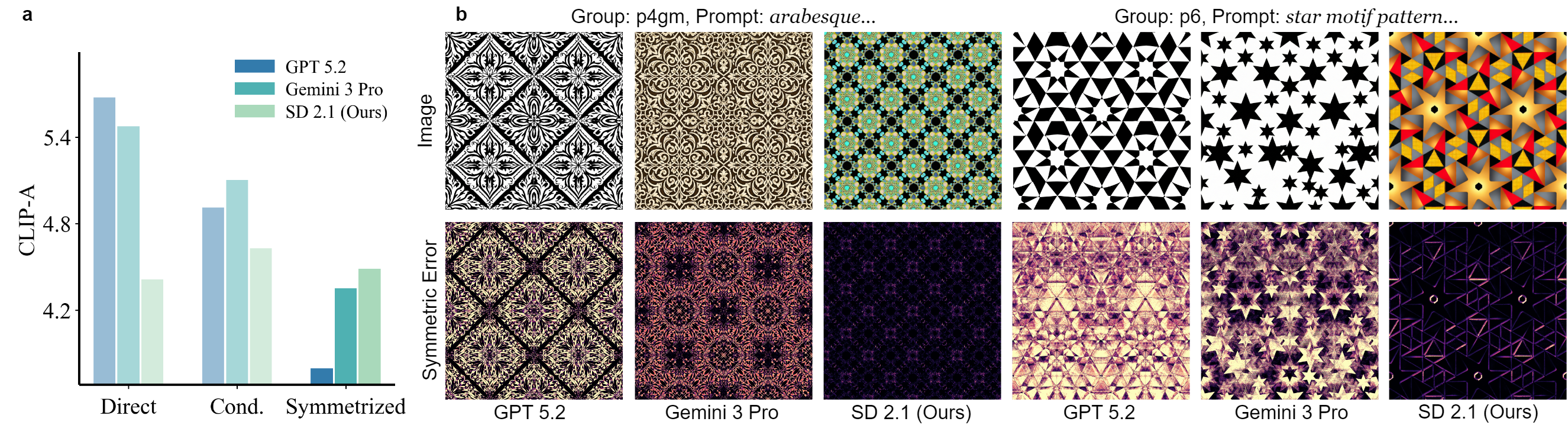}
    \caption{\textbf{Results of symmetric pattern design.}
(a) CLIP-A scores (higher is better) across 17 prompts under direct generation, conditional generation, and post-symmetrization. Our method achieves the highest score in the post-symmetrization setting.
(b) Results for groups $p4gm$ and $p6$. For each group, the top row shows conditional-generation images, while the bottom row shows the pixel-wise MSE between conditional-generation and post-symmetrized images; darker regions indicate smaller errors. This comparison highlights geometric inconsistencies introduced by strict symmetry enforcement in the baseline.}
    \label{fig:exp_pattern}
\end{figure*}

\begin{table}[t]
    \centering
    \caption{Comparison with basis projection (BP) for the $p1$ group under different retained fractions $\alpha$ of Nyquist frequencies and latent resolutions $n$ (see \cref{sec:other_symmetrization})}
    \label{tab:representation_comparison}
    \begin{tabular}{lccc}
        \toprule
        Method & $n=64$ & $n=128$ & $n=256$ \\
        \midrule
        BP ($\alpha=0.25$) & 3.81 & 3.07 & 3.46 \\
        BP ($\alpha=0.50$) & 3.98 & 3.38 & 3.61 \\
        BP ($\alpha=1.00$) & 4.05 & 3.42 & 3.48 \\
        Ours & \textbf{4.30} & \textbf{4.20} & \textbf{3.99} \\
        \bottomrule
    \end{tabular}
\end{table}

\section{Experiment}
\label{sec:experiment}
In this section, we present the experimental settings and results for three visual design tasks (pattern design, paper-cutting design,
and topology design) and one material design task (metamaterial
design). All symmetric parameterizations are implemented via symmetrization of a hash-coded bilinear interpolation scheme \citep{mullerInstantNeuralGraphics2022}. More implementation details are provided in \cref{sec:exp_detail}.

For \cref{sec:topology_results} and \cref{sec:metamaterial_results}, reported bulk-modulus values are unnormalized quantities which do not affect method comparisons within the same lattice setting.

\subsection{Results of Pattern Design}
\label{sec:pattern_results}

For the general symmetric pattern design task, we employ Stable Diffusion 2.1 (SD 2.1) \citep{rombach2022high} as our base model. For our method, we utilize the generative process described in \cref{sec:pattern_design}.

\textbf{Visualization.} 
To verify the stability and diversity of our approach, we generate
patterns for all 17 symmetry groups using one text prompt, as shown
in \cref{fig:title_fig}. Unmarked results are provided in
\cref{fig:sd_1_to_17}. The annotations demonstrate that the generated patterns remain visually consistent with the prescribed symmetry operations.

\textbf{Comparison with Text-conditioned Generation.}
To benchmark our approach against state-of-the-art generative capabilities, we use MLLMs, i.e., GPT-5.2 \citep{openai2025gpt52} and Gemini 3 Pro \citep{googledeepmind2025gemini}, as baselines. We conduct experiments across three distinct settings to assess geometric precision and aesthetic quality: 
(i) Direct Generation: Models generate images directly from the text prompts without additional constraints.
(ii) Conditional Generation: Generate images with symmetry control. For MLLMs, we provide the text prompt alongside an auxiliary visual instruction.
(iii) Post-Symmetrization: To evaluate geometric consistency, we fit images generated in the second setting with our symmetric parameterization in \cref{sec:hironaka_decomposition_r2} using MSE loss. This process enforces strict symmetry on the generated images.

To ensure a comprehensive evaluation, we constructed a test set of 17 text prompts, one corresponding to each symmetry group. The prompts were adapted from the examples in Tab. 9 of \citet{shubnikovSymmetryScienceArt1974} and were labeled and simplified using Gemini. We evaluate the aesthetic quality of generated patterns using the CLIP aesthetic (CLIP-A) metric \citep{schuhmann2022laion}.

The quantitative results in \cref{fig:exp_pattern} show that state-of-the-art MLLMs achieve strong aesthetic scores in direct generation, but their performance degrades once symmetry constraints are imposed. In particular, post-symmetrization enforces exact symmetry but substantially reduces the aesthetic quality of MLLM-generated images, indicating that these models produce visually appealing images without faithfully satisfying symmetry constraints. In contrast, our method, despite using the weaker SD 2.1 backbone, maintains higher aesthetic quality under symmetry enforcement and outperforms the baselines in the post-symmetrization setting. This trend is further supported by the qualitative MSE results in \cref{fig:mse_detail}. For full generated results of our method, see \cref{fig:sd_17_to_17}.

\textbf{Comparison with Other Symmetrization.} 
There are also several alternative approaches to symmetrization.
For example, basis projection maps an asymmetric function onto a selected invariant subspace while preserving continuity.
We compare our method with this projection-based baseline for pattern generation in \cref{tab:representation_comparison} under the same generation pipeline for the $p1$ group, our method achieves higher CLIP-A scores than basis projection at all tested resolutions. A detailed analysis is provided in \cref{sec:other_symmetrization}.

\begin{figure}[!t]
    \centering
    \includegraphics[width=1\linewidth]{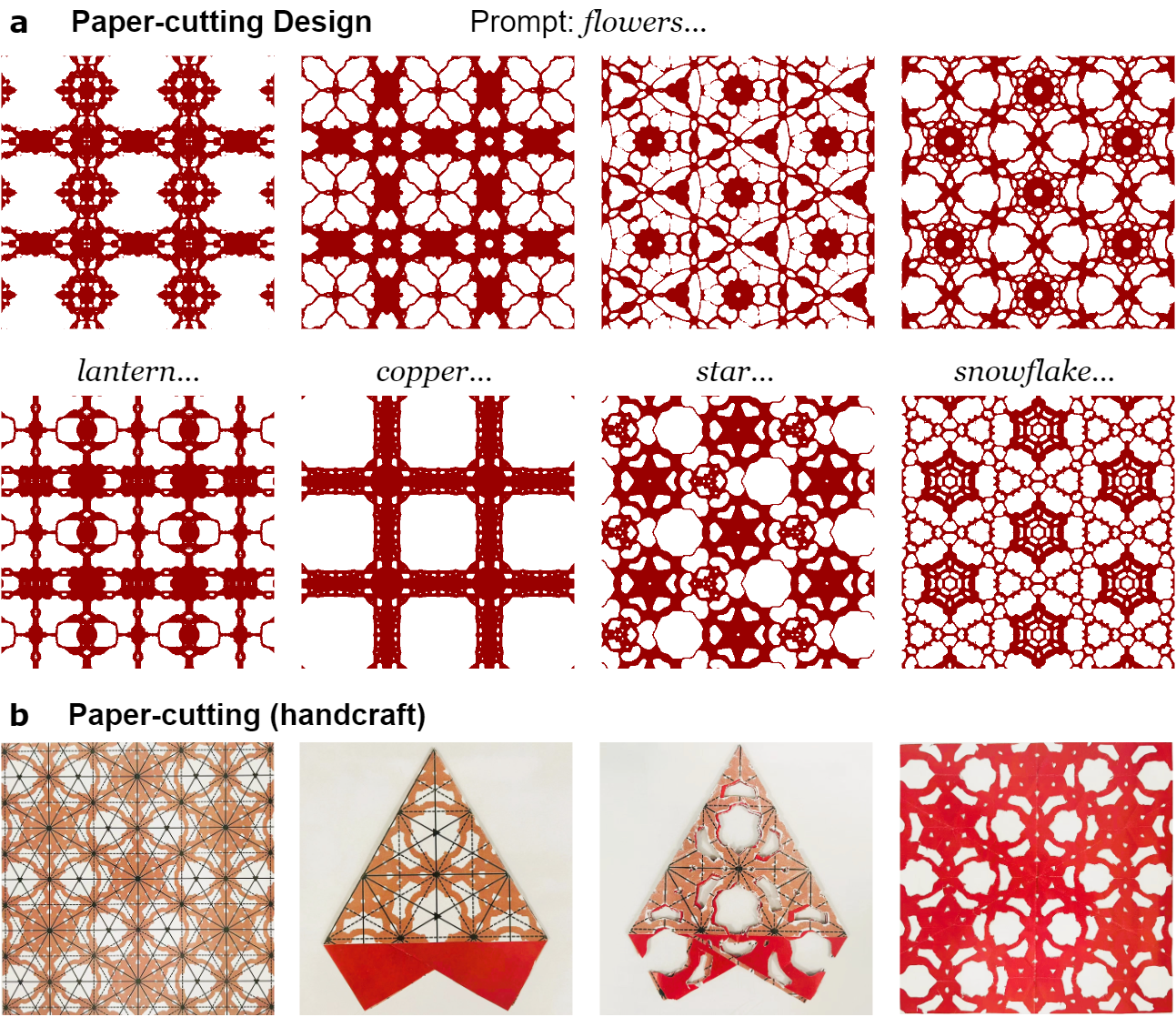}
    \caption{\textbf{Qualitative results of paper-cutting design.} (a) Diverse patterns generated by our method under varying text prompts while satisfying the prescribed symmetry and producing connected structures. (b) Physical realization demonstrates the digital pattern, folding plan, crafting process, and final paper-cutting result to verify structural integrity and manufacturability.}
    \label{fig:exp_papercut}
\end{figure}

\subsection{Results of Paper-Cutting Design}
\label{sec:paper_cutting_results}

For the paper-cutting task, we adopt Stable Diffusion XL base 1.0 (SDXL 1.0) \citep{podellsdxl} as the backbone and fine-tune a LoRA module on the dataset of \citet{wang2025harmonycut} using \texttt{diffusers} library~\cite{von-platen-etal-2022-diffusers}, following the procedure described in \cref{sec:paper_cutting_design}.

We report qualitative results in \cref{fig:exp_papercut}. Our method generates semantically aligned patterns while preserving connectivity. We further validate manufacturability by fabricating a generated mask. In practice, we dilate thin structures for cuttability, fold the sheet along the reflection axis, cut through the folded layers, and then unfold to obtain the final paper-cutting. This end-to-end fabrication verifies the practical feasibility of our designs.

\begin{figure*}[!t]
    \centering
    \includegraphics[width=1\linewidth]{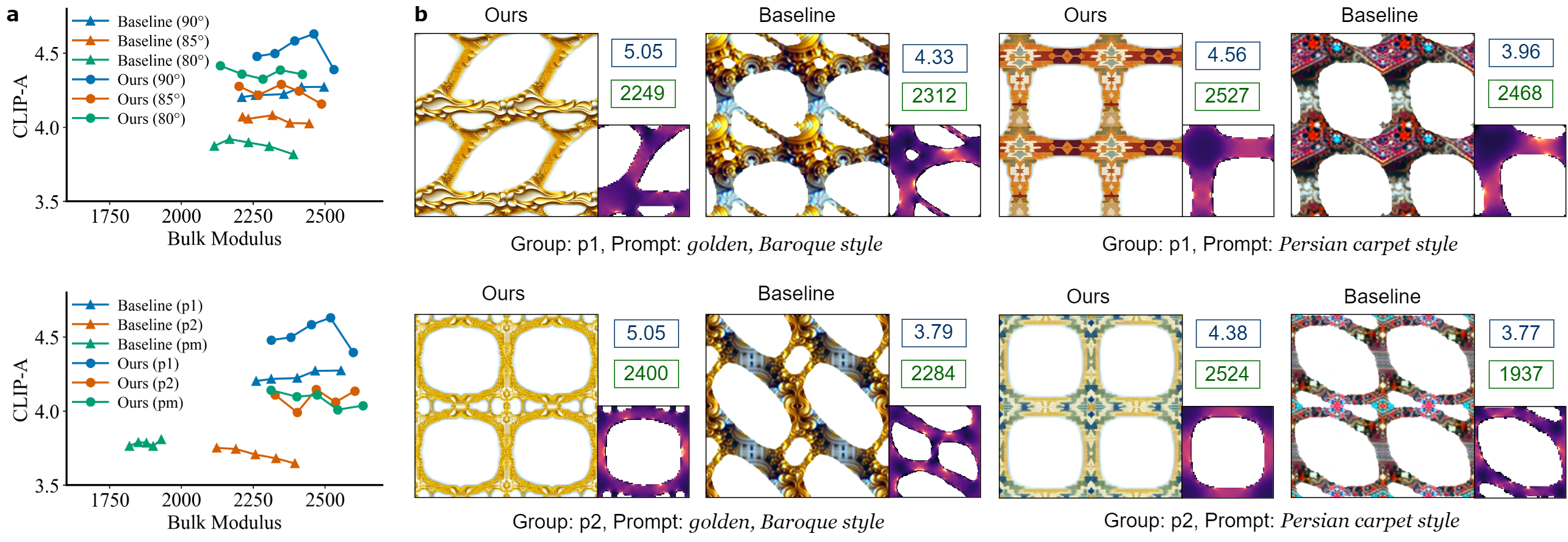}
    \caption{\textbf{Quantitative and qualitative comparison on topology design.} (a) The top row compares performance under different angles $\gamma$ for the $p1$ group, while the bottom row compares different symmetry groups. In the reported runs, our method achieves higher CLIP-A while matching or exceeding the baseline's mechanical performance. (b) The blue and green boxes indicate the value of CLIP-A and Bulk modulus, respectively. Our method generates structures with clearer semantic patterns and better mechanical performance.}
    \label{fig:exp_topo}
\end{figure*}

\begin{figure}[!t]
    \centering
    \includegraphics[width=0.96\linewidth]{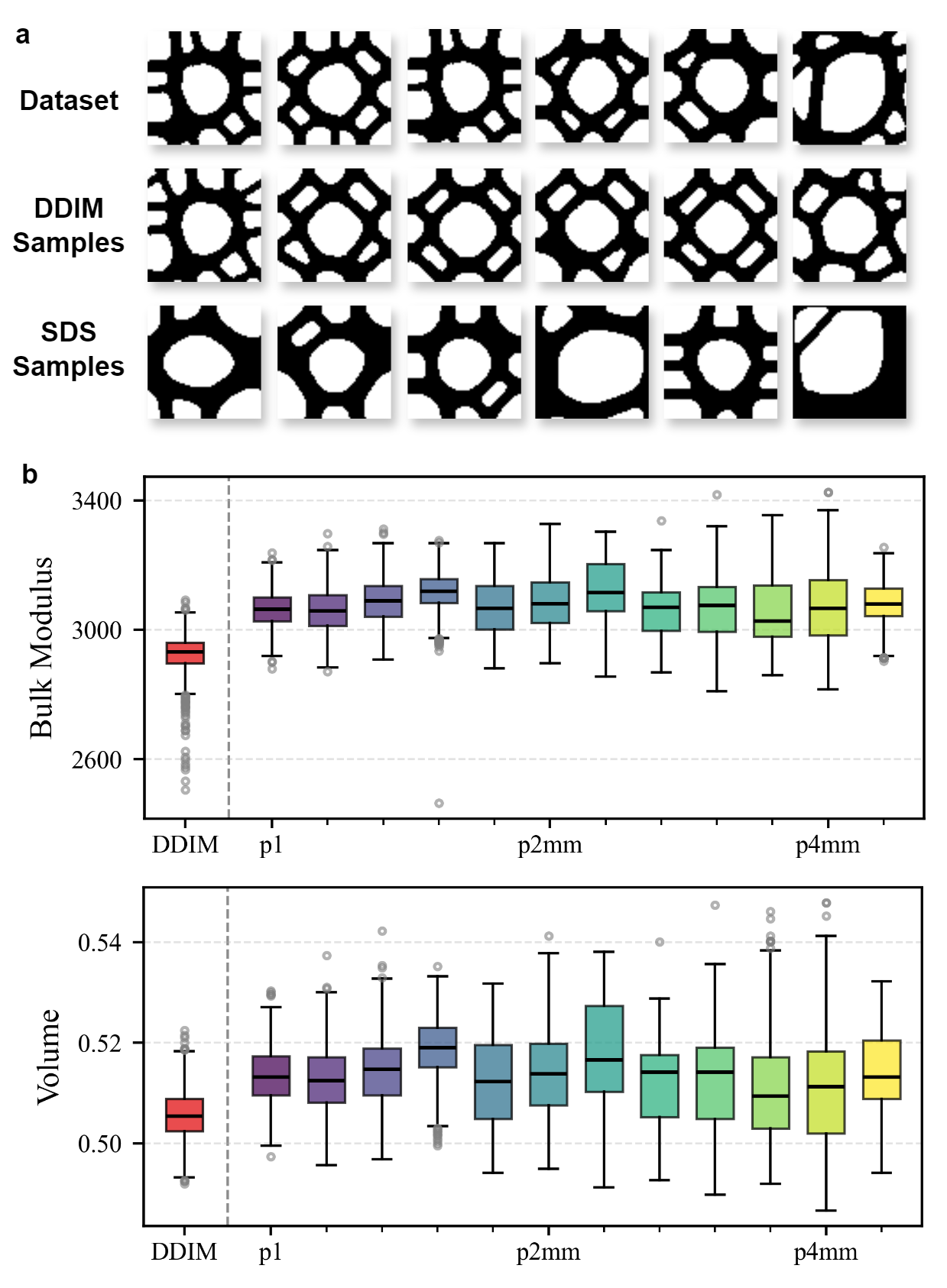}
    \caption[Generated Sample Distribution under Symmetry Constraints]{\textbf{Generated Sample Distribution under Symmetry Constraints.} (a) Samples of unit cell from the training dataset, DDIM generation, and our SDS with $p1$ constraint. (b) Bulk-modulus and volume distributions of 1000 samples from DDIM and SDS under the first 12 planar groups. Our method produces higher-performing structures while remaining close to the target volume fraction $0.5$ without symmetry-specific training data.}
    \label{fig:mat_box}
\end{figure}

\subsection{Results of Topology Design}
\label{sec:topology_results}
For the topology design task, we employ the SDXL 1.0 as our backbone. We benchmark our approach against the CLIP-based topology optimization method proposed by \citet{zhongTopologyOptimizationTextguided2023}. Following their protocol, we utilize their set of 12 test prompts of the stylization gallery. Aesthetic quality is evaluated using the CLIP-A. When testing the baseline, we retain its original mask modeling method and apply our symmetrization procedure to construct the corresponding symmetric continuous representation.

For comparison, we optimize the two representations using the baseline's CLIP loss and our SDS loss, respectively. The target volume fraction is set to 0.45. We conduct experiments across two distinct settings: (1) varying the lattice angle $\gamma \in \{90^\circ, 85^\circ, 80^\circ\}$ under the $p1$ symmetry group, and (2) varying the symmetry constraints across $p1$, $p2$, and $pm$ groups. We sample results at five volume fraction intervals ($0.43, 0.44, 0.45, 0.46, 0.47$) by adjusting the threshold applied to the mask for each setting.

The results are illustrated in \cref{fig:exp_topo}. In the variable angle setting, our method achieves mechanical properties comparable to the baseline while maintaining a substantial lead in aesthetic quality. Notably, in the variable symmetry setting, particularly for $p2$ and $pm$ groups, our method outperforms the baseline in both mechanical performance and aesthetic scores, highlighting its improved stability when optimizing under higher-symmetry constraints. The results demonstrate that our SDS-based optimization successfully synthesizes structures that are not only mechanically robust but also visually consistent with the text prompts. Additional results of our method across different symmetry groups are provided in \cref{fig:sdto_p1,fig:sdto_p2,fig:sdto_pm}.

\subsection{Extension to Metamaterial Design}
\label{sec:metamaterial_results}

We further extend our framework to mechanical metamaterial design. Structures are represented as binary unit-cell patterns, whose symmetries are closely related to their mechanical properties. We consider a zero-shot symmetry-control setting: the base diffusion model is trained only on topology-optimized unit cells with the basic $p1$ symmetry, while additional planar-group symmetries sharing the same lattice are imposed at sampling time. The goal is therefore to generate symmetric unit-cell structures guided by a mechanically informed diffusion prior, without symmetry-specific training data.

The training dataset contains 36,000 binary samples generated by homogenization method using the optimality criteria method \citep{xia2015design}. Each sample is initialized from random perturbations and optimized to maximize the bulk modulus subject to a prescribed volume constraint. The diffusion model is parameterized by a convolutional U-Net and trained for 100 epochs. 

We apply the same parametric symmetric representation as in the previous visual design tasks. The SDS loss guides the optimization in the symmetric representation space toward the high-performance structure distribution captured by the pretrained diffusion model, while the representation itself ensures that the generated binary masks satisfy the prescribed planar-group symmetry.

We evaluate the method on the first 12 planar groups with 1,000 generated samples for each setting. As shown in \cref{fig:mat_box}, the proposed method produces diverse symmetric structures with strong mechanical performance. Quantitatively, the generated samples achieve high bulk modulus with a relative volume-fraction MAE below $1.5\%$. These results demonstrate that our framework is not limited to visual pattern generation, but can also be applied to material design problems where symmetry constraints should be enforced while maintaining structural performance.

\section{Conclusion}

We introduce a general symmetrization method that transforms any continuous 2D representation into an exactly symmetric and spatially continuous one under arbitrary planar groups, avoiding the boundary discontinuities of naive asymmetric-unit extensions. By embedding planar groups into affine reflection groups, our method constructs a differentiable representation from high-symmetry coefficients and low-symmetry bases. This further yields a zero-shot controllable generation framework for symmetry-constrained design: symmetry is enforced directly in the representation space, while other task-specific objectives guide optimization. Experiments on three visual design tasks and one material design task demonstrate the broad applicability of our framework, as well as its ability to maintain symmetry control under diverse geometric and physical constraints.

\section*{Acknowledgment}

We would like to acknowledge that the work is financially supported by the National Natural Science Foundation of China (No. 62276269, 62376276), the Beijing Natural Science Foundation (No. F261002), and the Beijing Nova Program (No. 20230484278).

\section*{Author Contributions}

Ning Lin organized the project and led the theoretical development in \cref{sec:preliminaries}--\cref{sec:computation}, including the corresponding theoretical proofs. Ning Lin and Luxi Chen jointly led the framework design in \cref{sec:gen_obj_and_loss} and the experimental studies in \cref{sec:experiment}. Specifically, Ning Lin mainly contributed to the paper-cutting and topology design tasks, together with their controllable generation experiments. Luxi Chen mainly contributed to pattern design task, LoRA fine-tuning in \cref{sec:paper_cutting_results}, diffusion model training in \cref{sec:metamaterial_results}. Huaguan Chen and Jiacheng Cen contributed to parameter tuning, figure preparation, and visualization. Chongxuan Li, Wenbing Huang, and Hao Sun jointly supervised and guided the project. All authors contributed to writing and revising the manuscript.

\section*{Impact Statement}

This paper presents work whose goal is to advance the field of machine learning. There are many potential societal consequences of our work, none of which we feel must be specifically highlighted here.

\bibliographystyle{utils/icml2026}
\bibliography{Reference}

\clearpage
\appendix
\onecolumn
\tableofappendix

\section{Background}
\label{sec:background}

\subsection{Controllable Generation}
\label{sec:pipline_of_controllable_generation} 

\textbf{Zero-shot Controllable Generation.} Diffusion model \citep{ho2020denoising} and related flow matching models \citep{lipmanflow} have been widely applied across diverse domains, including image synthesis~\citep{rombach2022high, podellsdxl, lu2025dpm}, video generation~\citep{openai2024sora, guan2025taming}, 3D object generation~\citep{poole2023dreamfusion, chen2025microdreamer}, and material design~\citep{jiao2023crystal, wu2026dmflow,li2026unipath}. Building on these advances, controllable generation aims to guide these generative models toward samples that satisfy user-specified conditions or constraints, such as text prompts~\citep{chung2023diffusion, domingo2025adjoint, ye2024tfg}, spatial constraints~\citep{zhang2023adding}, or physical properties~\citep{vatani2025tripoptimizer, wu2026dao, chen2026optimization}. 

Given a fixed constraint cost model, existing controllable generation methods can be broadly categorized into training-based and training-free approaches. Training-based methods, such as \citet{domingo2025adjoint}, rely on additional constraint-related data to fine-tune the base generative model. In contrast, training-free, or zero-shot, methods, such as \citet{chung2023diffusion}, require no extra training data and instead incorporate the cost model as a plug-in guidance module during generation. 

Our work belongs to the training-free paradigm of controllable generation, focusing on the incorporation of exact symmetry constraints into diffusion-based optimization. Symmetric pattern generation is naturally a zero-shot problem, since real-world datasets rarely contain samples that strictly satisfy prescribed symmetry constraints. We decouple exact symmetry from task-specific objectives by enforcing symmetry through the representation, while imposing other requirements via loss functions. Starting from randomly initialized design parameters, we generate designs by optimizing the symmetric representation with the corresponding task-specific losses. 

The overall generation pipeline and its correspondence to the main technical components are illustrated in \cref{fig:pipeline}.

\begin{figure*}[!htbp]
    \centering
    \includegraphics[width=0.8\linewidth]{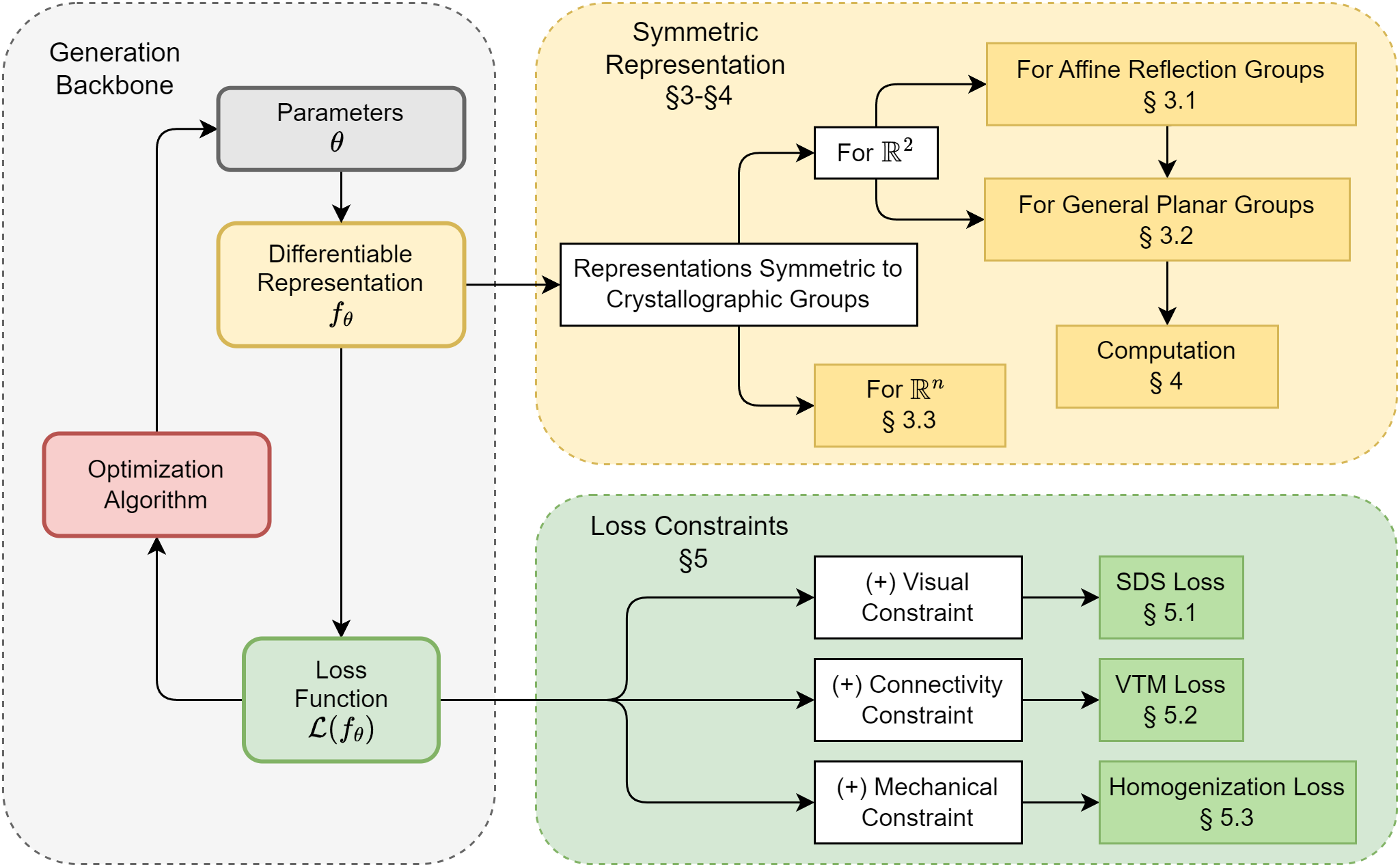}
    \caption{Overview of our training-free controllable generation pipeline.}
    \label{fig:pipeline}
\end{figure*}

\textbf{Stylized Topology Generation.}
In mechanical engineering, topology optimization improves structural performance by optimizing the spatial distribution of materials within a prescribed design domain \citep{bendsoe2013topology}, with representative algorithms including the solid isotropic material with penalization (SIMP) method \citep{andreassen2011efficient}. In periodic material design, topology optimization is often combined with homogenization methods, which evaluate the effective macroscopic properties of the optimized microstructures \citep{bendsoe2013topology}.

Several recent works have explored the trade-off between mechanical performance and visual appearance in material design \citep{martinez2015structure, zhongTopologyOptimizationTextguided2023}. The most related work is \citet{zhongTopologyOptimizationTextguided2023}, which jointly optimizes the mechanical performance of a binary material mask, represented by the alpha channel, and its semantic similarity to a text description, measured through the RGB image channel. This approach relies on pre-trained vision-language models, which provide a shared embedding space for visual and textual representations \citep{radford2021learning, chen2023vlp}. Unlike our approach in \cref{sec:topo_design}, it models the material mask as an additional channel rather than extracting it through segmentation. In addition, connectivity is encouraged by penalizing the area of small connected components.

However, such channel-based mask modeling can be vulnerable to the generative prior. Since the diffusion prior favors visually realistic RGB content, the optimized image may fill the masked material region with background-like textures or semantic content, weakening the correspondence between the visual appearance and the actual material layout. Moreover, the connectivity penalty only suppresses disconnected components rather than actively reconnecting them to the main structure, making it difficult to repair or refine the optimized topology.

Our method instead focuses on periodic structure design and derives the material layout from the generated appearance through segmentation, which better aligns the visual content with the optimized mask structure. Motivated by diffusion-prior-based optimization methods such as DreamFusion \citep{poole2023dreamfusion}, we introduce a diffusion prior to improve visual quality while optimizing the periodic material representation. Furthermore, rather than merely penalizing small isolated components, we incorporate a differentiable VTM loss into the objective, which provides a more direct mechanism for controlling and improving structural connectivity.

\textbf{Material Design and Symmetry Constraints.}
Planar- and space-group symmetries provide a natural language for describing periodic structures. In materials science, crystals are a representative class of discrete periodic structures: their unit cells are microscopic atomic arrangements, often modeled as graphs whose nodes correspond to atoms. Metamaterials, in contrast, provide a continuous counterpart: their unit cells are mesoscopic structures composed of continuum materials and are represented by binary masks, where the values indicate the absence or presence of material. For both crystals and metamaterials, symmetry plays a crucial role in determining material properties \citep{levy2025symmcd, mao2020designing}.

Recent works in crystal and metamaterial generation have therefore studied how to ensure that generated structures exactly satisfy prescribed symmetries. In crystal generation, \citet{jiaospace} considers generation under a given space-group type together with the site symmetry of atoms, which can be regarded as a symmetry template. From the perspective of controllable generation, \citet{jiaospace} puts symmetry constraints into the generation process through this prescribed template. \citet{levy2025symmcd} follows a conditional-generation formulation and further relaxes the dependence on predefined templates by treating the atomic site symmetries as generative variables. Given a space-group type and the number of orbit representatives, it learns the site-symmetry assignments from data and reconstructs the full crystal by replicating the generated asymmetric unit.

In metamaterial design, \citet{mao2020designing} study the generation of two-dimensional metamaterials under prescribed planar-group symmetries. Their method requires training a symmetry-specific generative model for each plane group using symmetric unit-cell masks. In contrast, when applied to symmetric metamaterial generation, our method does not require symmetry training data or a separately trained model for each symmetry group. Instead, our method only needs a generator trained on $p1$ periodic patterns, i.e., patterns without additional symmetry. To generate a pattern with a desired symmetry, we optimize our symmetric representation using the SDS loss. Therefore, changing the target symmetry does not require collecting symmetry-specific data or retraining the generator.

\subsection{Other Symmetrization.}
\label{sec:other_symmetrization}

We compare our method with several natural alternatives for enforcing planar-group symmetry. These alternatives can be divided into two categories: enforcing symmetry directly in the parameter space of a representation, and transforming the output function after evaluation.

\textbf{Constraining the Parameter Space.}
A seemingly direct approach is to impose symmetry constraints on the parameters of a continuous representation. However, this strategy is generally impractical for two reasons. First, plane groups are infinite because they contain translations, which would in principle induce infinitely many equality constraints. Solving such constraints numerically is therefore intractable without additional truncation or approximation. Second, for modern continuous representations such as NeRF~\citep{mildenhall2021nerf} and InstantNGP~\citep{mullerInstantNeuralGraphics2022}, the mapping from parameters to function values is highly nonlinear. As a result, even if the symmetry constraints are linear, there may not be an explicit or numerical tractable solution of parametrization of the corresponding symmetric parameter subspace.

\textbf{Transforming the Output Representation.}
Another class of methods enforces symmetry at the value of function. Besides our method, we consider three alternative approachs. The first is group averaging, which averages the representation output over all symmetry-transformed inputs. Although this produces an invariant function in principle, it is infeasible for infinite plane groups, which only yields approximate symmetry. The second is asymmetric-unit extension, which parametrizes the function only on an asymmetric unit and extends it to the whole domain via group actions. This approach is efficient and direct, but it can introduce boundary discontinuities when the asymmetric unit is not a reflection chamber or when adjacent regions are related by non-reflective transformations. The third is basis projection, which projects a general representation onto an invariant function space spanned by symmetric basis functions. Below we analyze the third method projected to invariant space spanned by Fourier basis functions.

\textbf{Complexity of Basis Projection.}
Let the target image resolution be $n \times n$. A natural choice of basis for plane-group-symmetric functions is the trigonometric basis. By the Nyquist sampling theorem, the number of recoverable frequency components is on the same order as the number of pixels in the asymmetric unit. Let $\alpha$ denote the fraction of frequency components retained, and let $\beta$ denote the number of asymmetric-unit copies in the full unit cell. Then the number of basis functions is approximately $m \approx \alpha n^2 / \beta$. 

A direct implementation requires an
$\mathcal{O}(Nm^2)=\mathcal{O}(n^6)$ one-time QR factorization and
$\mathcal{O}(Nm)=\mathcal{O}(n^4)$ time per projection.
Using the Fourier structure, the same projection can instead be
implemented by FFT-based coefficient truncation and symmetry-orbit
averaging in $\mathcal{O}(n^2\log n)$ time.
For each fixed plane group, our method evaluates a constant number of
coefficient fields and basis functions per pixel, requiring
$\mathcal{O}(n^2)$ time.

In contrast, our method introduces only a constant per-pixel time cost from reflection operations and the evaluation of the fixed basis functions $\eta_i$. The overall evaluation cost remains linear in the number of pixels, namely $\mathcal{O}(n^2)$, making the method substantially more efficient in both time and memory.

\textbf{Effect on Generation Quality.}
Basis projection on invariant space may have an unfavorable effect on generation. For locally supported representations, pixel values in different unit cells are initialized independently. After projection, values related by symmetry are aggregated, which reduces variance and produces an overly smooth initialization. 

This is problematic for SDS-based optimization, whose performance is sensitive to initialization. In particular, approximately Gaussian-like random initializations tend to provide richer high-frequency content and more diverse optimization trajectories, whereas projected initializations are biased toward low-frequency smooth patterns. As a result, projection-based symmetrization often degrades visual quality. 

In our experiments in \cref{sec:pattern_results}, our method consistently achieves better generation quality than the projection-based baseline, as reflected by higher CLIP-A scores across different resolutions and different choice of Fourier basis functions.

\clearpage
\section{Higher Reflectional Symmetry}
\label{sec:higher_sym_rn}

\subsection{Crystallographic Groups and the Integral General Linear Group}
\label{sec:GLnZ}

We begin with some necessary preliminaries by defining crystallographic groups in $\mathbb{R}^n$.

\begin{definition}
\label{def:crystallographic_group}
A \textbf{crystallographic group} $G$ in $\mathbb{R}^n$ is a discrete subgroup of the Euclidean group $E(n)$ that contains $n$ linearly independent translations.
\end{definition}

We also consider an alternative definition attributed to Bieberbach: a crystallographic group $G$ is a discrete subgroup of $E(n)$ such that the quotient space $\mathbb{R}^n / G$ is compact. Bieberbach's First Theorem establishes the equivalence between these two definitions; see (3.2) in \citet{hillerCrystallographyCohomologyGroups1986}. This definition leads to an important property regarding the finiteness of the point group.

\begin{proposition}[\citet{hillerCrystallographyCohomologyGroups1986}, Thm. 3.2.1]
Let $G$ be a crystallographic group. The translation subgroup $T(G) = G \cap \mathbb{R}^n$ is isomorphic to $\mathbb{Z}^n$ and is a normal subgroup of $G$. Furthermore, the point group $P(G) = G / T(G)$ is finite.
\end{proposition}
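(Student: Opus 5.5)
The plan is to follow the standard route through Bieberbach's First Theorem, which we take as given: a discrete cocompact subgroup $G\subseteq E(n)$ contains $n$ linearly independent translations. Write each element as $g=(A,\mathbf{t})$ acting by $\mathbf{x}\mapsto A\mathbf{x}+\mathbf{t}$ with $A\in O(n)$. The linear part map $\lambda:G\to O(n)$, $(A,\mathbf{t})\mapsto A$, is a homomorphism.

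\textbf{The translation subgroup.} First, $T(G)=\ker\lambda=G\cap\mathbb{R}^n$, so it is automatically a normal subgroup of $G$. It is discrete because $G$ is. A discrete subgroup of $\mathbb{R}^n$ is a lattice $\bigoplus_{i=1}^k \mathbb{Z}\mathbf{a}_i$ with $\mathbf{a}_1,\dots,\mathbf{a}_k$ linearly independent. We prove this by induction: choose a shortest nonzero vector, project to its orthogonal complement, and use discreteness to show the projected group is again discrete. Since $T(G)$ contains $n$ linearly independent translations, $k=n$, and hence $T(G)\cong\mathbb{Z}^n$.

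\textbf{Finiteness of the point group.} We have $P(G)=G/T(G)\cong\lambda(G)$, so it suffices to show that $\lambda(G)$ is finite.

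1. Conjugation preserves the lattice: for $g=(A,\mathbf{t})$ and a translation $\mathbf{v}\in T(G)$, one has $g\,\mathbf{v}\,g^{-1}=A\mathbf{v}$. Hence $A\,T(G)=T(G)$.
2. Fix a lattice basis $\mathbf{a}_1,\dots,\mathbf{a}_n$. In this basis each $A\in\lambda(G)$ is represented by an integer matrix, and its inverse is also integral. This gives an injective homomorphism $\lambda(G)\hookrightarrow \GL(n,\mathbb{Z})$.
3. The group $\lambda(G)$ is also contained in $O(n)$, so the entries of these integer matrices are uniformly bounded: each is of the form $\langle A\mathbf{a}_j,\mathbf{b}_i\rangle/2\pi$ with $|A\mathbf{a}_j|=|\mathbf{a}_j|$.
4. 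Only finitely many integer matrices have bounded entries. Equivalently, $\lambda(G)\subseteq O(n)\cap B\,\GL(n,\mathbb{Z})B^{-1}$, which is the intersection of a compact set with a discrete set, hence finite.

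\textbf{Main obstacle.} The only nontrivial step is the lattice structure of discrete subgroups of $\mathbb{R}^n$. In particular, one must check that the inductive projection step preserves discreteness. We would handle this by taking $\mathbf{a}_1$ of minimal length and showing that the minimum of $|\mathbf{v}-m\mathbf{a}_1|$ over the coset is attained. Everything else is formal once the lattice $T(G)$ has full rank.
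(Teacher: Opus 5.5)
Your proof is correct. The paper does not prove this proposition itself; it cites Hiller's Theorem~3.2.1. The only reasoning it records is the identity $(A,\mathbf t_2)(O,\mathbf t_1)(A,\mathbf t_2)^{-1}=(O,A\mathbf t_1)$, which gives normality and $A\,T(G)=T(G)$, plus the remark that $P(G)$ is then represented by integer matrices in a lattice basis. These are exactly your steps~1--2. What you add is the standard completion: discrete subgroups of $\mathbb{R}^n$ are lattices, and $O(n)\cap B\,\mathrm{GL}(n,\mathbb{Z})B^{-1}$ is finite because it is a compact set intersected with a closed discrete set.

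Under the paper's Definition~\ref{def:crystallographic_group}, the $n$ linearly independent translations are part of the hypothesis, so you do not need Bieberbach's First Theorem at all. Hiller's theorem starts from the cocompact definition, and producing those translations is its genuinely hard content. Your route bypasses this legitimately, given the definition the paper adopts.

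One correction to your ``main obstacle.'' The minimum of $|\mathbf v-m\mathbf a_1|$ over $m\in\mathbb{Z}$ is trivially attained, so that is not what needs proving. You need two facts, both from minimality of $|\mathbf a_1|$:
\begin{itemize}
\item $T(G)\cap\mathbb{R}\mathbf a_1=\mathbb{Z}\mathbf a_1$;
\item $|\pi(\mathbf v)|\ge\tfrac{\sqrt3}{2}|\mathbf a_1|$ whenever $\pi(\mathbf v)\neq 0$. To see this, choose $m$ with the $\mathbf a_1$-coefficient of $\mathbf v-m\mathbf a_1$ in $[-\tfrac12,\tfrac12]$; then $|\mathbf a_1|^2\le|\mathbf v-m\mathbf a_1|^2\le\tfrac14|\mathbf a_1|^2+|\pi(\mathbf v)|^2$.
\end{itemize}

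You can also skip the induction entirely. Take $n$ independent translations in $T(G)$. The half-open parallelepiped they span meets the discrete (hence closed) group $T(G)$ in finitely many points, so the subgroup they generate has finite index. Then $T(G)$ is finitely generated, torsion-free and of rank $n$, hence $T(G)\cong\mathbb{Z}^n$.
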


For a crystallographic group, the conjugation action of any group element on a translation element always yields a translation element:
\begin{equation}
    (A, \mathbf{t}_2)(O, \mathbf{t}_{1})(A, \mathbf{t}_{2})^{-1}(\mathbf{x}) = \mathbf{x} + A(\mathbf{t}_{1}) .
\end{equation}
That is, $(A, \mathbf{t}_2)(O, \mathbf{t}_{1})(A, \mathbf{t}_{2})^{-1} = (O, A(\mathbf{t}_{1}))$. Thus, the translation subgroup is a normal subgroup, and the action of the point group $P(G)$ on a translation vector results in another translation vector.

Noting that all translation vectors are generated by a set of fundamental translations, the point group $P(G)$ can be represented by integer matrices under the basis determined by these fundamental translations. Specifically, the point group can be represented by elements of the general linear group over integers, $\GL(n, \mathbb{Z})$. Under this construction, the crystallographic group is represented as a finite subgroup of $\GL(n, \mathbb{Z})$, to be specific, as a group of automorphisms of the lattice defined by the translation subgroup $T(G) \cong \mathbb{Z}^n$.

The choice of different fundamental translations results in matrix representations of the subgroup that differ by conjugation via elements of $\GL(n, \mathbb{Z})$. This allows us to determine equivalence classes of crystallographic groups, known as $\mathbb{Z}$-classes or arithmetic classes. Consequently, the discussion of $\mathbb{Z}$-classes is equivalent to the discussion of conjugacy classes of finite subgroups of $\GL(n, \mathbb{Z})$. 

If the representations differ by conjugation via elements of $\GL(n, \mathbb{Q})$, we refer to such equivalence classes as $\mathbb{Q}$-classes. It can be shown that any conjugacy class of finite subgroups of $\GL(n, \mathbb{Q})$ always contains a finite subgroup of $\GL(n, \mathbb{Z})$. Therefore, the discussion of $\mathbb{Q}$-classes is equivalent to the discussion of conjugacy classes of finite subgroups of $\GL(n, \mathbb{Q})$ (see the discussion in Sec. 1.10.1 of \citet{lorenzMultiplicativeInvariantTheory2005}).

There exists a crystallographic group that is isomorphic to the semidirect product $P \ltimes T$ corresponding to each arithmetic class. Such crystallographic groups are called symmorphic crystallographic groups. Otherwise, $G$ is called a non-symmorphic crystallographic group. Euclidean transformations associated with non-symmorphic crystallographic groups involve non-integer translations. That is, the translational component cannot be expressed as an integer linear combination of the fundamental translations. In the case of $n = 2$, these transformations are exclusively glide reflections, formed by the composition of a reflection and a non-integer translation. For $n = 3$, in addition to glide reflections, such transformations may also be screw rotations, which are the composition of a rotation and a non-integer translation.

We now introduce a very important class of symmorphic crystallographic groups: \textbf{affine reflection groups}. An affine reflection group is a discrete subgroup of $E(n)$ generated by reflections. These groups possess a classification theorem determined by root systems, thus transforming the classification problem of reflection groups into that of root systems. In subsequent discussions, references to root systems imply reduced or crystallographic root systems (see Sec. 2.8 in \citet{humphreysReflectionGroupsCoxeter1992}).

\begin{proposition}[\citet{humphreysReflectionGroupsCoxeter1992}, Sec. 4.10]
For an affine reflection group $G$ in $\mathbb{R}^n$, there exists a root system $\Phi$ in $\mathbb{R}^n$ such that $G = W \ltimes L(\Phi^{\vee})$, where $W$ is the Weyl group of $\Phi$, and $L(\Phi^{\vee})$ is the translation subgroup corresponding to the coroot lattice of $\Phi$. Conversely, if for a crystallographic group $G$ there exists a root system $\Phi$ in $\mathbb{R}^n$ such that $G = W(\Phi) \ltimes L(\Phi^{\vee})$, then $G$ is an affine reflection group.
\end{proposition}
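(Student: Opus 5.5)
The statement just above is the structural proposition (Humphreys, Sec.~4.10): an affine reflection group $G$ in $\mathbb{R}^n$ splits as $W(\Phi)\ltimes L(\Phi^{\vee})$ for a root system $\Phi$, and conversely any crystallographic group of this form is an affine reflection group. The plan is to prove the two directions separately, using only the definitions already set up.

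\textbf{Forward direction.} Let $G$ be a discrete group generated by affine reflections $s_{\alpha,k}$ in hyperplanes $H_{\alpha,k}=\{\mathbf{x}:\langle\alpha,\mathbf{x}\rangle=k\}$. Let $\Phi$ be the set of normals $\alpha$ of reflecting hyperplanes, normalized so that for each direction the hyperplanes in $G$ are exactly $H_{\alpha,k}$ with $k\in\mathbb{Z}$. This normalization is possible because parallel reflecting hyperplanes form a discrete family closed under the translations $s_{\alpha,k}s_{\alpha,0}$, hence an arithmetic progression.

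Next check that $\Phi$ is a root system. Conjugating a reflection by an element of $G$ yields another reflection in $G$, so the linear parts $W$ permute $\Phi$. The crystallographic integrality $\langle\beta,\alpha^\vee\rangle\in\mathbb{Z}$ follows because $s_{\alpha,0}$ maps the family $\{H_{\beta,k}\}$ onto a family of the same spacing. That $\Phi$ spans the space (cocompactness) comes from the crystallographic hypothesis; if needed, reduce to the essential case.

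Then show that the translation subgroup is $L(\Phi^\vee)$. The composite $s_{\alpha,1}s_{\alpha,0}$ is translation by $\alpha^\vee$ (up to the chosen scaling), so $L(\Phi^\vee)\subseteq T(G)$. The semidirect product $W\ltimes L(\Phi^\vee)$ contains every generator $s_{\alpha,k}=t_{k\alpha^\vee}s_{\alpha,0}$, so it equals $G$. This also gives $T(G)=L(\Phi^\vee)$, since $W$ fixes the origin and meets $T(G)$ trivially.

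\textbf{Converse.} Suppose $G=W(\Phi)\ltimes L(\Phi^\vee)$. Then $W$ is generated by the linear reflections $s_\alpha$. Each coroot translation factors as $t_{\alpha^\vee}=s_{\alpha,1}s_{\alpha,0}$, and the affine reflections $s_{\alpha,1}$ lie in $G$. Hence $G$ is generated by affine reflections. Discreteness follows from $L(\Phi^\vee)$ being a lattice and $W$ being finite.

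\textbf{Main obstacle.} The delicate step is the normalization in the forward direction: choosing the lengths of the $\alpha$ consistently so that the spacings agree with the coroots and $\Phi$ is reduced. I would handle it by first fixing a chamber (alcove) and taking the walls' normals as simple roots, following Humphreys' alcove argument.
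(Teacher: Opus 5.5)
The paper gives no proof here and only cites Humphreys, so your sketch has to stand on its own. Its forward direction has a genuine gap at the first step. Your per-direction argument shows only that the reflecting hyperplanes with unit normal $u$ are $\langle u,\mathbf{x}\rangle=c_u+kd_u$, $k\in\mathbb{Z}$. Rescaling to $\alpha=u/d_u$ fixes the spacing but not the offset.

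To write \emph{every} class as exactly $\{H_{\alpha,k}\}_{k\in\mathbb{Z}}$ at once, the origin must lie on a reflecting hyperplane of every parallel class, i.e.\ it must be a special point. This is not automatic. In $p6mm$ with the origin at a $3$-fold center, three of the six mirror classes miss the origin, and for a generic origin no $s_{\alpha,0}$ lies in $G$ at all. Everything downstream depends on this choice: $s_{\alpha,0}\in G$, $W(\Phi)\le G$, $s_{\alpha,k}=t_{k\alpha^\vee}s_{\alpha,0}$, and the literal equality $G=W\ltimes L(\Phi^\vee)$.

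So the special point is the real content, not the length normalization you flag. Lengths and reducedness are automatic once you take one vector $u/d_u$ per class. The alcove remedy is also off as stated: an irreducible alcove has $n+1$ walls, and what you need is a special vertex.

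The missing step is short.
\begin{itemize}
\item Let $\mathcal{D}$ be the set of unit normals of reflecting hyperplanes. It is finite, since the point group is finite, and it is stable under $\bar W=\langle s_u:u\in\mathcal{D}\rangle$.
\item $\mathcal{D}$ spans $\mathbb{R}^n$. Each reflection moves points along its normal, so every element of $G$, in particular every one of the $n$ independent translations, displaces points within $\operatorname{span}\mathcal{D}$.
\item Choose a simple system $\Delta\subseteq\mathcal{D}$, so $|\Delta|=n$. Pick one reflecting hyperplane $H_u$ for each $u\in\Delta$, and let $\mathbf{x}_0=\bigcap_{u\in\Delta}H_u$.
\item The stabilizer $G_{\mathbf{x}_0}$ contains the reflections $s_{H_u}$, whose linear parts generate $\bar W$. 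Hence $G_{\mathbf{x}_0}\to\bar W$ is onto.
\item For any $v\in\mathcal{D}$, a lift of $s_v$ to $G_{\mathbf{x}_0}$ fixes $\mathbf{x}_0$. It is therefore the reflection in the hyperplane through $\mathbf{x}_0$ with normal $v$.
\end{itemize}
Placing the origin at $\mathbf{x}_0$ makes the rest of your construction go through.

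Separately, your integrality argument proves only $W$-stability. That $s_{\alpha,0}$ carries $\{H_{\beta,k}\}_k$ onto an equally spaced family says only that $s_\alpha\beta\in\Phi$. This also holds for non-crystallographic systems, such as the root system of the dihedral group of order $10$. Integrality needs the offsets. The translation $t_{\alpha^\vee}=s_{\alpha,1}s_{\alpha,0}\in G$ sends $H_{\beta,k}$ to $H_{\beta,k+\langle\beta,\alpha^\vee\rangle}$. That image must again be a reflecting hyperplane, which forces $\langle\beta,\alpha^\vee\rangle\in\mathbb{Z}$.

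The converse direction is fine as written.
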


For an affine reflection group, its action on the translation subgroup $T(G) = L(\Phi^{\vee})$ is equivalent to the action of the Weyl group on the coroot lattice. This property facilitates the identification of affine reflection groups within the conjugacy classes of finite subgroups of $\GL(n, \mathbb{Z})$ and $\GL(n, \mathbb{Q})$.

\subsection{Subgroups of Crystallographic Group}
\label{sec:subgroups_of_crystallographic_group}

In \cref{thm:existence_of_higher_Wa_n=2}, we need to investigate the conditions under which a crystallographic group is conjugate to a subgroup of an affine reflection group. We first address the problem of when a crystallographic group is a subgroup of an affine reflection group. The following lemma suggests that when considering inclusion problems, it suffices to consider symmorphic crystallographic groups.

\begin{lemma}
A non-symmorphic crystallographic group is a subgroup of a symmorphic crystallographic group.
\end{lemma}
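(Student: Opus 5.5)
The plan is to enlarge the translation lattice of the given group. Let $G$ be a non-symmorphic crystallographic group with translation subgroup $T(G)=L\cong\mathbb{Z}^n$ and point group $P=P(G)$. Write every element of $G$ as $(A,\mathbf{t})$ with $A\in P$. For each $A\in P$ fix one representative $(A,\mathbf{t}_A)\in G$. The vector $\mathbf{t}_A$ is well defined modulo $L$.

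The first key step is to show that the translation parts are rational with bounded denominators. Let $m=|P|$. For any $(A,\mathbf{t})\in G$ we have $(A,\mathbf{t})^{k}\in T(G)$, where $k$ is the order of $A$, and this power equals $(I,(I+A+\dots+A^{k-1})\mathbf{t})$. That alone does not give $\mathbf{t}\in\frac1m L$. So I would instead use the standard cohomological fact: the class of the $1$-cocycle $A\mapsto \mathbf{t}_A \bmod L$ lies in $H^1(P,\mathbb{R}^n/L)$, which is annihilated by $m$.

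Concretely, averaging does the job. Set $\mathbf{s}=\frac1m\sum_{B\in P}\mathbf{t}_B$. The cocycle relation $\mathbf{t}_{AB}\equiv\mathbf{t}_A+A\mathbf{t}_B \pmod L$ then gives
\[
m\,\mathbf{t}_A\equiv m\,\mathbf{s}-A(m\,\mathbf{s}) \pmod L .
\]
After conjugating $G$ by the translation $\mathbf{x}\mapsto\mathbf{x}+\mathbf{s}$, which is harmless because it changes only the origin and preserves inclusions, every translation part satisfies $m\,\mathbf{t}_A\in L$, that is, $\mathbf{t}_A\in\frac1m L$.

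The second step builds the supergroup. Let $L'=\frac1m L$. Since each $A\in P$ preserves $L$, it also preserves $L'$. Define
\[
G'=\{(A,\mathbf{t}) : A\in P,\ \mathbf{t}\in L'\}=P\ltimes L'.
\]
This set is closed under composition and inverses because $P$ preserves $L'$. It is discrete, since $L'$ is a lattice and $P$ is finite, and it contains $n$ independent translations. Hence $G'$ is a crystallographic group, and it is symmorphic: it is a semidirect product and contains the point group $P$ fixing the origin. By the first step, every element $(A,\mathbf{t}_A+\boldsymbol\ell)$ of the conjugated group has $\mathbf{t}_A+\boldsymbol\ell\in L'$, so the conjugated $G$ lies in $G'$. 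Conjugating back by the translation, $G$ itself is contained in the symmorphic group $\tau_{\mathbf{s}}^{-1}G'\tau_{\mathbf{s}}$, whose point group is fixed about the point $-\mathbf{s}$.

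The main obstacle is the first step, namely producing the shifted origin so that all translation parts become $\frac1m$-rational simultaneously. The averaging identity above handles it, and it is the one place where care is needed, specifically in checking the cocycle relation modulo $L$. If one prefers to avoid the shift, a cruder route also works. Each $\mathbf{t}_A$ may be irrational, but only finitely many representatives are involved. One can then take $L'$ to be the $P$-stable group generated by $L$ together with all $B\mathbf{t}_A$, which is finitely generated. Discreteness, however, would then require the rationality argument anyway, so I would present the averaging proof.
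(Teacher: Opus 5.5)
Your proposal is correct and follows essentially the same route as the paper. Both arguments average the translation parts of the coset representatives to get the shift $\mathbf{s}$ (the paper's $c$), use the cocycle identity to show that the shifted translation parts $\mathbf{t}_A + A\mathbf{s} - \mathbf{s}$ lie in $\frac1m L$, and embed the conjugated group into $P\ltimes\frac1m L$.

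One small slip: since $\tau_{\mathbf{s}}^{-1}G\tau_{\mathbf{s}}\subseteq G'$, conjugating back gives $G\subseteq\tau_{\mathbf{s}}G'\tau_{\mathbf{s}}^{-1}$, not $\tau_{\mathbf{s}}^{-1}G'\tau_{\mathbf{s}}$. The point group of that supergroup fixes $\mathbf{s}$, not $-\mathbf{s}$.
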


\begin{proof}

Let $G\leq E(n)$ be a crystallographic group, with translational
subgroup $T(G)$ and point group $P(G)$. Set $m=\lvert P(G)\rvert$ and define the enlarged translational subgroup
\begin{equation}
    \widetilde{T}
    :=
    \frac{1}{m}T(G)
    =
    \left\{
        \left(e,\frac{t}{m}\right)
        \;\middle|\;
        (e,t)\in T(G)
    \right\}.
\end{equation}
Since $T(G)$ is invariant under the action of $P(G)$, $\widetilde{T}$ is also invariant under the action of $P(G)$. 

Next, let
\begin{equation}
    \widehat{P(G)}
    :=
    \left\{
        (g,0)
        \;\middle|\;
        g\in P(G)
    \right\},
\end{equation} 
we prove that $G \subset \tilde{G}$, where
\begin{equation}
    \widetilde{G}
    :=
    \tau_c
    \left(
        \widehat{P(G)}\ltimes\widetilde{T}
    \right)
    \tau_{-c}
\end{equation}

and $\tau_c=(e,c)$ denote translation by some vector $c$. For $\widetilde{G}$ is symmorphic, $\tilde{G}$ is the required symmorphic crystallographic group.

For each $g\in P(G)$, choose an element
\begin{equation}
    \gamma_g=(g,a_g)\in G,
\end{equation}
with $\gamma_e=(e,0)$. For any $g,h\in P(G)$, the elements
$\gamma_h\gamma_g$ and $\gamma_{hg}$ have the same orthogonal part.
Therefore, there exists a translation vector $\ell_{h,g}$ such that
\begin{equation}
    \gamma_h\gamma_g
    =
    (e,\ell_{h,g})\gamma_{hg}.
\end{equation}
Using the multiplication rule for affine transformations, we obtain
\begin{equation}
    a_h+ha_g-a_{hg}
    =
    \ell_{h,g}.
\end{equation}

Define
\begin{equation}
    c
    =
    \frac{1}{m}\sum_{g\in P(G)}a_g.
\end{equation}
For every $h\in P(G)$, the map $g\mapsto hg$ permutes the elements of
$P(G)$. It follows that
\begin{equation}
\begin{aligned}
    hc+a_h-c
    &=
    \frac{1}{m}
    \sum_{g\in P(G)}
    \left(ha_g+a_h-a_{hg}\right) \\
    &=
    \frac{1}{m}
    \sum_{g\in P(G)}
    \ell_{h,g}.
\end{aligned}
\end{equation}
Since each $(e,\ell_{h,g})$ belongs to $T(G)$, we have
\begin{equation}
    \left(e,hc+a_h-c\right)\in\widetilde{T}.
\end{equation}
Conjugating $\gamma_h$ by $\tau_c$ gives
\begin{equation}
    \tau_{-c}\gamma_h\tau_c
    =
    \left(h,hc+a_h-c\right)
    \in
    \widehat{P(G)}\ltimes\widetilde{T}.
\end{equation}
Every element of $G$ can be written as $(e,t)\gamma_h$ for some $(e,t)\in T(G)$ and $h\in P(G)$. Since $T(G) \subset \widetilde{T}$, we conclude that
\begin{equation}
    \tau_{-c}G\tau_c
    \leq
    \widehat{P(G)}\ltimes\widetilde{T}.
\end{equation}

Then $G \subset \tilde{G}$ holds.

\end{proof}

\begin{lemma}
\label{lem:Qclass_embedding_symmorphic}
(Embedding of Symmorphic Group)
A symmorphic crystallographic group $G$ is conjugate, via an invertible linear transformation, to a subgroup of another symmorphic crystallographic group $K$
if and only if the $\mathbb{Q}$-classes $(G)$ and $(K)$ associated with the actions on the translation subgroups satisfy $(G)\le (K)$, i.e., there exists an element of $\GL(n,\mathbb{Q})$ such that the finite subgroup of $\GL(n,\mathbb{Q})$ induced by $G$ is conjugate to a finite subgroup induced by $K$.
\end{lemma}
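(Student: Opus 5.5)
We prove the two directions separately. Throughout, write $G=\widehat{P(G)}\ltimes T(G)$ and $K=\widehat{P(K)}\ltimes T(K)$, with both point groups acting linearly and fixing the origin; this normalization is possible after a translation, since both groups are symmorphic. Fix lattice bases $B_G$ of $T(G)$ and $B_K$ of $T(K)$. In these bases the point groups become finite subgroups $\rho_G(P(G))\subseteq\GL(n,\mathbb{Z})$ and $\rho_K(P(K))\subseteq\GL(n,\mathbb{Z})$, and their $\mathbb{Q}$-classes are the classes $(G)$ and $(K)$ in the statement.

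\emph{Necessity.} Suppose $AGA^{-1}\subseteq K$ for some $A\in\GL(n,\mathbb{R})$, where $A$ may be composed with a translation to make it affine. Conjugation sends translations to translations, so $A\,T(G)\subseteq T(K)$. Both lattices have rank $n$, hence $A\,T(G)$ has finite index in $T(K)$. It follows that in the bases $B_G$ and $B_K$ the linear part of $A$ is an integer matrix $M$ with $\det M\neq 0$, so $M\in\GL(n,\mathbb{Q})$. The linear part of $AgA^{-1}$ is $A\,\mathrm{lin}(g)A^{-1}$, which lies in $P(K)$. In coordinates this gives $M\rho_G(P(G))M^{-1}\subseteq\rho_K(P(K))$, which is exactly $(G)\le(K)$. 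The only thing to check is that the conjugation identity for translations shown earlier still holds when $A$ is affine rather than linear; it does, because $(A,\mathbf{s})(O,\mathbf{t})(A,\mathbf{s})^{-1}=(O,A\mathbf{t})$.

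\emph{Sufficiency.} Suppose $M\in\GL(n,\mathbb{Q})$ satisfies $M\rho_G(P(G))M^{-1}\subseteq\rho_K(P(K))$. Let $A_0$ be the real linear map that equals $M$ in the coordinates $B_G\to B_K$. Then $A_0\,\mathrm{lin}(P(G))A_0^{-1}\subseteq P(K)$, so the point groups match, but $A_0\,T(G)$ is in general only a sublattice of $\frac{1}{d}T(K)$, where $d$ is a common denominator of the entries of $M$. We correct this by setting $A=dA_0$. Scaling commutes with linear maps, so $A$ still conjugates $P(G)$ into $P(K)$, and now $A\,T(G)\subseteq T(K)$ because $dM$ is an integer matrix. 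For an element $(g,\mathbf{t})\in G$ with $\mathbf{t}\in T(G)$ we get $A(g,\mathbf{t})A^{-1}=(AgA^{-1},A\mathbf{t})$, which lies in $\widehat{P(K)}\ltimes T(K)=K$. Hence $AGA^{-1}\subseteq K$.

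The main obstacle is this integrality step in the sufficiency direction. A $\mathbb{Q}$-conjugacy is not an integral one, and the fix is to use the scaling freedom permitted by $\GL(n,\mathbb{R})$; this works precisely because both groups are symmorphic, so no fractional translation parts need to be matched. A secondary point is bookkeeping: $P(K)$ must be identified with a linear subgroup fixing the origin, which is where the earlier normalization of $K$ by a translation is used.
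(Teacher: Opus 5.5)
Your proof is correct and follows the paper's argument essentially step for step. For necessity you read off the conjugating map as an integer matrix in lattice coordinates; for sufficiency you clear denominators ($A=dA_0$, which is the paper's $N=mM$), so that $T(G)$ lands in $T(K)$ while the conjugation of point groups is unchanged. As in the paper, the reduction to standard symmorphic form means the conjugating map is linear only after normalizing the origins, and affine in general; this is the same convention the paper adopts implicitly.
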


\begin{proof}
Assume throughout that $G$ and $K$ are written in standard
symmorphic form, namely,
\begin{equation}
    H=P(H)\ltimes T(H)
    =
    \left\{
        (A,\mathbf{v})
        :
        A\in P(H),\ \mathbf{v}\in T(H)
    \right\},
    \quad H\in\{G,K\}.
\end{equation}
In particular, $(A,\mathbf{0})\in H$ for every $A\in P(H)$.
After choosing $\mathbb{Z}$-bases of the translation lattices, let
$\rho_H(P(H))\le\GL(n,\mathbb{Z})$ denote the corresponding point-group
representation.

We first fix notation and the meaning of $(\cdot)$ and $\le$.
For a symmorphic crystallographic group $H\le E(n)$, symmorphicity means that $H$ splits as a semidirect product
\begin{equation}
    H \cong P(H)\ltimes T(H), 
\end{equation}
and the conjugation action of $H$ on $T(H)$ factors through $P(H)$, giving a faithful action of $P(H)$ on the lattice $T(H)\cong \mathbb{Z}^n$.
After choosing a $\mathbb{Z}$-basis of $T(H)$, this action is represented by a finite subgroup
\begin{equation}
    \rho_H\bigl(P(H)\bigr)\ \le\ \GL(n,\mathbb{Z})\ \subset\ \GL(n,\mathbb{Q}). 
\end{equation}
We define $(H)$ to be the $\GL(n,\mathbb{Q})$-conjugacy class of $\rho_H(P(H))$.
Moreover, we write $(G)\le (K)$ if there exists $M\in \GL(n,\mathbb{Q})$ such that
\begin{equation}
    M\,\rho_G\bigl(P(G)\bigr)\,M^{-1}\ \le\ \rho_K\bigl(P(K)\bigr)
\end{equation}
as subgroups of $\GL(n,\mathbb{Q})$.

\smallskip
\noindent
\textbf{($\Rightarrow$)}
Assume there exists an invertible linear map $S(\mathbf{x})=B(\mathbf{x})$ with $B\in \GL(n,\mathbb{R})$ such that
\begin{equation}
    S G S^{-1}\ \le\ K. 
\end{equation}
For any translation $t_{\mathbf{v}}(\mathbf{x})=\mathbf{x}+\mathbf{v}$ in $T(G)$, $S\,t_{\mathbf{v}}\,S^{-1} = t_{B(\mathbf{v})}$, hence $B\bigl(T(G)\bigr)\subseteq T(K)$.

Choosing $\mathbb{Z}$-bases of $T(G)$ and $T(K)$, let $M$ be the
matrix of $B$ in these lattice coordinates. Since
$B(T(G))\subseteq T(K)$, we have $M\in M_n(\mathbb{Z})$. As $B$ is
invertible, $\det M\neq 0$, and hence $M\in\GL(n,\mathbb{Q})$. For any $A\in P(G)$, the linear part of $S A S^{-1}$ equals $BAB^{-1}$, and since $S G S^{-1}\le K$ we have $BAB^{-1}\in P(K)$. In lattice coordinates this implies
\begin{equation}
    M\,\rho_G\bigl(P(G)\bigr)\,M^{-1}\ \le\ \rho_K\bigl(P(K)\bigr)
\quad\text{with }M\in \GL(n,\mathbb{Q}), 
\end{equation}
hence $(G)\le (K)$.

\textbf{($\Leftarrow$)}
Assume $(G)\le (K)$. Choose linear isomorphisms $A_G,A_K\in \GL(n,\mathbb{R})$ such that
\begin{equation}
    A_G\bigl(T(G)\bigr)=\mathbb{Z}^n,\quad A_K\bigl(T(K)\bigr)=\mathbb{Z}^n. 
\end{equation}
Conjugating $G$ and $K$ by $A_G$ and $A_K$, we may assume $T(G)=T(K)=\mathbb{Z}^n$ and still have
\begin{equation}
    M\,\rho_G\bigl(P(G)\bigr)\,M^{-1}\ \le\ \rho_K\bigl(P(K)\bigr) 
\end{equation}
for some $M\in \GL(n,\mathbb{Q})$.
Pick $m\in\mathbb{N}^*$ such that $N:=mM\in M_n(\mathbb{Z})$.
Since $mI$ commutes with every matrix,
\begin{equation}
    N\,\rho_G\bigl(P(G)\bigr)\,N^{-1}=M\,\rho_G\bigl(P(G)\bigr)\,M^{-1}\ \le\ \rho_K\bigl(P(K)\bigr). 
\end{equation}

By the standard symmorphic form assumed above, every element of $G$
can be written as $(A,\mathbf{v})$ with $A\in P(G)$ and $\mathbf{v}\in\mathbb{Z}^n$, and one checks
\begin{equation}
    N\,(A,\mathbf{v})\,N^{-1} = (NAN^{-1},\,N(\mathbf{v})). 
\end{equation}
Because $N$ has integer entries, $N(\mathbf{v})\in\mathbb{Z}^n$, and by the previous inclusion $NAN^{-1}\in P(K)$.
Therefore $NGN^{-1}\subset K$ in this normalized setting.
Undoing the conjugations yields an invertible linear map
\begin{equation}
    S:=A_K^{-1}NA_G\in \GL(n,\mathbb{R}) 
\end{equation}
such that $SGS^{-1}\le K$.
\end{proof}

In the previous section, we established that for an affine reflection group, the corresponding subgroup of $\GL(n, \mathbb{Z})$ is the representation of the Weyl group acting on the coroot lattice. Consequently, we provide the condition under which \cref{thm:existence_of_higher_Wa_n=2} holds in $\mathbb{R}^n$. The Weyl-group actions on the root and coroot lattices are conjugate over $\mathbb{Q}$, since the Cartan matrix is symmetrizable over
$\mathbb{Q}$. Hence they determine the same $\mathbb{Q}$-class, and we obtain the following result:

\begin{theorem}
\label{thm:dimension_judge}
Every crystallographic group in $\mathbb{R}^n$ is conjugate to a subgroup of some affine reflection group via an invertible linear transformation if and only if every maximal finite subgroup conjugacy class of $\GL(n, \mathbb{Q})$ contains the action of a Weyl group on the root lattice.
\end{theorem}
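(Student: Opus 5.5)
We prove the two directions separately, reducing everything to $\mathbb{Q}$-classes. The preceding lemmas do most of the work. The first lemma (every non-symmorphic group sits inside a symmorphic one) reduces the inclusion question to symmorphic groups, because conjugating the symmorphic supergroup $\widetilde G$ into some $W_a$ conjugates $G$ along with it. \cref{lem:Qclass_embedding_symmorphic} then turns linear-conjugate inclusion of symmorphic groups into inclusion of $\mathbb{Q}$-classes of point-group images in $\GL(n,\mathbb{Q})$.

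For ($\Leftarrow$), let $G$ be a crystallographic group and $\widetilde G$ a symmorphic supergroup. The point group of $\widetilde G$ is still $P(G)$, but it now acts on the lattice $\widetilde T=\frac1m T(G)$. This action is $\mathbb{Q}$-conjugate to the action on $T(G)$ via the scalar $m$, so $\widetilde G$ has the same $\mathbb{Q}$-class as $P(G)$. The finite group $\rho(P(G))\le\GL(n,\mathbb{Q})$ lies in some maximal finite subgroup: finite subgroups of $\GL(n,\mathbb{Q})$ have bounded order by Minkowski's theorem, so a maximal one exists. By hypothesis this maximal subgroup is $\GL(n,\mathbb{Q})$-conjugate to the action of a Weyl group $W(\Phi)$ on the root lattice. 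As noted just before the theorem, that action is $\mathbb{Q}$-conjugate to the action on the coroot lattice, which is the point-group representation of $W_a=W(\Phi)\ltimes L(\Phi^\vee)$. Therefore $(\widetilde G)\le (W_a)$. Since $W_a$ is symmorphic, \cref{lem:Qclass_embedding_symmorphic} gives $S\in\GL(n,\mathbb{R})$ with $S\widetilde G S^{-1}\le W_a$. We then translate back: $\widetilde G$ was defined as a translation conjugate $\tau_c(\cdot)\tau_{-c}$ of a standard-form group. Because $W_a$ is only required to be an affine reflection group, we can absorb the translation into $W_a$ by replacing it with the conjugate $\tau_{Sc}W_a\tau_{-Sc}$, which is again an affine reflection group. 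This gives $SGS^{-1}\le W_a'$.

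For ($\Rightarrow$), let $F\le\GL(n,\mathbb{Q})$ be a maximal finite subgroup. As recalled in \cref{sec:GLnZ}, we may conjugate $F$ into $\GL(n,\mathbb{Z})$, and then the symmorphic group $G=F\ltimes\mathbb{Z}^n$ is crystallographic once we choose an $F$-invariant inner product by averaging. By hypothesis there are $A$ and $W_a$ with $AGA^{-1}\le W_a$. The point-group part of the ($\Rightarrow$) argument in \cref{lem:Qclass_embedding_symmorphic} does not use that $G$ is in standard form, so it shows $(G)\le(W_a)$, meaning $F$ is $\mathbb{Q}$-conjugate to a subgroup of the Weyl group acting on the coroot lattice. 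That image is finite and $F$ is maximal, so the inclusion is an equality up to conjugation. Hence the class of $F$ contains a Weyl group action, first on the coroot lattice and then, by the $\mathbb{Q}$-conjugacy of root and coroot actions, on the root lattice.

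The main obstacle is the translation bookkeeping in ($\Leftarrow$). \cref{lem:Qclass_embedding_symmorphic} assumes standard symmorphic form with origin at a fixed point, while $\widetilde G$ is only a translate of such a group. The theorem asks for conjugation by a purely linear map, so we need to let $W_a$ range over translates of affine reflection groups, or else check that $\widetilde G$ already contains $G$ in standard position. A second, smaller point to verify is that the coroot-lattice action and the root-lattice action define the same $\mathbb{Q}$-class. For this we use that the symmetrizing diagonal matrix $D$ of the Cartan matrix is rational and intertwines the two actions.
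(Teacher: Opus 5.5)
Your proposal is correct and takes the same route the paper indicates in the paragraph before the theorem: reduce to symmorphic groups via the first lemma, turn linear-conjugate inclusion into $\mathbb{Q}$-class inclusion via \cref{lem:Qclass_embedding_symmorphic}, identify the point-group representation of $W_a$ with the Weyl group acting on the coroot lattice, and pass to the root lattice using a rational symmetrizer of the Cartan matrix. Your added details are sound and fill in steps the paper leaves implicit: the maximal finite overgroup from Minkowski's bound, applying the lemma to the untranslated group $\widehat{P(G)}\ltimes\widetilde T$ and absorbing $\tau_{Sc}$ into the affine reflection group $\tau_{Sc}W_a\tau_{-Sc}$, and realizing a maximal finite $F$ as the point group of $F\ltimes\mathbb{Z}^n$ via an invariant inner product.
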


\subsection{\texorpdfstring{Existence of Higher Reflectional Symmetry for $n = 2,3$}{Existence of Higher Affine Reflectional Symmetry for n=2,3}}
\label{sec:maximal_finite_subgroups_n=2_3}

\begin{theorem}
\label{thm:affine_reflection_supergroup_n_eq_2_3}
For $n=2,3$, \cref{thm:dimension_judge} holds. Equivalently, every planar group ($n=2$) and every space group ($n=3$) is conjugate to a subgroup of some affine reflection group via an invertible linear transformation.
\end{theorem}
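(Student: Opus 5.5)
By \cref{thm:dimension_judge}, it suffices to check a purely arithmetic statement for $n=2$ and $n=3$: every maximal finite subgroup of $\GL(n,\mathbb{Q})$, up to conjugacy, is $\mathbb{Q}$-conjugate to the action of some Weyl group $W(\Phi)$ on its root lattice. The plan is to list the maximal finite subgroups of $\GL(n,\mathbb{Q})$ from the classical classification of crystal systems, namely the $\mathbb{Q}$-classes of point groups, and match each one with a rank-$n$ root system.

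\textbf{Case $n=2$.} The finite subgroups of $\GL(2,\mathbb{Q})$ are, up to $\GL(2,\mathbb{Q})$-conjugacy, the ten geometric crystal classes
\[
C_1,\ C_2,\ C_3,\ C_4,\ C_6,\ D_1,\ D_2,\ D_3,\ D_4,\ D_6 .
\]
Every element must have order in $\{1,2,3,4,6\}$ by the crystallographic restriction, since its characteristic polynomial has rational coefficients and degree $2$. I will first record the one subtlety: $D_1$ and $D_3$ each split into two $\mathbb{Z}$-classes ($pm/cm$ and $p3m1/p31m$), but each is a single $\mathbb{Q}$-class. Next I will show that the maximal classes are exactly $D_4$ and $D_6$. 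Every cyclic group and every $D_k$ with $k\mid 4$ sits in $D_4$, and every $D_k$ with $k\mid 6$ sits in $D_6$, which settles $D_1,D_2,D_3$. I will verify this concretely on matrices, for example by embedding $D_2$ as the diagonal sign matrices inside the symmetry group of the square lattice.

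These two classes are realized as Weyl groups of root lattices. $D_4$ is $W(B_2)=W(C_2)$ acting on $\mathbb{Z}^2$, and $D_6$ is $W(G_2)$ acting on the hexagonal lattice. It follows that every planar group is contained in a conjugate of $p4mm$ or $p6mm$. I will combine this with the earlier lemma that non-symmorphic groups embed into symmorphic ones, and with \cref{lem:Qclass_embedding_symmorphic}; both are already built into \cref{thm:dimension_judge}.

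\textbf{Case $n=3$.} Here I will use the classification of the $32$ crystallographic point groups into the $7$ crystal systems. Up to $\GL(3,\mathbb{Q})$-conjugacy, the maximal finite subgroups are the holohedry $m\bar3m = O_h$, of order $48$, and the hexagonal holohedry $6/mmm = D_{6h}$, of order $24$. Every one of the $32$ classes is $\mathbb{Q}$-conjugate into one of these:
\begin{itemize}
\item triclinic, monoclinic, orthorhombic, tetragonal and cubic groups lie in $O_h$;
\item hexagonal groups lie in $D_{6h}$;
\item the trigonal groups lie in $D_{6h}$ via the hexagonal setting; $\bar3m = D_{3d}$ is contained in both $O_h$ and $D_{6h}$.
\end{itemize}
I will check that neither maximal group is conjugate into the other, though this is not strictly needed; I only need every class to be contained in some Weyl class. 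The two maximal groups are then identified as Weyl groups: $O_h = W(B_3) = W(C_3)$ acting on $\mathbb{Z}^3$, and $D_{6h} = W(G_2 \times A_1) = W(G_2)\times\{\pm 1\}$ acting on the hexagonal lattice times $\mathbb{Z}$. Rhombohedral and body- or face-centred lattices cause no trouble, since $\mathbb{Q}$-conjugation erases the distinction between Bravais lattices. For example, the rhombohedral $D_{3d}$ is rationally conjugate to the hexagonal-setting $D_{3d}$.

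\textbf{Main obstacle.} The delicate step is justifying that the list of maximal $\mathbb{Q}$-classes is complete. For this I will rely on the standard tables of $\mathbb{Q}$-classes: $10$ for $n=2$ and $32$ for $n=3$. I will then check containments case by case, using orders of elements and the decomposition of $\mathbb{Q}^3$ into irreducible pieces. In particular, a group containing an element of order $3$ either preserves a rational line together with a complementary hexagonal plane, which puts it in $D_{6h}$, or is cubic, which puts it in $O_h$.
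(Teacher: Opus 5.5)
Your argument is correct, and for $n=2$ it is essentially the paper's: both single out the two maximal classes $D_4=W(B_2)$ and $D_6=W(G_2)$. For $n=3$, however, you take a different route.

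The paper never enumerates the 32 crystal classes. It takes a maximal finite $H\le\mathrm{GL}(3,\mathbb{Q})$ and splits on irreducibility:
\begin{itemize}
\item In the irreducible case, Plesken's classification leaves only $W(B_3)$.
\item In the reducible case, complete reducibility gives $\mathbb{Q}^3=U\oplus V$ with $\dim U=2$. Maximality forces $H=H_U\times H_V$ with both factors maximal, so $H_V=\{\pm1\}$ and, by the planar case, $H_U$ is conjugate to $W(B_2)$ or $W(G_2)$. The inclusion $W(B_2)\times W(A_1)<W(B_3)$ eliminates the first option, leaving $W(G_2\times A_1)$.
\end{itemize}
This gives a short structural proof with no case checking. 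It also sits in the same Plesken framework the paper later uses for the counterexamples when $n>3$.

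Your route instead rests on the completeness of the classical 32-class table and the holohedry containments. It is more elementary, and it produces an explicit embedding for every point-group class, which is what the supergroup table actually uses. It does need one principle that you only illustrate with $D_{3d}$: containments read off inside $O(3)$ must be upgraded to $\mathrm{GL}(3,\mathbb{Q})$-conjugacy. This holds in general. Two finite subgroups of $\mathrm{GL}(n,\mathbb{Q})$ that are $\mathrm{GL}(n,\mathbb{R})$-conjugate are already $\mathrm{GL}(n,\mathbb{Q})$-conjugate, because a rational representation of a finite group is determined by its character. You should state this once in general rather than case by case.

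There are two harmless slips. $D_2$ also splits into two $\mathbb{Z}$-classes ($p2mm$ and $c2mm$). And $C_3$, $C_6$ sit in $D_6$, not in $D_4$.
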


\begin{proof}
We recall the consistency between subgroup conjugacy in $\GL(n,\mathbb Q)$ and $\mathbb Q$-classes arising from $\GL(n,\mathbb Z)$. Let $H\le \GL(n,\mathbb Q)$ be finite. By Sec.~1.10.1 of \citet{lorenzMultiplicativeInvariantTheory2005}, $H$ is $\mathbb Q$-conjugate to a finite subgroup of $\GL(n,\mathbb Z)$. Moreover, $\mathbb Z$-conjugacy implies $\mathbb Q$-conjugacy, so $\mathbb Q$-classes in $\GL(n,\mathbb Z)$ are coarser than $\mathbb Z$-classes and can be obtained by merging $\mathbb Z$-classes, and every maximal finite subgroup of $\GL(n,\mathbb Q)$ is $\mathbb Q$-conjugate to a maximal finite subgroup of $\GL(n,\mathbb Z)$.

For $n=2$, we consider maximal finite subgroups of $\GL(2,\mathbb Z)$. By Table 1.1 in \citet{lorenzMultiplicativeInvariantTheory2005}, there are exactly two maximal finite subgroups up to $\mathbb Z$-conjugacy. For $\mathrm{Aut}(A_2)$, $\mathrm{Aut}(B_2)$, and $\mathrm{Aut}(G_2)$ occur as maximal candidates, and by the identifications
\begin{equation}
    \mathrm{Aut}(B_2)= W(B_2),\quad \mathrm{Aut}(G_2)= W(G_2), 
\end{equation}
it follows that every maximal finite subgroup of $\GL(2,\mathbb{Q})$ is $\GL(2,\mathbb{Q})$-conjugate to either the root-lattice action of $W(B_2)$ or that of $W(G_2)$.

We now consider $n=3$. Let
\begin{equation}
    H\leq\GL(3,\mathbb{Q})
\end{equation}
be maximal finite. Suppose first that the natural rational representation of $H$ is irreducible. By Thm.~II.14 of \citet{pleskenApplicationsRepresentationTheory1991}, the Weyl group $W(B_3)$ is the only irreducible maximal finite subgroup of $\GL(3,\mathbb{Z})$. Therefore every irreducible maximal finite subgroup of $\GL(3,\mathbb{Q})$ is $\mathbb{Q}$-conjugate to the root-lattice action of $W(B_3)$.

Suppose next that the natural rational representation of $H$ is
reducible. Since $H$ is finite and the ground field has
characteristic zero, the representation is completely reducible.
After a rational change of basis, we may therefore write
\begin{equation}
    \mathbb{Q}^{3}=U\oplus V,
    \quad
    \dim U=2,
    \quad
    \dim V=1,
\end{equation}
where both $U$ and $V$ are $H$-invariant. Let $H_U$ and $H_V$
denote the images of $H$ in $\GL(U)$ and $\GL(V)$, respectively.
In a basis adapted to this decomposition,
\begin{equation}
    H\leq H_U\times H_V
    \leq \GL(U)\times\GL(V)
    \leq\GL(3,\mathbb{Q}).
\end{equation}
The group $H_U\times H_V$ is finite. Hence the maximality of $H$
implies
\begin{equation}
    H=H_U\times H_V.
\end{equation}
The same maximality argument shows that $H_U$ and $H_V$ are
maximal finite subgroups of $\GL(U)$ and $\GL(V)$, respectively.
Consequently,
\begin{equation}
    H_V=\{\pm 1\}=W(A_1),
\end{equation}
and, by the two-dimensional case, $H_U$ is
$\GL(2,\mathbb{Q})$-conjugate to either $W(B_2)$ or $W(G_2)$ in
its root-lattice representation.

The first possibility cannot occur. Indeed, under the standard
root-lattice realizations, there is a proper inclusion
\begin{equation}
    W(B_2)\times W(A_1)
    <
    W(B_3),
\end{equation}
where the left-hand side consists of the signed permutation
matrices preserving the decomposition
$\mathbb{Q}^{3}=\mathbb{Q}^{2}\oplus\mathbb{Q}$. Thus,
$W(B_2)\times W(A_1)$ is contained in a strictly larger finite
subgroup of $\GL(3,\mathbb{Q})$, contradicting the maximality of
$H$.

We must therefore have $H_U$ is $\GL(2,\mathbb{Q})$-conjugate to $W(G_2)$. It follows that $H$ is $\GL(3,\mathbb{Q})$-conjugate to
\begin{equation}
    W(G_2)\times W(A_1)
    =
    W(G_2 \times A_1),
\end{equation}
acting on the root lattice
\begin{equation}
    L(G_2 \times A_1)
    =   
    L(G_2)\times L(A_1).
\end{equation}

Thus, every maximal finite subgroup of $\GL(3,\mathbb{Q})$ is
$\mathbb{Q}$-conjugate either to $W(B_3)$ acting on
$Q(B_3)$ or to $W(G_2\times A_1)$ acting on
$Q(G_2\times A_1)$. Together with the $n=2$ case, this
verifies the condition in \cref{thm:dimension_judge} for
$n=2,3$, and the claimed conclusion for planar and space groups
follows.

\end{proof}

\subsection{\texorpdfstring{Non-existence of Higher Reflectional Symmetry for $n > 3, n \neq 7,8$}{Non-existence of Higher Affine Reflectional Symmetry for n>3, n != 7, 8}}
\label{sec:maximal_finite_subgroups_n>3,n!=7,8}

In the following proofs, we use $\mathscr S_k$ to denote the symmetric group on $k$ letters, i.e. the group of all permutations of $\{1,\dots,k\}$.

\begin{theorem}
\label{thm:no_affine_reflection_supergroup_n_neq_7_8}
For $n>3$ and $n\neq 7,8$, there exists a crystallographic group which is not
linearly conjugate to any subgroup of an affine reflection group.
\end{theorem}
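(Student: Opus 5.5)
By \cref{thm:dimension_judge}, it suffices to exhibit, for each $n>3$ with $n\neq 7,8$, one finite subgroup $H\le\GL(n,\mathbb{Q})$ that is not $\GL(n,\mathbb{Q})$-conjugate into the root-lattice action of any Weyl group $W(\Phi)$ of rank $n$. The corresponding symmorphic crystallographic group $H\ltimes\mathbb{Z}^n$ then has no affine reflection supergroup. This follows from \cref{lem:Qclass_embedding_symmorphic}, together with the fact that an affine reflection group containing $n$ independent translations comes from a root system $\Phi$ spanning $\mathbb{R}^n$. I will take $H$ cyclic, $H=\langle g\rangle$. Then $H$ is $\mathbb{Q}$-conjugate into $W(\Phi)$ if and only if some $w\in W(\Phi)$ has the same characteristic polynomial as $g$, since rational conjugacy of a single finite-order element is determined by its characteristic polynomial.

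The key steps are as follows.

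\textbf{Step 1 (list the candidates).} Write $\Phi$ as a disjoint union of irreducible components $A_k$, $B_k$ (equivalently $C_k$), $D_k$, $E_6$, $E_7$, $E_8$, $F_4$, $G_2$ with total rank $n$. The characteristic polynomial of $w=(w_1,\dots,w_s)$ is the product of the characteristic polynomials of its components $w_i$. I will use Carter's description of these polynomials. In type $A_k$, a permutation with cycle type $\lambda\vdash k+1$ gives $\prod_j(x^{\lambda_j}-1)/(x-1)$. In types $B_k$ and $D_k$, signed cycles give factors $x^{\ell}\pm 1$. The exceptional types contribute a finite, explicitly tabulated list.

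\textbf{Step 2 (choose the obstruction).} I will take $g$ to be a block-diagonal matrix built from the companion matrix of a cyclotomic polynomial $\Phi_m$, padded with $\pm1$ entries, where $m$ is chosen so that every realization of $\Phi_m$ inside a Weyl group costs strictly more rank than $\varphi(m)$ leaves room for. The model case is $n=4$ with $m=10$. Here $\Phi_{10}$ has degree $4$, but order-$10$ behaviour needs a $5$-cycle together with a sign. In $A_4$ the $5$-cycle contributes $\Phi_5$, not $\Phi_{10}$. In $B_k$ or $D_k$ one needs a negative $5$-cycle, which requires rank at least $5$. Finally, $F_4$ has no elements of order $10$. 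So $\langle\mathrm{companion}(\Phi_{10})\rangle\le\GL(4,\mathbb{Z})$ works. In general I will try $m=2p$ with $p$ an odd prime, or a product of two distinct odd primes. A factor $\Phi_m$ can then only arise from a signed $p$-cycle (rank cost $p>\varphi(2p)=p-1$), from an $A$-cycle of length $m$ (rank cost $m-1$), or from an exceptional component. I will pick $m$ with $\varphi(m)\le n$ and pad the remaining $n-\varphi(m)$ coordinates with $+1$ eigenvalues, chosen so that any cheaper realization would overshoot the rank $n$. A Bertrand-type choice of the prime should cover all large $n$, and the small values $n=4,5,6,9,10,\dots$ will be checked individually.

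\textbf{Step 3 (main obstacle).} The main difficulty is the exceptional components. The groups $E_6$, $E_7$, $E_8$ (and $F_4$) contain "cheap" elements whose characteristic polynomials are cyclotomic of unusually large order relative to the rank. For instance, $E_8$ has elements with characteristic polynomial $\Phi_{15}$, $\Phi_{20}$, $\Phi_{24}$, or $\Phi_{30}$ of degree $8$. This is presumably why $n=7,8$ are excluded: there $E_7$ and $E_8$ absorb every candidate. For each proposed $m$ I will therefore check against the tables of characteristic polynomials (Carter's classes) of $E_6$, $E_7$, $E_8$, $F_4$, $G_2$ that no exceptional component, alone or combined with classical components, realizes $\Phi_m$ times the required padding within total rank $n$. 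For $n\ge 9$, an $E_8$ or $E_7$ summand leaves only rank $n-8$ or $n-7$ for the rest, so choosing $m$ with $\varphi(m)$ larger than what any exceptional type can produce, but still within $n$, should settle the remaining cases.
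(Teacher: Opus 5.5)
\textbf{Verdict.} Your route is genuinely different from the paper's and can be made to work, but for general $n$ it is only a plan, and the obstruction you state does not hold up.

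\textbf{Comparison with the paper.} The paper takes the maximal finite subgroup $\mathrm{Aut}(A_n)$, imports its maximality from Plesken's Prop.~II.8, and rules out Weyl groups by comparing $2(n+1)!$ with the orders of irreducible Weyl groups. You instead use a cyclic point group together with Lemma~\ref{lem:Qclass_embedding_symmorphic}. You rely on the fact that finite-order elements of $\GL(n,\mathbb{Q})$ are rationally conjugate if and only if they have the same characteristic polynomial. This reduction is sound, and your $n=4$ case with the companion matrix of $\Phi_{10}$ is complete.

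\textbf{Gap 1: the classical-type obstruction.} ``Rank cost $p>\varphi(2p)$'' excludes nothing once $p\le n$, and that is exactly the situation a Bertrand-type prime produces. For example, at $n=6$, $m=10$, a negative $5$-cycle fits inside $B_6$. It fails only because its characteristic polynomial is $(x+1)\Phi_{10}(x)(x-1)$ rather than $\Phi_{10}(x)(x-1)^2$. The correct classical statement is that $\Phi_{2p}$ only arises from factors $x^{\ell}+1$ with $\ell$ odd or $x^{\ell}-1$ with $\ell$ even. Either way it always comes with an eigenvalue $-1$, which your $+1$ padding lacks.

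\textbf{Gap 2: the exceptional components.} You correctly identify these as the crux, but you leave them to unspecified table checks, and the Step~3 recipe fails at $n=9$. No $m$ satisfies $8<\varphi(m)\le 9$. Meanwhile $E_8+A_1$ realizes $\Phi_m(x)(x\pm1)$ for $m\in\{15,20,24,30\}$, and $B_8+A_1$ realizes $\Phi_{16}(x)(x-1)$.

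\textbf{The missing idea.} You never need to vary $m$. Take $g=C(\Phi_{10})\oplus I_{n-4}$ for every $n\ge 4$.
\begin{itemize}
\item Suppose $w\in W(\Psi)$, with $\Psi$ a root system of rank $n$, had characteristic polynomial $\Phi_{10}(x)(x-1)^{n-4}$. Let $U$ be its fixed space, so $\dim U=n-4$.
\item By Steinberg's theorem, the pointwise stabilizer of $U$ is generated by the reflections it contains. So it equals $W(\Psi_U)$ with $\Psi_U=\Psi\cap U^{\perp}$, and $w\in W(\Psi_U)$.
\item Since $w$ fixes $(\mathrm{span}\,\Psi_U)^{\perp}\supseteq U$ pointwise, we get $\mathrm{span}\,\Psi_U=U^{\perp}$. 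Hence $\Psi_U$ has rank $4$.
\item Then $W(\Psi_U)$ contains an element of order $10$ whose characteristic polynomial $\Phi_{10}$ is irreducible of degree $4$. This forces an irreducible rank-$4$ component.
\item No such component exists: $W(A_4)\cong S_5$ has no element of order $10$, and $|W(B_4)|$, $|W(D_4)|$, $|W(F_4)|$ are prime to $5$.
\end{itemize}
This handles $E_7$ and $E_8$ summands with no tables.

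\textbf{Your guess about $n=7,8$ is wrong.} The same cyclic group works in those dimensions too. For $n=7$ it lies inside the paper's $\mathrm{Aut}(A_3\times A_4)$.

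\textbf{What each route buys.} Once repaired, your argument needs only Steinberg's theorem and elementary order facts. It gives one explicit counterexample uniformly for all $n\ge 4$. The paper's argument is short to write only because the hard maximality classification is outsourced to Plesken.
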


\begin{proof}

By Proposition~II.8 of
\citet{pleskenApplicationsRepresentationTheory1991},
$\operatorname{Aut}(A_n)$ is a maximal finite subgroup of
$\GL(n,\mathbb Q)$ for $n\neq7,8$. By \cref{thm:dimension_judge}, we need only verify that its $\mathbb Q$-class is not represented
by a Weyl group. Since
$\operatorname{Aut}(A_n)$ contains $W(A_n)$, its natural
$n$-dimensional representation is irreducible. Therefore, if
it were $\mathbb Q$-conjugate to $W(\Psi)$ for some rank-$n$
root system $\Psi$, then $\Psi$ would have to be irreducible.

Moreover, according to Tab.~1 of Sec.12 in \citet{humphreysintroduction}, we have
\begin{equation}
    \left|\operatorname{Aut}(A_n)\right|
    =
    2(n+1)!,
\end{equation}
and for the classical irreducible root systems,
\begin{equation}
|W(A_n)| = (n+1)!,\quad
    |W(B_n)|=|W(C_n)| = 2^n n!,\quad
    |W(D_n)| = 2^{n-1}n!.
\end{equation}
None of these orders equals $2(n+1)!$ for $n>3$: equality
for $B_n$ or $C_n$ would require
$2^{n-1}=n+1$, while equality for $D_n$ would require
$2^{n-2}=n+1$, and neither equation has a solution for
$n>3$. The remaining exceptional possibilities in the
dimensions under consideration are
\begin{equation}
    |W(F_4)|=1152\neq240
    =
    |\operatorname{Aut}(A_4)|, \quad |W(E_6)|=51840\neq10080
    =
    |\operatorname{Aut}(A_6)|
\end{equation}
Hence the $\mathbb Q$-class of $\operatorname{Aut}(A_n)$
is not represented by any Weyl group.
\end{proof}

\subsection{\texorpdfstring{Non-existence of Higher Reflectional Symmetry for $n = 7,8$}{Non-existence of Higher Affine Reflectional Symmetry for n = 7, 8}}
\label{sec:maximal_finite_subgroups_n=7,8}

For $n\neq 7,8$, according to Prop.~II.8 of \citet{pleskenApplicationsRepresentationTheory1991}, $\mathrm{Aut}(A_n)$ is the maximal finite subgroup of $\GL(n,\mathbb Q)$. Therefore, once the low-dimensional coincidences for $n=2,3$ no longer occur, the statement that every crystallographic group is conjugate to a subgroup of some affine reflection group via an invertible linear transformation fails.

The cases $n=7,8$ are exceptional: according to Prop.~II.8 of \citet{pleskenApplicationsRepresentationTheory1991}, one has
\begin{equation}
    \mathrm{Aut}(A_7)\subset W(E_7),\quad \mathrm{Aut}(A_8)\subset W(E_8), 
\end{equation}
and in these dimensions $W(E_7)$ and $W(E_8)$ are maximal finite subgroups.  Therefore a counterexample for $n=7,8$ requires a more delicate construction.

\begin{lemma}[\citet{pleskenApplicationsRepresentationTheory1991}, Prop. II.6]
\label{lem:plesken_II6_converse}
Let $G_i\le \GL(n_i,\mathbb Q)$ be maximal finite irreducible subgroups for $i=1,\dots,k$ with $k>1$, and let
\begin{equation}
    G:=\mathrm{Diag}(G_1,\dots,G_k)\le \GL(n_1+\cdots+n_k,\mathbb Q). 
\end{equation}
Assume that, for $p=2$, at most one of the $G_i$ has the trivial Brauer character occurring in the restriction of its natural character to the $2$-regular classes. If no two of the $G_i$ are primitively related, then $G$ is maximal finite in $\GL(n_1+\cdots+n_k,\mathbb Q)$.
\end{lemma}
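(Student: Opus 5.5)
The plan is to show directly that any finite group $H\le\GL(n_1+\cdots+n_k,\mathbb Q)$ containing $G=\mathrm{Diag}(G_1,\dots,G_k)$ must preserve the block decomposition $\mathbb Q^{n}=V_1\oplus\cdots\oplus V_k$. Once that holds, $H$ embeds into $\prod_i \GL(V_i)$, and its projection $H_i$ to $\GL(V_i)$ is a finite group containing the maximal finite group $G_i$. Hence $H_i=G_i$, and $H\le\prod_i H_i=G$. The whole argument therefore reduces to proving that $H$ is block diagonal.

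\textbf{Step 1: invariant forms.} The $G_i$ are irreducible, and no two of them are primitively related; in particular no two natural representations $V_i,V_j$ are isomorphic. So every $G$-invariant symmetric bilinear form is an orthogonal sum $q=q_1\perp\cdots\perp q_k$ with $q_i$ a $G_i$-invariant form on $V_i$. Averaging gives an $H$-invariant positive definite form, which is in particular $G$-invariant and hence of this shape. Fix such a $q$, chosen generic among the $H$-invariant forms.

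\textbf{Step 2: invariant lattices.} Averaging a lattice over $H$ gives an $H$-invariant lattice $M$. Then $H\le \mathrm{Aut}(M,q)$, and it suffices to show that $\mathrm{Aut}(M,q)$ respects the decomposition. If $M$ were an orthogonal sum $\bigoplus_i (M\cap V_i)$, I would apply Eichler's theorem: the decomposition of a positive definite lattice into orthogonally indecomposable summands is unique. Each summand $M\cap V_i$ is indecomposable because $G_i$ is irreducible and $q_i$ is generic. So $\mathrm{Aut}(M,q)$ can only permute the blocks. A permutation sending block $i$ to block $j$ would make $G_i$ and $G_j$ primitively related, through the finite group $\mathrm{Aut}(M,q)$ restricted to $V_i\oplus V_j$. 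This is excluded by hypothesis, so the blocks are fixed.

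\textbf{Step 3: gluing (the main obstacle).} In general $M$ contains $M_0=\bigoplus_i(M\cap V_i)$ only with finite index. The quotient $M/M_0$ is a glue group that embeds diagonally into $\prod_i \pi_i(M)/(M\cap V_i)$, where $\pi_i$ is the projection to $V_i$. Nontrivial glue between blocks $i$ and $j$ at a prime $p$ requires $G_i$ and $G_j$ to share a composition factor of their reductions mod $p$.

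For odd $p$, I would use the definition of ``primitively related'' to rule out such shared factors that could lead to a larger finite overgroup. Equivalently, I would show that any $H$ stabilizing a glued lattice would realize such a relation.

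For $p=2$, the hypothesis says that at most one $G_i$ has the trivial Brauer character in its $2$-modular reduction. The plan is to show that the only possible $2$-glue passes through the trivial module, which would require two of the $G_i$ to contain it; the hypothesis forbids exactly that. Concretely, I would try to replace $M$ by an $H$-invariant lattice between $M_0$ and $M$, or by the $H$-invariant dual-type lattice $M_0'=\bigoplus_i \pi_i(M)$. The aim is to show that the glue is forced to be $H$-stable block by block, so that $H$ also stabilizes the orthogonal sum $\bigoplus_i \pi_i(M)$, to which Step 2 applies.

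I expect this modular analysis of the glue to be the genuinely delicate part. It is where the Brauer-character condition enters, and it is why Plesken's statement carries that hypothesis.
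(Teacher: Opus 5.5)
The paper does not prove this lemma; it quotes it from Plesken. Your architecture is the natural one: an $H$-invariant form and lattice, Eichler's unique orthogonal decomposition, then maximality of each $G_i$. However, both steps that are supposed to use the hypotheses fail as written.

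\textbf{Step 3 (glue).} This is only a plan, and it points in the wrong direction. The missing observation is that the direct-product structure already kills almost all glue. For $g\in G_i$ (acting trivially on $V_j$, $j\neq i$) and $x\in M$, you have $gx-x=g\pi_i(x)-\pi_i(x)\in M\cap V_i$. So $G_i$ acts trivially on $\pi_i(M)/(M\cap V_i)$. Maximality forces $-1_{V_i}\in G_i$, hence $2\pi_i(M)\subseteq M\cap V_i$.
\begin{itemize}
\item There is therefore no odd-prime glue at all. The ``primitively related'' hypothesis plays no role here, and your shared-composition-factor criterion is much weaker than the truth: the only possible glue is the trivial module at $p=2$.
\item A nonzero $\pi_i(M)/(M\cap V_i)$ is a trivial quotient of $\pi_i(M)/2\pi_i(M)$. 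The Brauer character of the latter is the natural character of $G_i$ on $2$-regular classes.
\item The Brauer hypothesis therefore gives $\pi_i(M)=M\cap V_i$ for all $i\neq i_0$. Then $\pi_{i_0}(x)=x-\sum_{i\neq i_0}\pi_i(x)\in M$ gives the same for $i_0$.
\end{itemize}
Hence every $G$-invariant lattice, in particular every $H$-invariant one, satisfies $M=\bigoplus_i(M\cap V_i)$. Your fallback of passing to $\bigoplus_i\pi_i(M)$ would have been circular, because the $\pi_i$ are only known to be $G$-equivariant, not $H$-equivariant. A smaller point on Step 1: the $V_i$ are non-isomorphic because $-1_{V_i}$ acts as $-1$ on $V_i$ and as $+1$ on $V_j$, not because of the primitively-related hypothesis.

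\textbf{Step 2 (Eichler).} This rests on a false claim: $M\cap V_i$ need not be orthogonally indecomposable. For example, $\mathrm{Aut}(A_3)=\mathrm{Aut}(D_3)=W(B_3)$ is one of the factors the paper feeds into this very lemma. It stabilizes $\mathbb Z^3\supset D_3$, and $\mathbb Z^3=\mathbb Z\perp\mathbb Z\perp\mathbb Z$. Choosing $q_i$ generically buys nothing, since its invariant forms are all proportional. So Eichler's theorem yields components finer than the blocks, and $\mathrm{Aut}(M,q)$ need not merely permute blocks. For instance, $\mathrm{Aut}(\mathbb Z^2\perp\mathbb Z^3)=W(B_5)$ mixes two blocks of different ranks, which your block-swapping argument never detects.

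The correct step runs as follows:
\begin{itemize}
\item By irreducibility, $G_i$ permutes the indecomposable components $N_{i1},\dots,N_{is_i}$ of $M\cap V_i$ transitively.
\item The stabilizer of $N_{i1}$ acts irreducibly on $N_{i1}\otimes\mathbb Q$.
\item Maximality gives $G_i=\mathrm{Aut}(M\cap V_i,q_i)\cong\mathrm{Aut}(N_{i1})\wr\mathscr S_{s_i}$, with $\mathrm{Aut}(N_{i1})$ irreducible.
\item An isometry carrying a component of block $i$ into block $j$ forces $N_{i1}\cong N_{j1}$. Then $G_i$ and $G_j$ are wreath products over one irreducible group, i.e.\ they are primitively related.
\end{itemize}
That is where this hypothesis is genuinely used. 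Once it is excluded, $\mathrm{Aut}(M,q)$ preserves each $M\cap V_i$, and so $H\le\mathrm{Aut}(M,q)=\prod_i G_i=G$.
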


\begin{theorem}
\label{thm:no_affine_reflection_supergroup_n_eq_7_8}
For $n=7,8$, there exists a crystallographic group which is not
linearly conjugate to any subgroup of an affine reflection group.
\end{theorem}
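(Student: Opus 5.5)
The plan is to reduce to \cref{thm:dimension_judge}: it suffices to exhibit, for $n=7$ and $n=8$, a maximal finite subgroup of $\GL(n,\mathbb Q)$ whose $\mathbb Q$-class contains no Weyl-group action on a root lattice. Since $\mathrm{Aut}(A_7)$ and $\mathrm{Aut}(A_8)$ are swallowed by $W(E_7)$ and $W(E_8)$, the candidate will instead be \emph{reducible}. It will be built with \cref{lem:plesken_II6_converse} as a block-diagonal product of two maximal finite irreducible groups of smaller degree. The natural choices are
\begin{equation}
    H_7:=\mathrm{Diag}\bigl(\mathrm{Aut}(A_4),\,W(B_3)\bigr)\le \GL(7,\mathbb Q),\qquad
    H_8:=\mathrm{Diag}\bigl(\mathrm{Aut}(A_6),\,W(G_2)\bigr)\le \GL(8,\mathbb Q).
\end{equation}
Each factor is maximal finite irreducible in its own degree. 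For $\mathrm{Aut}(A_4)$ and $\mathrm{Aut}(A_6)$ this is Prop.~II.8 of \citet{pleskenApplicationsRepresentationTheory1991}, since $4,6\neq 7,8$. For $W(B_3)$ and $W(G_2)$ it is the $n=2,3$ analysis in the proof of \cref{thm:affine_reflection_supergroup_n_eq_2_3}. A fallback for $n=7$ is $\mathrm{Diag}(\mathrm{Aut}(A_5),W(G_2))$, should the $B_3$ factor cause trouble.

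The first step is to verify the hypotheses of \cref{lem:plesken_II6_converse}, and I expect this to be the main obstacle. For the Brauer condition at $p=2$, I will reduce each natural lattice mod $2$ and check for trivial composition factors.
\begin{itemize}
\item For $\mathrm{Aut}(A_k)=\mathscr S_{k+1}\times\{\pm1\}$ with $k+1$ odd, the root lattice mod $2$ is the sum-zero submodule of $\mathbb F_2^{k+1}$. This submodule is irreducible and nontrivial, because $k+1$ is odd, so the trivial module splits off as a complement and is not contained in it.
\item For $W(G_2)$, the reduction mod $2$ is the $2$-dimensional irreducible module of $\mathscr S_3$, so it has no trivial factor.
\item $W(B_3)$ acting on $\mathbb F_2^3$ does have trivial composition factors.
\end{itemize}
So each pair has at most one factor with the trivial Brauer character, as the lemma requires. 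Non-primitive-relatedness should follow because the two blocks have different degrees and non-isomorphic natural characters. The precise notion will be checked against Plesken's definition. If it fails, I will switch to another pair of coprime-type factors, e.g.\ $\mathrm{Aut}(A_4)$ paired with $W(G_2)\times\{\pm1\}$-type blocks.

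Given maximality, the second step shows that $H_n$ is not $\mathbb Q$-conjugate to any $W(\Psi)$ acting on its root lattice. Suppose $MH_nM^{-1}=W(\Psi)$ with $\Psi=\Psi_1\sqcup\dots\sqcup\Psi_s$ its irreducible components. Then $\mathbb Q^n$ splits into the $W(\Psi_j)$-spans, and these are absolutely irreducible.

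Compare this with the decomposition of $\mathbb Q^n$ into the two blocks of $H_n$. Both are decompositions into irreducible summands. Since the two block characters are distinct, uniqueness of isotypic components forces $s=2$ and matches the blocks with the $\Psi_j$. In particular, $\mathrm{Aut}(A_k)$ with $k\in\{4,6\}$ would be $\mathbb Q$-conjugate to an irreducible Weyl group of rank $k$.

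This is excluded by the order count already used in \cref{thm:no_affine_reflection_supergroup_n_neq_7_8}. The order $2(k+1)!$ differs from $|W(A_k)|$, $|W(B_k)|=|W(C_k)|$, $|W(D_k)|$, $|W(F_4)|$ and $|W(E_6)|$. Therefore the $\mathbb Q$-class of the maximal finite subgroup $H_n$ contains no Weyl group, and \cref{thm:dimension_judge} yields a crystallographic group in $\mathbb R^n$, for $n=7,8$, that is not linearly conjugate to a subgroup of any affine reflection group.
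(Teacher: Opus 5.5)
\textbf{How your route compares with the paper's.} Your $n=7$ case is the paper's own. Since $\mathrm{Aut}(A_3)=\mathrm{Aut}(D_3)=W(B_3)$, your $H_7$ is exactly the paper's $\mathrm{Aut}(A_3\times A_4)$. Your Brauer check, in which only the $W(B_3)$ block has a trivial $2$-modular constituent, is the paper's, and so is your isotypic matching followed by the order count on the rank-$4$ block.

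For $n=8$ you take a genuinely different route. The paper uses the \emph{irreducible} group $\mathrm{Aut}(4A_2)$, taken from Plesken's degree-$8$ classification, and rules it out by comparing $24\cdot 12^4$ with the orders of the irreducible rank-$8$ Weyl groups. You instead use the reducible $\mathrm{Diag}(\mathrm{Aut}(A_6),W(G_2))$ and the same criterion \cref{lem:plesken_II6_converse} as for $n=7$. This choice is sound:
\begin{itemize}
\item $\mathrm{Aut}(A_6)$ has no trivial Brauer constituent at $p=2$, because the sum-zero submodule of $\mathbb F_2^7$ is irreducible of dimension $6$.
\item $W(G_2)$ has none either, because $A_2/2A_2$ is the $2$-dimensional irreducible $\mathscr S_3$-module.
\item The rank-$6$ block is excluded by $2\cdot 7!=10080\notin\{5040,46080,23040,51840\}$.
\end{itemize}
Your route treats $n=7,8$ uniformly. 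It needs only Prop.~II.8 and the order computation already made in \cref{thm:no_affine_reflection_supergroup_n_neq_7_8}, not the degree-$8$ classification. The paper's route shows something stronger, namely that even an irreducible maximal $\mathbb Q$-class escapes the Weyl groups in degree $8$.

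\textbf{The step that fails as written.} Non-primitive-relatedness does not follow from ``different degrees and non-isomorphic natural characters.'' For example, $\{\pm1\}\le\GL(1,\mathbb Q)$ and $W(B_3)=\{\pm1\}\wr\mathscr S_3$ have different degrees, yet they are primitively related. That is precisely why $\{\pm1\}\times W(B_3)\subset W(B_4)$. You need the divisibility argument the paper uses:
\begin{itemize}
\item For $H_7$: $m\mid\gcd(4,3)=1$, and $5$ divides $|\mathrm{Aut}(A_4)|=240$ but not $|\{\pm1\}\wr\mathscr S_4|=384$.
\item For $H_8$: $m\mid\gcd(6,2)$, so $m\in\{1,2\}$. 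Then $\mathrm{Aut}(A_6)$ would have to embed in $H\wr\mathscr S_6$ with $|H|\le 2$, or in $H\wr\mathscr S_3$ with $H$ a finite subgroup of $\GL(2,\mathbb Q)$, whose order divides $8$ or $12$. Neither wreath product has order divisible by $7$, while $7\mid 10080$.
\end{itemize}

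\textbf{A smaller gap.} Your remark that the trivial module ``splits off as a complement'' does not by itself show that the sum-zero module is irreducible. It only shows the trivial line is not a submodule of it. Use the transposition argument given in the paper.

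With these two insertions the proof is complete, and the fallback choices are unnecessary.
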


\begin{proof}
We only need to show that there exists a maximal $\mathbb Q$-class in $\GL(n,\mathbb Q)$ which is not represented by any Weyl group.

For $n=8$, according to Thm.~II.14 of \citet{pleskenApplicationsRepresentationTheory1991}, there exists a maximal finite irreducible subgroup of $\GL(8,\mathbb Q)$ which is not a Weyl group, namely $\mathrm{Aut}(4A_2)$. It satisfies
\begin{equation}
    \mathrm{Aut}(4A_2)\cong \mathscr{S}_4\ltimes \mathrm{Aut}(A_2)^4. 
\end{equation}
It is straightforward to verify that $\mathrm{Aut}(4A_2)$ is not $\mathbb Q$-conjugate to any Weyl group of $\mathbb R^8$, similar to the proof of \cref{thm:no_affine_reflection_supergroup_n_neq_7_8}. Hence there exists a maximal $\mathbb Q$-class in $\GL(8,\mathbb Q)$ which is not of Weyl group. Since $\operatorname{Aut}(4A_{2})$ acts irreducibly on
$\mathbb{Q}^{8}$, any Weyl group $\mathbb{Q}$-conjugate to it
would have to arise from an irreducible root system of rank $8$.
However,
\begin{equation}
    \left|\operatorname{Aut}(4A_{2})\right|
    =
    |S_{4}|\left|\operatorname{Aut}(A_{2})\right|^{4}
    =
    24\cdot 12^{4}
    =
    497\,664,
\end{equation}
whereas

\begin{equation}
\begin{aligned}
    |W(A_{8})| &=362\,880, &
    |W(B_{8})| &=|W(C_{8})| =10\,321\,920,\\
    |W(D_{8})| &=5\,160\,960, &
    |W(E_{8})| &=696\,729\,600.
\end{aligned}
\end{equation}

Hence $\operatorname{Aut}(4A_{2})$ is not
$\mathbb{Q}$-conjugate to any Weyl group of rank $8$.

For $n=7$, according to Thm.~II.14 of \citet{pleskenApplicationsRepresentationTheory1991}, all maximal finite irreducible subgroups of $\GL(7,\mathbb Q)$ are $W(B_7)$ and $W(E_7)$, hence are Weyl groups. Thus we construct a maximal finite subgroup via a block-diagonal (reducible) embedding. We claim that
\begin{equation}
    \mathrm{Aut}(A_3 \times A_4)=\mathrm{Diag}\bigl(\mathrm{Aut}(A_3),\mathrm{Aut}(A_4)\bigr)\le \GL(7,\mathbb Q) 
\end{equation}
is maximal finite and is not Weyl group. According to Thm.~II.14 of \citet{pleskenApplicationsRepresentationTheory1991}, $\mathrm{Aut}(A_3)$ and $\mathrm{Aut}(A_4)$ are maximal finite irreducible subgroups of $\GL(3,\mathbb Q)$ and $\GL(4,\mathbb Q)$. Therefore, by \cref{lem:plesken_II6_converse}, it suffices to verify:

(i) $\mathrm{Aut}(A_3)$ and $\mathrm{Aut}(A_4)$ are not primitively related, i.e.\ there does not exist an irreducible
$H\subset \GL(m,\mathbb Q)$ such that
\begin{equation}
    \mathrm{Aut}(A_3)\subset H \wr\mathscr{S}_{k_1},\quad
\mathrm{Aut}(A_4)\subset  H \wr \mathscr{S}_{k_2},
\quad k_1m=3,\ k_2m=4. 
\end{equation}

Since $\gcd(3,4)=1$, hence $m=1$, $k_1=3$ and $k_2=4$, therefore one has $|H| \le 2$ and $|H \wr \mathscr{S}_4| = 2^4 \cdot 4! = 384$ or $|H \wr \mathscr{S}_4| = 1^4 \cdot 4! = 24$. Since $|\mathrm{Aut}(A_4)| = 240$, it only possible that $|H \wr \mathscr{S}_4| = 384$. However $240$ is divisible by $5$, whereas $384$ is not. Therefore, by Lagrange’s theorem, the inclusion $\mathrm{Aut}(A_4)\subset  H \wr \mathscr{S}_4$ cannot hold.

(ii) For $p=2$, at most one of $\mathrm{Aut}(A_3)$ and $\mathrm{Aut}(A_4)$ has the trivial Brauer character occurring in the
restriction of its natural character to the $2$-regular classes. 

It suffices to show that the trivial Brauer character does not occur for $\mathrm{Aut}(A_4)$ in the restriction of its natural character to the $2$-regular classes, hence at most one of the two factors has the trivial constituent. Consider the natural action of $\mathrm{Aut}(A_4)$ on the $A_4$ root lattice
\begin{equation}
    L=\Bigl\{\mathbf{x}=(x_1,x_2,x_3,x_4, x_5)\in\mathbb Z^5\ \Big|\ \sum_{i=1}^5 x_i=0\Bigr\}, 
\end{equation}
which is a rank-$4$ lattice embedded as a hyperplane in $\mathbb R^5$. Recall that
\begin{equation}
    \operatorname{Aut}(A_4)
    \cong
    \{\pm I\}\times \mathscr S_5.
\end{equation}

Explicitly, for $\varepsilon\in\{\pm1\}$ and
$\sigma\in\mathscr S_5$, the action is
\begin{equation}
    (\varepsilon,\sigma)\cdot
    (x_1,\ldots,x_5)
    =
    \varepsilon
    \bigl(
        x_{\sigma^{-1}(1)},\ldots,x_{\sigma^{-1}(5)}
    \bigr).
\end{equation}
Thus $\mathscr S_5$ permutes the coordinates, while the central
element $-I$ acts by global sign change.

Reducing the natural representation modulo $2$, which is equivalent to considering the $\mathbb F_2$-module
\begin{equation}
    V := \Bigl\{(x_1,x_2,x_3,x_4, x_5)\in\mathbb F_2^5\ \Big|\ \sum_{i=1}^5 x_i=0\Bigr\}. 
\end{equation}

Since $-1=1$ in characteristic $2$, the global sign change acts trivially on $V$. Hence the action of $\operatorname{Aut}(A_4)$ on $V$ factors through the coordinate-permutation action of $\mathscr S_5$. 

We claim that $V$ is an irreducible $\mathscr S_5$-module.
Let $0\neq U\subseteq V$ be an $\mathscr S_5$-invariant subspace,
and choose $0\neq v=(a_1,\ldots,a_5)\in U$. The coordinates of $v$ cannot all be equal. Indeed, if $a_1=\cdots=a_5=a$, then the relation of $V$ gives
\begin{equation}
    0=\sum_{i=1}^{5}a_i=5a=a
\end{equation}
in characteristic $2$, and hence $v=0$, a contradiction. Therefore there exist $i\neq j$ such that $a_i\neq a_j$. Let $\tau=(ij)\in\mathscr S_5$ be the transposition interchanging the $i$-th and $j$-th coordinates. Since $U$ is $\mathscr S_5$-invariant,
\begin{equation}
    v-\tau v
    =
    (x_i-x_j)(e_i-e_j)
    \in U.
\end{equation}
Because $a_i-a_j\neq0$, it follows that $e_i-e_j\in U$. Applying coordinate permutations shows that $e_p-e_q\in U$ for all $p\neq q$. The vectors $e_1-e_5,\ e_2-e_5,\ e_3-e_5,\ e_4-e_5$ form a basis of $V$. Consequently, $U=V$, proving that $V$ is irreducible.

Since $\dim V=4$, this irreducible module is nontrivial. Therefore its Brauer character is a nontrivial irreducible Brauer character, and the trivial Brauer character does not occur in the restriction of the natural character of
$\operatorname{Aut}(A_4)$ to the $2$-regular classes. Hence at most one of $\operatorname{Aut}(A_3)$ and $\operatorname{Aut}(A_4)$ has the trivial Brauer character occurring in the restriction of its natural character to the $2$-regular classes, as required.

Consequently, both hypotheses of \cref{lem:plesken_II6_converse} are satisfied, and $\mathrm{Aut}(A_3 \times A_4)$ is maximal finite
in $\GL(7,\mathbb Q)$. Finally, similar to the proof of \cref{thm:no_affine_reflection_supergroup_n_neq_7_8}, it is straightforward to check that $\mathrm{Aut}(A_3\times A_4)$ is not $\mathbb Q$-conjugate to any Weyl group of $\mathbb R^7$. The natural rational representation of
$\operatorname{Aut}(A_{3}\times A_{4})$ has two irreducible
constituents of dimensions $3$ and $4$. Since these dimensions
are distinct, any Weyl group $\mathbb{Q}$-conjugate to it would
have to arise from a root system with irreducible components of
ranks $3$ and $4$, and its rank-$4$ factor would have to be
$\mathbb{Q}$-conjugate to $\operatorname{Aut}(A_{4})$, whose
order is
\begin{equation}
    \left|\operatorname{Aut}(A_{4})\right|=240.
\end{equation}
However, the irreducible rank-$4$ Weyl groups have orders
\begin{equation}
\begin{aligned}
    |W(A_{4})| &=120, &
    |W(D_{4})| &=192,\\
    |W(B_{4})|=|W(C_{4})| &=384, &
    |W(F_{4})| &=1152,
\end{aligned}
\end{equation}
none of which equals $240$; hence
$\operatorname{Aut}(A_{3}\times A_{4})$ is not
$\mathbb{Q}$-conjugate to any Weyl group of rank $7$.

Hence there exists a maximal $\mathbb Q$-class in $\GL(7,\mathbb Q)$ which is not of Weyl group.
\end{proof}

\clearpage
\section{Hironaka Decomposition}
\label{sec:hironaka_decomposition_rn}

\subsection{Crystallographic Groups and Multiplicative Invariant Theory}
\label{app:mult_invariants}

Let $G$ be a crystallographic group, and identify its translation
subgroup $T(G)$ with its lattice $L\subset\mathbb{R}^n$ of
translation vectors. Its reciprocal lattice is
\begin{equation}
L^\ast
:=
\left\{
\lambda\in\mathbb{R}^n:
\langle\lambda,\ell\rangle\in\mathbb{Z}
\text{ for every }\ell\in L
\right\}.
\end{equation}
For $\lambda\in L^\ast$, define the character
\begin{equation}
X^\lambda(\mathbf{x})
:=
\exp\bigl(2\pi\mathrm{i}\langle\lambda,\mathbf{x}\rangle\bigr).
\end{equation}
The associated complex lattice algebra is
\begin{equation}
\mathbb{C}[L^\ast]
:=
\bigoplus_{\lambda\in L^\ast}\mathbb{C}X^\lambda,
\quad
X^\lambda X^\mu=X^{\lambda+\mu}.
\end{equation}

Define the conjugate-linear involution $\tau$ on
$\mathbb{C}[L^\ast]$ by
\begin{equation}
\tau\left(
\sum_{\lambda\in L^\ast}a_\lambda X^\lambda
\right)
=
\sum_{\lambda\in L^\ast}
\overline{a_\lambda}X^{-\lambda}.
\end{equation}
Since $\tau(f)(\mathbf{x})=\overline{f(\mathbf{x})}$, the real
algebra of combinations of trigonometric functions is
\begin{equation}
P_{\mathrm{torus}}
:=
\mathbb{C}[L^\ast]^\tau.
\end{equation}

Equivalently, choose a $\mathbb{Z}$-basis
$\alpha_1,\ldots,\alpha_n$ of $L^\ast$, set
\begin{equation}
e_j:=X^{\alpha_j},
\quad
c_j:=\frac{e_j+e_j^{-1}}{2},
\quad
s_j:=\frac{e_j-e_j^{-1}}{2\mathrm{i}},
\end{equation}
and obtain
\begin{equation}
P_{\mathrm{torus}}
=
\mathbb{R}[c_1,s_1,\ldots,c_n,s_n]
\cong
\frac{
\mathbb{R}[u_1,v_1,\ldots,u_n,v_n]
}{
\left\langle
\{u_j^2+v_j^2-1|\ 1\leq j\leq n\}
\right\rangle
}.
\end{equation}
Its complexification is
\begin{equation}
\mathbb{C}\otimes_{\mathbb{R}}P_{\mathrm{torus}}
\cong
\mathbb{C}[L^\ast]
\cong
\mathbb{C}[e_1^{\pm1},\ldots,e_n^{\pm1}].
\end{equation}

\begin{lemma}
\label{lem:symmorphic_torus_invariants}
Let $G$ be a symmorphic crystallographic group. Identify $T(G)$ with its
lattice $L$ of translation vectors, and let $L^\ast=T(G)^\ast$ be the reciprocal lattice. Then
\begin{equation}
P_{\mathrm{torus}}^G
=
\left(\mathbb{C}[L^\ast]^{P(G)}\right)^\tau.
\end{equation}
Moreover, the natural complexification map induces an isomorphism
\begin{equation}
\mathbb{C}\otimes_{\mathbb{R}}P_{\mathrm{torus}}^G
\cong
\mathbb{C}[L^\ast]^{P(G)}.
\end{equation}
\end{lemma}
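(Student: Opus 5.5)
The plan is to reduce everything to the way $G$ acts on the characters $X^\lambda$, and then use a standard real-form (Galois descent) argument for the conjugate-linear involution $\tau$.

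\emph{Step 1: the action of $G$ on $\mathbb{C}[L^\ast]$.}
Let $G$ act on functions by $(g\cdot f)(\mathbf{x})=f(g^{-1}\mathbf{x})$, so that $G$-invariance means $f\circ g=f$ for all $g\in G$.

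Because $G$ is symmorphic, we may use the standard form $G=P(G)\ltimes T(G)$ from the proof of \cref{lem:Qclass_embedding_symmorphic}. Every element is then $(A,\mathbf{v})$ with $A\in P(G)$ and $\mathbf{v}\in L$, and both $(A,\mathbf{0})$ and $(I,\mathbf{v})$ lie in $G$. It therefore suffices to treat translations and pure point-group elements separately.

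For a translation $(I,\mathbf{v})$ with $\mathbf{v}\in L$ and any $\lambda\in L^\ast$,
\[
X^\lambda(\mathbf{x}+\mathbf{v})=X^\lambda(\mathbf{x})\,\exp\bigl(2\pi\mathrm{i}\langle\lambda,\mathbf{v}\rangle\bigr)=X^\lambda(\mathbf{x}),
\]
since $\langle\lambda,\mathbf{v}\rangle\in\mathbb{Z}$. So $T(G)$ acts trivially on $\mathbb{C}[L^\ast]$.

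For $(A,\mathbf{0})$ we have
\[
X^\lambda(A^{-1}\mathbf{x})=X^{A^{-\top}\lambda}(\mathbf{x}).
\]
Since $A$ is orthogonal and preserves $L$, the map $A^{-\top}=A$ preserves $L^\ast$. Hence $P(G)$ acts on $\mathbb{C}[L^\ast]$ by permuting the characters, extended $\mathbb{C}$-linearly. Consequently, for $f\in\mathbb{C}[L^\ast]$, being $G$-invariant is equivalent to being $P(G)$-invariant under this permutation action. This also uses the fact that the $G$-action preserves the ring, which I note in passing.

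\emph{Step 2: the first identity.}
The involution $\tau$ conjugates coefficients and sends $\lambda\mapsto-\lambda$. The point-group action only relabels exponents by the linear map $\lambda\mapsto A\lambda$, which commutes with $\lambda\mapsto-\lambda$, and it does not touch the coefficients. Hence $\tau$ commutes with the $P(G)$-action and restricts to $\mathbb{C}[L^\ast]^{P(G)}$.

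Using $P_{\mathrm{torus}}=\mathbb{C}[L^\ast]^\tau$ and Step 1,
\[
P_{\mathrm{torus}}^G=\{f\in\mathbb{C}[L^\ast]:\tau f=f,\ Af=f\ \forall A\in P(G)\}=\bigl(\mathbb{C}[L^\ast]^{P(G)}\bigr)^\tau .
\]

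\emph{Step 3: the complexification isomorphism.}
Write $V=\mathbb{C}[L^\ast]^{P(G)}$, a complex vector space carrying the conjugate-linear involution $\tau$. Consider the map $\mathbb{C}\otimes_{\mathbb{R}}V^\tau\to V$, $a\otimes f\mapsto af$, which is a ring homomorphism.

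Surjectivity follows from the decomposition $f=f_1+\mathrm{i}f_2$ with
\[
f_1=\tfrac12(f+\tau f),\qquad f_2=\tfrac1{2\mathrm{i}}(f-\tau f).
\]
Both $f_1$ and $f_2$ are $\tau$-fixed, and both remain $P(G)$-invariant because $\tau$ commutes with the action.

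For injectivity, every element of $\mathbb{C}\otimes_{\mathbb{R}}V^\tau$ can be written as $1\otimes f_1+\mathrm{i}\otimes f_2$ with $f_1,f_2\in V^\tau$. If $f_1+\mathrm{i}f_2=0$, applying $\tau$ gives $f_1-\mathrm{i}f_2=0$, so $f_1=f_2=0$. Equivalently, $V=V^\tau\oplus\mathrm{i}V^\tau$ as real vector spaces. Together with Step 2, this gives $\mathbb{C}\otimes_{\mathbb{R}}P_{\mathrm{torus}}^G\cong\mathbb{C}[L^\ast]^{P(G)}$.

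\emph{Main obstacle.}
The only delicate point is Step 1: checking that $G$-invariance reduces exactly to the linear $P(G)$-action on exponents. This is where symmorphicity is essential. In the non-symmorphic case, elements $(A,\mathbf{a})$ with $\mathbf{a}\notin L$ multiply each $X^{A\lambda}$ by a phase $\exp(2\pi\mathrm{i}\langle A\lambda,\mathbf{a}\rangle)$, so the action is no longer a pure permutation of characters. I would also double-check the convention ($A$ versus $A^{-\top}$), though orthogonality of $A$ makes the two agree.
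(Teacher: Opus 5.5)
Your proposal is correct and follows essentially the same route as the paper. Translations act trivially because $\langle\lambda,\ell\rangle\in\mathbb{Z}$, and the point group permutes characters and commutes with $\tau$, which gives $P_{\mathrm{torus}}^G=(\mathbb{C}[L^\ast]^{P(G)})^\tau$; the splitting $b=\tfrac12(b+\tau b)+\mathrm{i}\cdot\tfrac{1}{2\mathrm{i}}(b-\tau b)$ then gives $B=B^\tau\oplus\mathrm{i}B^\tau$ and the complexification isomorphism, with your explicit reduction to elements $(A,\mathbf{0})$ and your injectivity check via $\tau$ simply spelling out steps the paper leaves implicit.
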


\begin{proof}
For $\ell\in L$ and $\lambda\in L^\ast$, translation by $\ell$
acts on $X^\lambda$ by
\begin{equation}
\begin{aligned}
(t_\ell\cdot X^\lambda)(\mathbf{x})
&=
X^\lambda(\mathbf{x}-\ell)\\
&=
\exp\bigl(-2\pi\mathrm{i}
\langle\lambda,\ell\rangle\bigr)
X^\lambda(\mathbf{x})\\
&=
X^\lambda(\mathbf{x}),
\end{aligned}
\end{equation}
because $\langle\lambda,\ell\rangle\in\mathbb{Z}$. Thus $T(G)$ acts trivially on the torus algebra. Since $G$ is
symmorphic, its effective action on $P_{\mathrm{torus}}$ is
precisely the induced action of $P(G)$ on $L^\ast$.

The action of $P(G)$ commutes with $\tau$. Indeed, for
$p\in P(G)$,
\begin{equation}
\tau\bigl(p\cdot X^\lambda\bigr) = \tau\bigl(X^{p\lambda}\bigr) = X^{-p\lambda} = p\cdot X^{-\lambda} = p\cdot\tau(X^\lambda).
\end{equation}
Consequently,
\begin{equation}
P_{\mathrm{torus}}^G
=
\left(\mathbb{C}[L^\ast]^\tau\right)^{P(G)}
=
\left(\mathbb{C}[L^\ast]^{P(G)}\right)^\tau.
\end{equation}

Set $B:=\mathbb{C}[L^\ast]^{P(G)}$. For every $b\in B$, define
\begin{equation}
b_1:=\frac{b+\tau(b)}{2},
\quad
b_2:=\frac{b-\tau(b)}{2\mathrm{i}}.
\end{equation}

Then $b=b_1+\mathrm{i}b_2, b_1,b_2\in B^\tau$. Therefore, $B=B^\tau\oplus\mathrm{i}B^\tau$ as real vector spaces. It follows that the natural map
\begin{equation}
\mathbb{C}\otimes_{\mathbb{R}}B^\tau
\longrightarrow B
\end{equation}
is an isomorphism of complex algebras. Since $B^\tau=P_{\mathrm{torus}}^G$, the desired complexification isomorphism follows.
\end{proof}

This lemma shows that for a root system $\Phi$, the action of the affine reflection group $W_{a}$ on the reciprocal-space algebra
$\mathbb{C} \otimes_{\mathbb{R}} P_{\mathrm{torus}}$
is equivalent to the action of $W_{a}$ on the weight lattice $\Lambda(\Phi)$.  The following lemma shows the polynomial structure of $\mathbb{C} \otimes_{\mathbb{R}} P_{\mathrm{torus}}^{W_a}$ as a complex algebra.

\begin{proposition}[\citet{lorenzMultiplicativeInvariantTheory2005}, Thm.~3.6.1]
\label{prop:weyl_weight_lattice_invariants}
Let $\Phi$ be a reduced crystallographic root system of rank $n$,
with Weyl group $W=W(\Phi)$, weight lattice
$\Lambda=\Lambda(\Phi)$, and fundamental weights
$\omega_1,\ldots,\omega_n$. Define
\begin{equation}
\vartheta_i
:=
\operatorname{orb}(\omega_i)
=
\sum_{\lambda\in W\omega_i}X^\lambda,
\quad
i=1,\ldots,n.
\end{equation}
Then $\vartheta_1,\ldots,\vartheta_n$ are algebraically
independent and
\begin{equation}
\mathbb{Z}[\Lambda]^W
=
\mathbb{Z}[\vartheta_1,\ldots,\vartheta_n].
\end{equation}
Consequently,
\begin{equation}
\mathbb{C}[\Lambda]^W
=
\mathbb{C}[\vartheta_1,\ldots,\vartheta_n].
\end{equation}
\end{proposition}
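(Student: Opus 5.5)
The plan is the classical \emph{triangularity argument} for multiplicative invariants of Weyl groups. The idea is to compare the monomials $\vartheta^{m}=\prod_i\vartheta_i^{m_i}$ with a distinguished $\mathbb{Z}$-basis of $\mathbb{Z}[\Lambda]^W$, namely the orbit sums of dominant weights.

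\textbf{Step 1: orbit sums form a basis.} Since $W$ permutes the $\mathbb{Z}$-basis $\{X^\lambda\}_{\lambda\in\Lambda}$ of $\mathbb{Z}[\Lambda]$, an element $\sum a_\lambda X^\lambda$ is invariant if and only if $a_\lambda$ is constant on $W$-orbits. Hence the orbit sums $\operatorname{orb}(\lambda)=\sum_{\mu\in W\lambda}X^\mu$ form a $\mathbb{Z}$-basis of $\mathbb{Z}[\Lambda]^W$, with one sum per orbit. I will use the standard fact (Humphreys) that every $W$-orbit in $\Lambda$ contains exactly one dominant weight $\lambda=\sum_i m_i\omega_i$ with $m_i\in\mathbb{Z}_{\ge0}$. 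So the basis is indexed by $\Lambda^+\cong\mathbb{Z}_{\ge0}^n$.

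This step is independent of the coefficient ring: the same orbit sums give a $\mathbb{C}$-basis of $\mathbb{C}[\Lambda]^W$. Consequently $\mathbb{C}[\Lambda]^W=\mathbb{C}\otimes_{\mathbb{Z}}\mathbb{Z}[\Lambda]^W$, and the complex statement will follow immediately from the integral one.

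\textbf{Step 2: the partial order.} I will order $\Lambda$ by $\mu\preceq\lambda$ if $\lambda-\mu$ is a nonnegative integer combination of simple roots. I need two standard facts:
\begin{itemize}
\item[(a)] For dominant $\lambda$, every $w\lambda$ satisfies $w\lambda\preceq\lambda$, with equality only when $w\lambda=\lambda$.
\item[(b)] For fixed dominant $\lambda$, the set of dominant $\mu\preceq\lambda$ is finite.
\end{itemize}

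\textbf{Step 3: unitriangularity.} For $\lambda=\sum_i m_i\omega_i$, expand $\vartheta^{m}$ as a sum of products $X^{\mu_1+\cdots}$, where each factor $\mu$ is taken from some orbit $W\omega_i$. By (a), every exponent appearing is $\preceq\lambda$. Equality forces every chosen factor to be exactly $\omega_i$, so $X^\lambda$ appears with coefficient $1$. Since $\vartheta^m$ is $W$-invariant, Step 1 gives
\[
\vartheta^{m}=\operatorname{orb}(\lambda)+\sum_{\mu\in\Lambda^+,\ \mu\prec\lambda}c_{\mu}\operatorname{orb}(\mu),\qquad c_\mu\in\mathbb{Z}.
\]

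\textbf{Step 4: conclusion.} By (b), induction on the finite set of dominant weights below $\lambda$ lets me solve for each $\operatorname{orb}(\lambda)$ as an integer polynomial in the $\vartheta_i$. Hence $\mathbb{Z}[\Lambda]^W=\mathbb{Z}[\vartheta_1,\ldots,\vartheta_n]$.

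For algebraic independence, suppose $\sum_m a_m\vartheta^m=0$ is a nontrivial relation. Pick $m$ with $a_m\neq0$ that is $\preceq$-maximal among the corresponding weights $\lambda$. Then the coefficient of $\operatorname{orb}(\lambda)$ in the relation is $a_m\neq0$, because distinct $m$ give distinct dominant $\lambda$. This contradicts the linear independence of the orbit sums.

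\textbf{Main obstacle.} The only nontrivial input is fact (a), that $\lambda-w\lambda$ lies in the positive root cone for dominant $\lambda$. I would prove it by induction on the length of $w$, using $s_\alpha\mu=\mu-\langle\mu,\alpha^\vee\rangle\alpha$ and the fact that a reduced word can be built so that each added simple reflection pairs nonnegatively. Alternatively, I would simply cite Humphreys, Sec.~13.2.

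Fact (b) follows because $\lambda-\mu$ ranges over a bounded region of the root cone intersected with a lattice.
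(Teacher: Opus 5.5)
Your proposal is correct and matches the argument in the source the paper relies on: the paper gives no proof of its own and cites Lorenz, Thm.~3.6.1, whose standard proof (going back to Bourbaki) is exactly this unitriangularity of the monomials $\prod_i\vartheta_i^{m_i}$ against the dominant orbit sums with respect to the dominance order. The one step you state too loosely is the finiteness in (b): it holds because for dominant $\mu\preceq\lambda$ one has $(\lambda+\mu,\lambda-\mu)\ge 0$, hence $\|\mu\|\le\|\lambda\|$, so only finitely many lattice points qualify.
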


\textit{Remark.}
This theorem constitutes a generalization of the Chevalley–Shephard–Todd theorem from classical invariant theory to multiplicative invariant theory. The sufficiency of the reflection group condition was established by \citet{bourbakiLieGroupsLie2002}, while the necessity was proven by \citet{farkasReflectionGroupsMultiplicative1986}. \citet{lorenzMultiplicativeInvariantTheory2005} discusses results for general $R[L]^G$, where $R$ is a regular commutative ring which covers the cases of $\mathbb{Z}$ and $\mathbb{C}$.

\begin{lemma}
\label{lem:real_weyl_weight_lattice_invariants}
Under the assumptions of
\cref{prop:weyl_weight_lattice_invariants}, define the
conjugate-linear involution
\begin{equation}
\tau(aX^\lambda)
=
\overline{a}X^{-\lambda}.
\end{equation}
Then $\left(\mathbb{C}[\Lambda]^W\right)^\tau$ is a polynomial algebra over $\mathbb{R}$ in $n$ variables.
\end{lemma}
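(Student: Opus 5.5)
The plan is to show that the generators $\vartheta_1,\ldots,\vartheta_n$ of \cref{prop:weyl_weight_lattice_invariants} can be replaced by $\tau$-fixed generators. Then we show that the real span of their monomials is exactly the $\tau$-fixed subalgebra.

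\textbf{Step 1: $\tau$ permutes the $\vartheta_i$.} Since $W$ commutes with negation, $\tau(\vartheta_i)=\sum_{\lambda\in W\omega_i}X^{-\lambda}=\operatorname{orb}(-\omega_i)$. The element $-w_0$, where $w_0$ is the longest element of $W$, maps dominant weights to dominant weights and permutes the fundamental weights: $-w_0\omega_i=\omega_{\sigma(i)}$ for an involution $\sigma$ of the index set (the opposition involution). Hence $W(-\omega_i)=W\omega_{\sigma(i)}$, and so $\tau(\vartheta_i)=\vartheta_{\sigma(i)}$.

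\textbf{Step 2: choose real generators.} For each fixed point $i=\sigma(i)$, take $y_i:=\vartheta_i$, which is $\tau$-fixed. For each two-element orbit $\{i,j\}$ of $\sigma$, take $y_i:=(\vartheta_i+\vartheta_j)/2$ and $y_j:=(\vartheta_i-\vartheta_j)/(2\mathrm{i})$; both are $\tau$-fixed because $\tau$ is conjugate-linear and swaps $\vartheta_i,\vartheta_j$. This change of variables is an invertible complex-linear substitution on the generators. Therefore $\mathbb{C}[\Lambda]^W=\mathbb{C}[y_1,\ldots,y_n]$, and the $y_k$ remain algebraically independent over $\mathbb{C}$, hence also over $\mathbb{R}$.

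\textbf{Step 3: identify the fixed algebra.} Because each $y_k$ is $\tau$-fixed and $\tau$ is a ring automorphism, $\tau\bigl(\sum a_\alpha y^\alpha\bigr)=\sum \overline{a_\alpha}y^\alpha$. Uniqueness of coefficients in the polynomial ring then shows that an element is $\tau$-fixed exactly when all of its coefficients are real. Hence $(\mathbb{C}[\Lambda]^W)^\tau=\mathbb{R}[y_1,\ldots,y_n]$, a polynomial algebra in $n$ variables.

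The main obstacle is Step 1, namely checking that $-\omega_i$ lies in the $W$-orbit of a fundamental weight. This is the standard fact that $-w_0$ is a diagram automorphism; for $n=2$ it can also be verified directly by cases, since $-1\in W$ for $B_2$ and $G_2$ and $\sigma$ is the swap for $A_2$.
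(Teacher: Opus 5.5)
Your proposal is correct and follows the paper's proof essentially step for step. Both arguments use the opposition involution $i\mapsto i^\ast$, defined by $-w_0\omega_i=\omega_{i^\ast}$, to show that $\tau$ permutes the orbit sums $\vartheta_i$; both replace each swapped pair by the $\tau$-fixed combinations $(\vartheta_i+\vartheta_{i^\ast})/2$ and $(\vartheta_i-\vartheta_{i^\ast})/(2\mathrm{i})$; and both use uniqueness of coefficients to identify the fixed algebra with the real polynomial ring on these $n$ generators (your rank-$2$ sanity check should also list $A_1\times A_1$, where $-1\in W$, though this does not affect the general argument).
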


\begin{proof}
Let $w_0$ be the longest element of $W$. There is an involution
$i\mapsto i^\ast$ of the set of fundamental weights determined by
\begin{equation}
-w_0\omega_i=\omega_{i^\ast}.
\end{equation}
For the orbit-sum generators in
\cref{prop:weyl_weight_lattice_invariants}, we have
\begin{equation}
\tau(\vartheta_i)
=
\sum_{\lambda\in W\omega_i}X^{-\lambda}
=
\operatorname{orb}(-\omega_i)
=
\operatorname{orb}(-w_0\omega_i)
=
\vartheta_{i^\ast}.
\end{equation}
The third equality follows because $-\omega_i$ and $-w_0\omega_i$ belong to the same $W$-orbit. Therefore, the complex polynomial generators are either fixed by $\tau$ or occur in pairs exchanged by $\tau$. The complex generators can be used to construct algebraically independent real generators, showing that the fixed algebra is a polynomial algebra over $\mathbb{R}$.

Let $I_0:=\{i:i=i^\ast\}$, and let $I_1$ contain one representative from each two-element orbit $\{i,i^\ast\}$ that $i\neq i^\ast$. For $i\in I_1$, define
\begin{equation}
u_i
:=
\frac{\vartheta_i+\vartheta_{i^\ast}}{2},
\quad
v_i
:=
\frac{\vartheta_i-\vartheta_{i^\ast}}{2\mathrm{i}}.
\end{equation}
Since $\tau$ is conjugate-linear and exchanges
$\vartheta_i$ and $\vartheta_{i^\ast}$, we obtain $u_i$ and $v_i$ are invariant to $\tau$. For $i\in I_0$, we have $\vartheta_i$ are also invariant to $\tau$. The change of variables is invertible over $\mathbb{C}$ by $\vartheta_i=u_i+\mathrm{i}v_i$ and $\vartheta_{i^\ast}=u_i-\mathrm{i}v_i$. Therefore,
\begin{equation}
\mathbb{C}[\Lambda]^W
=
\mathbb{C}\left[
\{\vartheta_i\}_{i\in I_0},
\{u_i,v_i\}_{i\in I_1}
\right].
\end{equation}

Since these generators are algebraically independent, every
element of $\mathbb{C}[\Lambda]^W$ has a unique polynomial
expression in them. The involution $\tau$ fixes the generators
and conjugates only the coefficients. Hence an element is fixed
by $\tau$ if and only if all coefficients in this expression are
real. It follows that
\begin{equation}
\left(\mathbb{C}[\Lambda]^W\right)^\tau
=
\mathbb{R}\left[
\{\vartheta_i\}_{i\in I_0},
\{u_i,v_i\}_{i\in I_1}
\right].
\end{equation}
Finally, $|I_0|+2|I_1|=n$, so this is a polynomial algebra over $\mathbb{R}$ in $n$ variables.
\end{proof}

\begin{corollary}
\label{cor:torus_invariants_poly_iff_reflection}
Let $G$ be a symmorphic crystallographic group, and let $L^\ast=T(G)^\ast$ be its reciprocal lattice. Then
$P_{\mathrm{torus}}^G$ is a polynomial algebra over
$\mathbb{R}$ if and only if the action of $P(G)$ on $L^\ast$
is equivalent to the action of the Weyl group $W(\Phi)$ on the
weight lattice $\Lambda(\Phi)$ of some reduced root system
$\Phi$.
\end{corollary}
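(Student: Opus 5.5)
By \cref{lem:symmorphic_torus_invariants}, $P_{\mathrm{torus}}^G=B^\tau$ with $B:=\mathbb{C}[L^\ast]^{P(G)}$, and $\mathbb{C}\otimes_{\mathbb{R}}P_{\mathrm{torus}}^G\cong B$. The plan is to reduce both directions to the purely complex statement ``$B$ is a polynomial algebra over $\mathbb{C}$''. Then we invoke the multiplicative Chevalley--Shephard--Todd theorem: \cref{prop:weyl_weight_lattice_invariants} for sufficiency, and the converse of Farkas cited in the remark after it for necessity.

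\textbf{($\Leftarrow$)} Suppose the action of $P(G)$ on $L^\ast$ is equivalent, via a $\mathbb{Z}$-isomorphism $L^\ast\cong\Lambda(\Phi)$ intertwining the actions, to the action of $W(\Phi)$ on $\Lambda(\Phi)$. This isomorphism induces a $\tau$-equivariant algebra isomorphism $\mathbb{C}[L^\ast]\cong\mathbb{C}[\Lambda]$, because $\tau$ is defined intrinsically by $X^\lambda\mapsto X^{-\lambda}$ and a group isomorphism commutes with negation. It carries $B$ onto $\mathbb{C}[\Lambda]^W$ and $B^\tau$ onto $(\mathbb{C}[\Lambda]^W)^\tau$. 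By \cref{lem:real_weyl_weight_lattice_invariants}, the latter is a real polynomial algebra in $n$ variables.

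\textbf{($\Rightarrow$)} Suppose $P_{\mathrm{torus}}^G=\mathbb{R}[y_1,\dots,y_m]$ with the $y_j$ algebraically independent. Tensoring with $\mathbb{C}$ gives $B\cong\mathbb{C}[y_1,\dots,y_m]$, which is again polynomial. The degree is forced to be $m=n$: $B\subset\mathbb{C}[L^\ast]$ is an integral extension because $P(G)$ is finite, so both rings have Krull dimension $n$. Now apply the necessity theorem of \citet{farkasReflectionGroupsMultiplicative1986}, as presented in \citet{lorenzMultiplicativeInvariantTheory2005}. For a finite group $H$ acting faithfully on a lattice $M$, if $\mathbb{C}[M]^H$ is a polynomial algebra, then $H$ is generated by reflections on $M$, and $(M,H)$ is isomorphic to a direct sum of copies of $(\Lambda(\Phi_i),W(\Phi_i))$ for irreducible reduced root systems $\Phi_i$. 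The sum may also include trivial summands $\mathbb{Z}$ with the trivial group; these can be excluded here only if $P(G)$ has no fixed directions. This is exactly the structure in the statement: a direct sum of weight-lattice actions is the weight-lattice action of $\Phi=\bigsqcup\Phi_i$.

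The main obstacle is this last step. The cited theorem actually requires more than a reflection group: the lattice must be a weight lattice rather than merely some $W$-stable lattice. For example, $W(A_1)$ acting on its root lattice $\mathbb{Z}$ by $-1$ gives $\mathbb{C}[x^{\pm1}]^{\pm}=\mathbb{C}[x+x^{-1}]$, which is polynomial, and its weight lattice is isomorphic. The dangerous cases are non-weight lattices in type $A_n$, $n\ge2$, where the invariants are not polynomial. Rather than reprove this, I would quote Lorenz, Thm.~7.1.1 and its corollary, which state precisely that $\mathbb{k}[M]^H$ is polynomial iff $(M,H)$ is a direct sum of weight lattices of root systems (allowing rank-one trivial summands). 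I would also check that any trivial summands are handled either by regarding $\mathbb{Z}$ with the trivial action as acceptable (extending ``root system'' to allow a zero component) or by noting the convention used in the statement.

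A secondary point is the passage from the real to the complex statement. Polynomiality over $\mathbb{R}$ implies polynomiality over $\mathbb{C}$ via the complexification isomorphism, which suffices for ($\Rightarrow$). For ($\Leftarrow$), the real structure is supplied directly by \cref{lem:real_weyl_weight_lattice_invariants}, so no descent argument is needed.
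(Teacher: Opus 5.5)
Your proposal is correct and follows essentially the same route as the paper. For ($\Leftarrow$) you transport everything along the $\tau$-compatible intertwining isomorphism $\mathbb{C}[L^\ast]\cong\mathbb{C}[\Lambda(\Phi)]$ and invoke \cref{lem:real_weyl_weight_lattice_invariants}; for ($\Rightarrow$) you complexify via \cref{lem:symmorphic_torus_invariants} and cite Lorenz's Cor.~7.1.2 (Farkas's converse), exactly as the paper does.

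Your worry about trivial summands is unnecessary, and the fallback of allowing zero components in $\Phi$ would make ($\Leftarrow$) false. If $B$ is a polynomial algebra over $\mathbb{C}$, its only units are constants. A nonzero $P(G)$-fixed $\lambda\in L^\ast$ would give the nonconstant unit $X^\lambda\in B$; this is why, for instance, $P_{\mathrm{torus}}^{p1}$ is not polynomial. So the cited theorem yields a weight-lattice action with no trivial part.
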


\begin{proof}
Suppose first that the action of $P(G)$ on $L^\ast$ is equivalent
to the action of $W(\Phi)$ on $\Lambda(\Phi)$. Thus, there exist
a group isomorphism $\eta:P(G)\longrightarrow W(\Phi)$ and a lattice isomorphism $\varphi:L^\ast\longrightarrow\Lambda(\Phi)$ such that
\begin{equation}
\varphi(p\lambda)
=
\eta(p)\varphi(\lambda)
\end{equation}
for every $p\in P(G)$ and $\lambda\in L^\ast$. The induced algebra isomorphism
\begin{equation}
\varphi_\#:
\mathbb{C}[L^\ast]
\longrightarrow
\mathbb{C}[\Lambda(\Phi)],
\quad
X^\lambda\longmapsto X^{\varphi(\lambda)},
\end{equation}
intertwines the two group actions. It also commutes with the real
involution because $\varphi(-\lambda)=-\varphi(\lambda)$. Consequently,
\begin{equation}
\left(\mathbb{C}[L^\ast]^{P(G)}\right)^\tau
\cong
\left(
\mathbb{C}[\Lambda(\Phi)]^{W(\Phi)}
\right)^\tau.
\end{equation}
By
\cref{lem:symmorphic_torus_invariants,%
lem:real_weyl_weight_lattice_invariants}, the left-hand side is
$P_{\mathrm{torus}}^G$, while the right-hand side is a
polynomial algebra over $\mathbb{R}$ in $n$ variables.
Therefore,
\begin{equation}
P_{\mathrm{torus}}^G
\cong
\mathbb{R}[\theta_1,\ldots,\theta_n].
\end{equation}

Conversely, suppose that $P_{\mathrm{torus}}^G$ is a polynomial
algebra over $\mathbb{R}$. By
\cref{lem:symmorphic_torus_invariants},
\begin{equation}
\mathbb{C}[L^\ast]^{P(G)}
\cong
\mathbb{C}\otimes_{\mathbb{R}}P_{\mathrm{torus}}^G.
\end{equation}
Hence $\mathbb{C}[L^\ast]^{P(G)}$ is a polynomial algebra over
$\mathbb{C}$. By Cor.~7.1.2 of \citet{lorenzMultiplicativeInvariantTheory2005},
the action of $P(G)$ on $L^\ast$ is equivalent to the action of
a Weyl group on the weight lattice of a reduced root system.
\end{proof}

\subsection{Cohen-Macaulay Property and Hironaka Decomposition}
\label{sec:cm_promperty}

\begin{proposition}[\citet{lorenzMultiplicativeInvariantTheory2005}, Thm. 8.4.2]
\label{prop:cm_and_free}
Let $A$ be a commutative affine $\mathbb{R}$-domain, and let $P$ be a polynomial subalgebra such that $A$ is module-finite over
$P$. Then $A$ is Cohen--Macaulay if and only if $A$ is a free
$P$-module.
\end{proposition}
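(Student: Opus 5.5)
We argue locally over the polynomial ring $P=\mathbb{R}[\theta_1,\ldots,\theta_n]$ and then globalize. Since $A$ is module-finite over $P$ and a domain, $A$ is a torsion-free finitely generated $P$-module and $\dim A=\dim P=n$.

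The global step comes from Quillen--Suslin. A finitely generated projective module over a polynomial ring over a field is free. So ``$A$ free over $P$'' is equivalent to ``$A_{\mathfrak q}$ free over $P_{\mathfrak q}$ for every prime (or maximal) $\mathfrak q\subset P$''. Conversely, freeness over $P$ trivially gives local freeness. Both directions therefore reduce to comparing local freeness with the local Cohen--Macaulay condition on $A$.

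Fix $\mathfrak q\in\operatorname{Spec}P$. Then $P_{\mathfrak q}$ is regular local of dimension $\operatorname{ht}\mathfrak q$, and $A_{\mathfrak q}$ is a finitely generated $P_{\mathfrak q}$-module. Auslander--Buchsbaum gives
\begin{equation}
\operatorname{pd}_{P_{\mathfrak q}}A_{\mathfrak q}+\operatorname{depth}_{P_{\mathfrak q}}A_{\mathfrak q}=\operatorname{ht}\mathfrak q .
\end{equation}
Hence $A_{\mathfrak q}$ is free if and only if $\operatorname{depth}_{P_{\mathfrak q}}A_{\mathfrak q}=\operatorname{ht}\mathfrak q$. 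It remains to identify this depth with the ring-theoretic data of $A$, in two steps.

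First, $A_{\mathfrak q}$ is semilocal, and its maximal ideals $\mathfrak m$ are the primes of $A$ lying over $\mathfrak q$. Since $\mathfrak qA_{\mathfrak q}$ has radical equal to the Jacobson radical of $A_{\mathfrak q}$, the standard grade computation gives
\begin{equation}
\operatorname{depth}_{P_{\mathfrak q}}A_{\mathfrak q}=\operatorname{grade}(\mathfrak qA_{\mathfrak q},A_{\mathfrak q})=\min_{\mathfrak m\cap P=\mathfrak q}\operatorname{depth}A_{\mathfrak m}.
\end{equation}
Second, we need the dimension equality $\dim A_{\mathfrak m}=\operatorname{ht}\mathfrak q$ for every $\mathfrak m$ over $\mathfrak q$. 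Going-up and incomparability give $\le$. Going-down, which applies since $P$ is normal and $A$ is a domain integral over $P$, gives $\ge$.

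With both steps in hand, suppose $A$ is Cohen--Macaulay. Then every $A_{\mathfrak m}$ has $\operatorname{depth}A_{\mathfrak m}=\dim A_{\mathfrak m}=\operatorname{ht}\mathfrak q$, so $A_{\mathfrak q}$ is free for all $\mathfrak q$, and $A$ is free. Conversely, suppose $A$ is free over $P$. Then for each maximal $\mathfrak m$ of $A$, with $\mathfrak q=\mathfrak m\cap P$, we get $\operatorname{depth}A_{\mathfrak m}\ge\operatorname{ht}\mathfrak q=\dim A_{\mathfrak m}$. Hence $A_{\mathfrak m}$ is Cohen--Macaulay, and so is $A$, since being Cohen--Macaulay is a local property.

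The main obstacle is the equidimensionality step $\dim A_{\mathfrak m}=\operatorname{ht}(\mathfrak m\cap P)$ for all $\mathfrak m$ over $\mathfrak q$. Without it, a lower-dimensional local ring of $A$ over $\mathfrak q$ could be Cohen--Macaulay while $A_{\mathfrak q}$ fails to be free. My first attempt is to invoke going-down for integral extensions of a normal domain by a domain, which is exactly where the hypothesis that $A$ is a domain is used. A secondary check is the semilocal depth identity, which I would verify via the characterization of grade by vanishing of $\operatorname{Ext}$ or Koszul homology.
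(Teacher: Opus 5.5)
Your proof is correct and complete modulo standard theorems. Writing $q\subset P$ and $m\subset A$ for your primes, it runs: Auslander--Buchsbaum over the regular local rings $P_q$; the semilocal identity $\operatorname{depth}_{P_q}A_q=\min_{m\cap P=q}\operatorname{depth}A_m$; equidimensionality of the fibres via going-down; and Quillen--Suslin to upgrade local freeness to freeness. This is the standard local-to-global argument.

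The paper gives no proof of this proposition. It simply imports it from Lorenz (Thm.~8.4.2), so there is no internal argument to compare against. Relative to the bare citation, writing the proof out makes two points explicit that bear on how the paper uses the result.

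First, the domain hypothesis enters only in the direction Cohen--Macaulay $\Rightarrow$ free, through $\dim A_m\ge\operatorname{ht}q$. The ring $P\times P/(\theta_1)$ is Cohen--Macaulay and module-finite over $P$, but it is not free. The direction free $\Rightarrow$ Cohen--Macaulay needs only incomparability and holds for any module-finite extension of $P$. That is the direction the paper invokes in \cref{lem:torus_cm} for $P_{\mathrm{torus}}$ over $\mathbb{R}[c_1,\dots,c_n]$. The invariant rings $P_{\mathrm{torus}}^G$ to which the other direction is applied in \cref{thm:hironaka_decomposition_poly} are indeed domains, as subrings of a domain.

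Second, the relations $c_j^2+s_j^2-1$ are inhomogeneous, so the graded-Nakayama shortcut of classical invariant theory is unavailable. Quillen--Suslin is exactly the global input that makes the ``relaxed homogeneity'' Hironaka decomposition work, and your proof isolates this.

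One small correction: the inequality $\dim A_m\le\operatorname{ht}q$ uses only incomparability, not going-up.
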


\begin{lemma}
\label{lem:torus_cm}
For every crystallographic group $G$, the invariant ring $P_{\mathrm{torus}}^G$ satisfies the Cohen-Macaulay property.
\end{lemma}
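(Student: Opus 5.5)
We want to show that $P_{\mathrm{torus}}^G$ is Cohen--Macaulay for every crystallographic group $G$. The plan is to reduce to the invariant ring of a finite group acting on a Cohen--Macaulay ring in characteristic zero, and then apply the Hochster--Eagon principle.

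\textbf{Reduction to a finite group.} As observed in Sec.~\ref{sec:hironaka_decomposition_r2} and in the proof of \cref{lem:symmorphic_torus_invariants}, the translation subgroup $T(G)$ acts trivially on $P_{\mathrm{torus}}$. This holds without assuming symmorphicity: a translation by $\ell\in L$ multiplies $X^\lambda$ by $\exp(-2\pi\mathrm{i}\langle\lambda,\ell\rangle)=1$. Consequently the action of $G$ factors through the finite point group $P(G)=G/T(G)$, and
\begin{equation}
P_{\mathrm{torus}}^G=P_{\mathrm{torus}}^{P(G)}.
\end{equation}
In the non-symmorphic case an element $(g,a_g)$ acts by $X^\lambda\mapsto \exp(2\pi\mathrm{i}\,c)X^{g\lambda}$ for some scalar $c$ depending on $\lambda$ and $a_g$. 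This is still an $\mathbb{R}$-algebra automorphism of $P_{\mathrm{torus}}$, since it commutes with the involution $\tau$. So in all cases we have a finite group acting by algebra automorphisms.

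\textbf{The ambient ring is Cohen--Macaulay.} The complexification $\mathbb{C}\otimes_{\mathbb{R}}P_{\mathrm{torus}}\cong\mathbb{C}[e_1^{\pm1},\dots,e_n^{\pm1}]$ is a localization of a polynomial ring, so it is regular. Since $\mathbb{C}$ is faithfully flat over $\mathbb{R}$, regularity descends, and $P_{\mathrm{torus}}$ is regular. Alternatively, $\mathbb{R}[u_j,v_j]/(u_j^2+v_j^2-1)$ is a tensor product of coordinate rings of smooth affine curves, hence smooth. In either case $P_{\mathrm{torus}}$ is Cohen--Macaulay. It is also a finitely generated $\mathbb{R}$-domain.

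\textbf{Hochster--Eagon via the Reynolds operator.} Since $|P(G)|$ is invertible in $\mathbb{R}$, the averaging map
\begin{equation}
R(f)=\frac{1}{|P(G)|}\sum_{g\in P(G)} g\cdot f
\end{equation}
is a $P_{\mathrm{torus}}^G$-linear retraction of $P_{\mathrm{torus}}$ onto $P_{\mathrm{torus}}^G$. Therefore $P_{\mathrm{torus}}^G$ is a direct summand of $P_{\mathrm{torus}}$ as a module over itself. Moreover $P_{\mathrm{torus}}$ is integral over $P_{\mathrm{torus}}^G$, because each $f$ is a root of $\prod_g(x-g f)$, and it is module-finite by Noether's theorem. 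The Hochster--Eagon theorem then says that a direct summand of a Cohen--Macaulay ring that is module-finite over it is Cohen--Macaulay. Applying it gives the lemma.

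\textbf{Main obstacle.} The delicate point is applying Hochster--Eagon in the non-graded setting, where Cohen--Macaulayness means that every localization at a maximal ideal is Cohen--Macaulay. I would check this maximal ideal by maximal ideal. For a maximal ideal $\mathfrak{m}$ of the invariant ring, the semilocal ring $(P_{\mathrm{torus}})_{\mathfrak m}$ is Cohen--Macaulay of dimension $n$, and it is finite over $(P^G_{\mathrm{torus}})_{\mathfrak m}$. The standard argument passes a system of parameters of the invariant ring up to a regular sequence upstairs and uses the Reynolds splitting to bring it back down. As a fallback, I would cite the result directly: Lorenz, \emph{Multiplicative Invariant Theory}, treats the Cohen--Macaulay property of $k[L]^G$ for finite $G$ with $|G|$ invertible in $k$, including twisted actions. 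Combined with \cref{lem:symmorphic_torus_invariants}, and its analogue for twisted actions, to pass between $\mathbb{C}$ and $\mathbb{R}$, this yields the statement.
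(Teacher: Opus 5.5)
Your proposal is correct and follows essentially the same route as the paper. Both arguments show that the ambient torus algebra is Cohen--Macaulay, note that $T(G)$ acts trivially so that the invariants are those of the finite group $G/T(G)$, and then descend the Cohen--Macaulay property through the integral extension using the Reynolds operator (Hochster--Eagon). The only real difference is in the first step: the paper gets Cohen--Macaulayness of $P_{\mathrm{torus}}$ from its being a free module of rank $2^n$ over $\mathbb{R}[c_1,\ldots,c_n]$, whereas you use regularity, via smoothness of the circle or faithfully flat descent from the Laurent polynomial ring. Your explicit check at each maximal ideal also fills in a detail the paper leaves to an unnamed descent theorem.
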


\begin{proof}
Set $S:=P_{\mathrm{torus}}$. Recall that
\begin{equation}
S
\cong
\frac{
\mathbb{R}[c_1,s_1,\ldots,c_n,s_n]
}{
\left\langle
\{c_j^2+s_j^2-1|1\leq j\leq n\}
\right\rangle
}.
\end{equation}
Since each defining relation is monic in $s_j$, repeated
polynomial division gives the direct-sum decomposition
\begin{equation}
S
=
\bigoplus_{\boldsymbol{\epsilon}\in\{0,1\}^n}
\mathbb{R}[c_1,\ldots,c_n]\,
s_1^{\epsilon_1}\cdots s_n^{\epsilon_n}.
\label{eq:torus_free_cosine_module}
\end{equation}
In particular, $c_1,\ldots,c_n$ are algebraically independent,
and $S$ is a finite free module of rank $2^n$ over the
polynomial subalgebra
$\mathbb{R}[c_1,\ldots,c_n]$. Hence, by
\cref{prop:cm_and_free}, $S$ is Cohen--Macaulay.

The proof strategy employed below, utilizing the Reynolds averaging operator, is a standard technique in classical invariant theory, see \citet{stanleyInvariantsFiniteGroups1979}. Let
$\overline{G}:=G/T(G)$, which is finite. Since $T(G)$ acts
trivially on the torus algebra, one has
$P_{\mathrm{torus}}^G=S^{\overline{G}}$. 
For every $f\in S$, consider the orbit polynomial
\begin{equation}
Q_f(t)
:=
\prod_{\gamma\in\overline{G}}
\bigl(t-\gamma(f)\bigr).
\end{equation}
The group $\overline{G}$ permutes the factors, so
$Q_f(t)\in S^{\overline{G}}[t]$, and $Q_f(f)=0$. Therefore
$S$ is integral over $S^{\overline{G}}$.

Moreover, the averaging map
\begin{equation}
\rho_{\overline{G}}(f)
:=
\frac{1}{|\overline{G}|}
\sum_{\gamma\in\overline{G}}\gamma(f)
\end{equation}
is an $S^{\overline{G}}$-linear Reynolds operator from $S$
onto $S^{\overline{G}}$. Since $S$ is Cohen--Macaulay,
the Cohen--Macaulay descent theorem for an integral extension
admitting a Reynolds operator
implies that $S^{\overline{G}}$ is Cohen--Macaulay.
Consequently,
\begin{equation}
P_{\mathrm{torus}}^G
=
S^{\overline{G}}
\end{equation}
is Cohen--Macaulay.
\end{proof}

\begin{lemma}[Noether Normalization, \citet{mumfordRedBookVarieties2004}, Sec. 1.1]
\label{lem:noether_normalization}
Let $R$ be a finitely generated commutative integral domain over a field $k$.
Assume that the transcendence degree of $R$ over $k$ is $n$, i.e., the maximal number of algebraically independent elements in $R$ over $k$ is $n$.
Then there exist $n$ algebraically independent elements $\theta_1,\dots,\theta_n\in R$ such that $R$ is a finitely generated module over the subring $k[\theta_1,\dots,\theta_n]$.
\end{lemma}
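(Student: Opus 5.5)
We prove the lemma by induction on the number $m$ of $k$-algebra generators of $R$, using Nagata's change of variables. Since the applications in this paper only need $k=\mathbb{R}$ or $\mathbb{C}$, we could use linear changes of variables, but Nagata's trick works over any field and costs nothing extra.

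Write $R=k[x_1,\dots,x_m]$. If $x_1,\dots,x_m$ are algebraically independent, then $m=n$ and we take $\theta_i=x_i$; the statement holds trivially. This also covers the base case $m=0$.

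Otherwise there is a nonzero polynomial $F\in k[T_1,\dots,T_m]$ with $F(x_1,\dots,x_m)=0$. Choose an integer $r$ larger than every exponent appearing in $F$, and set
\begin{equation}
y_i:=x_i-x_1^{r^{i-1}},\quad i=2,\dots,m.
\end{equation}
Substituting $x_i=y_i+x_1^{r^{i-1}}$ turns each monomial $c_{\mathbf a}T^{\mathbf a}$ of $F$ into a polynomial in $x_1$ with coefficients in $k[y_2,\dots,y_m]$. Its leading term in $x_1$ is $c_{\mathbf a}x_1^{a_1+a_2r+\cdots+a_mr^{m-1}}$. Because $r$ exceeds every exponent, these weighted degrees are distinct for distinct $\mathbf a$ by uniqueness of base-$r$ expansions. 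Hence exactly one monomial attains the maximal degree, and after dividing by its nonzero coefficient we obtain a monic equation for $x_1$ over $R':=k[y_2,\dots,y_m]$.

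It follows that $x_1$ is integral over $R'$, and so are all $x_i=y_i+x_1^{r^{i-1}}$. Therefore $R$ is a finitely generated $R'$-module, generated by the powers $x_1^j$ with $j$ below the degree of that monic equation.

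The ring $R'$ is a subring of a domain, hence a domain, and it is generated by $m-1$ elements. Integrality preserves transcendence degree, so $\operatorname{trdeg}_k R'=\operatorname{trdeg}_k R=n$. By the induction hypothesis there exist algebraically independent $\theta_1,\dots,\theta_n\in R'$ such that $R'$ is module-finite over $k[\theta_1,\dots,\theta_n]$. Module-finiteness is transitive: products of the generators in the two layers generate $R$ over $k[\theta_1,\dots,\theta_n]$. This completes the induction.

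The step requiring the most care is the degree bookkeeping that makes the polynomial monic in $x_1$, namely the distinctness of the weighted degrees $a_1+a_2r+\cdots+a_mr^{m-1}$. I would check it directly from the choice of $r$. The claim that the number of $\theta_i$ equals $n$ follows because an integral extension is algebraic, so transcendence degree is unchanged.
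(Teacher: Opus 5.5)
Your proof is correct. It is the standard Nagata-substitution argument, and each step checks out:
\begin{itemize}
\item Since $0\le a_i<r$, the weighted exponents $a_1+a_2r+\cdots+a_mr^{m-1}$ are distinct.
\item Hence $F(X_1,Y_2+X_1^{r},\dots,Y_m+X_1^{r^{m-1}})$, viewed as a polynomial in $X_1$ over $k[Y_2,\dots,Y_m]$, has a nonzero constant as leading coefficient.
\item Since $R=R'[x_1]$, $R$ is then module-finite over $R'$.
\item Integrality keeps $\mathrm{trdeg}_k R'=n$, so the induction on the number of generators closes.
\end{itemize}

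The paper gives no proof of its own; it imports the lemma by citation to Mumford's Red Book, so there is no competing argument to compare. Your write-up makes this step self-contained. It also yields directly the module-finite form the paper states, with explicit generators $x_1^j$ and transitivity, rather than only integrality.

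Another common textbook route uses a linear change of variables $y_i=x_i-\lambda_i x_1$ with generic scalars $\lambda_i$, which requires $k$ to be infinite. Your base-$r$ trick works over every field. That generality is not needed here, since the paper only applies the lemma with $k=\mathbb{R}$, but as you note it costs nothing.
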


\begin{lemma}
\label{lem:trdeg_torus_invariant}
For every crystallographic group $G$ acting on
$\mathbb{R}^n$, the invariant ring
$P_{\mathrm{torus}}^G$ is a finitely generated $\mathbb R$-domain of transcendence degree $n$ over $\mathbb R$.
\end{lemma}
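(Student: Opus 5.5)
Set $S:=P_{\mathrm{torus}}$ and $\overline{G}:=G/T(G)$. As in the proof of \cref{lem:torus_cm}, $T(G)$ acts trivially on $S$, so $P_{\mathrm{torus}}^G=S^{\overline{G}}$ with $\overline{G}$ finite. This holds also for non-symmorphic $G$: a fractional translation multiplies $X^\lambda$ by a root of unity but still maps the trigonometric algebra into itself. The plan is to prove three properties of $S^{\overline{G}}$ in turn: it is a domain, it is finitely generated, and its transcendence degree is $n$.

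\textbf{Domain.} $S^{\overline{G}}$ is a subring of $S$, so it suffices that $S$ is a domain. $S$ embeds in its complexification $\mathbb{C}\otimes_{\mathbb{R}}S\cong\mathbb{C}[e_1^{\pm1},\ldots,e_n^{\pm1}]$. This Laurent polynomial ring is a localization of a polynomial ring, hence a domain. Alternatively, one can view elements of $S$ as real-analytic functions on the connected manifold $\mathbb{R}^n$, and a product of two such functions vanishes identically only if one factor does.

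\textbf{Finite generation.} Here I would follow the classical Noether argument. By the proof of \cref{lem:torus_cm}, $S=\mathbb{R}[c_1,s_1,\ldots,c_n,s_n]$ is integral over $S^{\overline{G}}$, each generator being a root of its orbit polynomial $Q_f(t)$. Let $A\subseteq S^{\overline{G}}$ be the subalgebra generated by all coefficients of $Q_{c_j}$ and $Q_{s_j}$, $j=1,\ldots,n$; this is a finitely generated $\mathbb{R}$-algebra. The generators of $S$ are integral over $A$, so $S$ is a finitely generated $A$-module. Since $A$ is Noetherian, the submodule $S^{\overline{G}}$ is also a finitely generated $A$-module, and hence $S^{\overline{G}}$ is a finitely generated $\mathbb{R}$-algebra. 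The Reynolds operator from \cref{lem:torus_cm} could be used instead, but the Noetherian-submodule argument avoids it.

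\textbf{Transcendence degree.} The decomposition \eqref{eq:torus_free_cosine_module} shows that $S$ is module-finite over $\mathbb{R}[c_1,\ldots,c_n]$, with $c_1,\ldots,c_n$ algebraically independent. Therefore $\operatorname{trdeg}_{\mathbb{R}}S=n$. Since $S$ is integral over $S^{\overline{G}}$, the fraction field of $S$ is algebraic over that of $S^{\overline{G}}$, and so $\operatorname{trdeg}S^{\overline{G}}=\operatorname{trdeg}S=n$. As a concrete check, the elementary symmetric functions of the orbit $\{\gamma(c_j)\}$ supply enough invariants that each $c_j$ is algebraic over them.

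The main obstacle is one justification in the finite-generation step: the integrality in \cref{lem:torus_cm} requires $Q_f(t)\in S^{\overline{G}}[t]$, which depends on the induced action of $\overline{G}$ on $S$ being by $\mathbb{R}$-algebra automorphisms. For non-symmorphic $G$ I would verify this explicitly. For $g=(A,\mathbf{t})$, substitution gives $X^\lambda\circ g^{-1}=\exp\bigl(-2\pi\mathrm{i}\langle\lambda,A^{-1}\mathbf{t}\rangle\bigr)X^{A^{-\top}\lambda}$. This map is multiplicative, commutes with $\tau$, and depends only on the class of $g$ in $\overline{G}$. Once this is established, the remaining steps are routine.
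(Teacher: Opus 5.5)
Your proof is correct and follows the paper's argument. The transcendence degree comes from module-finiteness of $S$ over $\mathbb{R}[c_1,\ldots,c_n]$ together with integrality of $S$ over the invariants via orbit polynomials, and your Noetherian-submodule argument for finite generation is the proof of Prop.~7.8 of Atiyah--Macdonald, which the paper cites; your explicit checks that $S$ is a domain and that non-symmorphic elements act by $\mathbb{R}$-algebra automorphisms supply points the paper leaves implicit.
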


\begin{proof}
Set
$S:=P_{\mathrm{torus}}$ and
$A:=P_{\mathrm{torus}}^G$.
By \cref{eq:torus_free_cosine_module}, $S$ is a finite free
module over $\mathbb{R}[c_1,\ldots,c_n]$. Hence
$c_1,\ldots,c_n$ are algebraically independent, and the
fraction field
$\operatorname{Frac}(S)$ is algebraic over
$\mathbb{R}(c_1,\ldots,c_n)$. Indeed, each $s_j$ satisfies $s_j^2=1-c_j^2$. Therefore, the transcendence degree $\mathbb{R}$ satisfies
\begin{equation}
\operatorname{trdeg}_{\mathbb{R}}
\operatorname{Frac}(S)
=
n.
\label{eq:trdeg_full_torus}
\end{equation}

As shown in the proof of \cref{lem:torus_cm}, the
finite group $\overline{G}=G/T(G)$ acts on $S$, and every
$f\in S$ satisfies the monic orbit polynomial
\begin{equation}
\prod_{\gamma\in\overline{G}}
\bigl(t-\gamma(f)\bigr)
\in A[t].
\end{equation}
Thus $S$ is integral over $A$. It follows that the field
extension $\operatorname{Frac}(A)\subseteq \operatorname{Frac}(S)$ is algebraic. Algebraic field extensions do not change
transcendence degree, so by
\cref{eq:trdeg_full_torus},
\begin{equation}
\operatorname{trdeg}_{\mathbb{R}}A
=
\operatorname{trdeg}_{\mathbb{R}}S
=
n.
\end{equation}

Finally, $S$ is generated over $A$ by
$c_1,s_1,\ldots,c_n,s_n$, each of which is integral over
$A$. Hence $S$ is a finite $A$-module. Since $S$ is a finitely generated $\mathbb{R}$-algebra, Prop.~7.8 in \citet{atiyahIntroductionCommutativeAlgebra1994} implies that $A$ is also a finitely generated $\mathbb{R}$-algebra.
\end{proof}

\begin{theorem}[Hironaka Decomposition]
\label{thm:hironaka_decomposition_poly}
For $P_{\mathrm{torus}}^G$, there always exist $n$ primary invariants $\theta_1, \dots, \theta_n$ and a set of secondary invariants $g_1, \dots, g_r$, such that any invariant polynomial $f$ can be uniquely decomposed as
\begin{equation}
    f(\mathbf{x}) = \sum_{i=1}^{r} p_i(\theta_1, \dots, \theta_n)\, g_{i}(\mathbf{x}), 
\end{equation}
where each $p_i$ is a polynomial in $\mathbb{R}[\theta_1,\dots,\theta_n]$.
In particular, for affine reflection groups, we may take $r = 1$ and $g_{1}(\mathbf{x}) = 1$.
\end{theorem}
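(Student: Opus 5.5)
The plan assembles the lemmas just established. Set $A:=P_{\mathrm{torus}}^G$. By \cref{lem:trdeg_torus_invariant}, $A$ is a finitely generated $\mathbb{R}$-domain of transcendence degree $n$; it is a domain because it is a subring of the domain $P_{\mathrm{torus}}$.

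\cref{lem:noether_normalization} then supplies algebraically independent $\theta_1,\dots,\theta_n\in A$ such that $A$ is a finitely generated module over $P:=\mathbb{R}[\theta_1,\dots,\theta_n]$. These $\theta_i$ are the primary invariants.

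By \cref{lem:torus_cm}, $A$ is Cohen--Macaulay. So \cref{prop:cm_and_free}, applied with this $A$ and $P$, shows that $A$ is a free $P$-module. Since $A$ is module-finite over $P$, the free module has finite rank $r$. Choosing a $P$-basis $g_1,\dots,g_r$ gives the secondary invariants. Freeness means every $f\in A$ can be written as $\sum_i p_i g_i$ with $p_i\in P$, and that these coefficients are unique. Each $p_i$ is a polynomial in the $\theta_j$ because $P$ is a polynomial ring in them.

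The step most in need of care is the Cohen--Macaulay input. \cref{lem:torus_cm} rests on descent along an integral extension with a Reynolds operator; I would take it as given. I would only check that \cref{prop:cm_and_free} applies to an arbitrary Noether normalization, which holds because its hypotheses are exactly "affine domain" and "module-finite over a polynomial subalgebra". The absence of homogeneity causes no difficulty, since neither cited result needs a grading.

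For the affine reflection group case, write $W_a=W\ltimes L(\Phi^\vee)$. The translation lattice is the coroot lattice, so its reciprocal lattice is the weight lattice $\Lambda(\Phi)$, and $P(W_a)=W$ acts on it as the Weyl group. By \cref{cor:torus_invariants_poly_iff_reflection}, which rests on \cref{lem:symmorphic_torus_invariants}, \cref{prop:weyl_weight_lattice_invariants} and \cref{lem:real_weyl_weight_lattice_invariants}, $A$ is itself a polynomial algebra $\mathbb{R}[\theta_1,\dots,\theta_n]$. Taking these generators as primary invariants gives $A=P$, so $r=1$ and $g_1=1$.
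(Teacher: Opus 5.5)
Your proposal is correct and follows the paper's proof essentially step for step. Both use \cref{lem:trdeg_torus_invariant} and \cref{lem:noether_normalization} for the primary invariants, \cref{lem:torus_cm} and \cref{prop:cm_and_free} for a finite free basis of secondary invariants, and \cref{cor:torus_invariants_poly_iff_reflection} for the affine reflection case; your observation that the reciprocal lattice of the coroot lattice is the weight lattice makes explicit an identification the paper uses only implicitly when it invokes that corollary.
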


\begin{proof}
For affine reflection groups, the polynomial structure of the invariant ring is given by \cref{cor:torus_invariants_poly_iff_reflection}, which yields $r=1$ and $g_1(\mathbf{x})=1$. For other case, set $A:=P_{\mathrm{torus}}^G$. By \cref{lem:trdeg_torus_invariant}, $A$ is a finitely generated
$\mathbb{R}$-domain satisfying $\operatorname{trdeg}_{\mathbb{R}}A=n$. \cref{lem:noether_normalization} therefore gives algebraically independent
elements $\theta_1,\ldots,\theta_n\in A$ such that $A$ is a
finite module over
\begin{equation}
P:=\mathbb{R}[\theta_1,\ldots,\theta_n].
\end{equation}

By \cref{lem:torus_cm}, $A$ is Cohen--Macaulay.
Hence, by \cref{prop:cm_and_free}, $A$ is a finite free
$P$-module. Choose a $P$-basis $g_1,\ldots,g_r$. Then every
$f\in A$ has a unique decomposition
\begin{equation}
f
=
\sum_{i=1}^r
p_i(\theta_1,\ldots,\theta_n)g_i,
\end{equation}
where $p_i\in\mathbb{R}[t_1,\ldots,t_n]$.

\end{proof}

\subsection{Choice of Primary Invariants}
\label{sec:choice_of_primary_inv}

\begin{proposition}
Let $G\leq K$ be crystallographic groups. Assume that
$P_{\mathrm{torus}}^K$ is a polynomial algebra over $\mathbb R$.
Then $P_{\mathrm{torus}}^G$ is a finitely generated free
$P_{\mathrm{torus}}^K$-module, and the rank of this module equals
$[K:G]$.
\end{proposition}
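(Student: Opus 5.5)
Set $S:=P_{\mathrm{torus}}$ (built from a common lattice; see below), $A:=P_{\mathrm{torus}}^G$ and $B:=P_{\mathrm{torus}}^K=\mathbb{R}[\theta_1,\ldots,\theta_n]$. The argument has three steps: show $A$ is finite over $B$, deduce freeness from the Cohen--Macaulay property, and compute the rank by passing to fraction fields and using Galois theory.

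\textbf{Setting up a common torus algebra.} The rings $P_{\mathrm{torus}}^G$ and $P_{\mathrm{torus}}^K$ are a priori defined via different lattices $T(G)\subseteq T(K)$. I read both as algebras of continuous functions on $\mathbb{R}^n$. Since $T(G)$ has finite index in $T(K)$, every $T(K)$-periodic trigonometric polynomial is also $T(G)$-periodic. So I take $S$ to be the torus algebra of $T(G)$. Then $A=S^{G}$ and $B=S^{K}\subseteq A$, and $K$ acts on $S$ through the finite group $\overline K:=K/T(G)$. Here $T(G)$ need not be normal in $K$. I would fix this by replacing $T(G)$ with the normal core $N:=\bigcap_{k\in K}kT(G)k^{-1}$, which is still a full-rank lattice of finite index. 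Working with $S=P_{\mathrm{torus}}(N)$, the group $\overline K=K/N$ is finite, acts faithfully on $S$ (a translation acting trivially on all characters of $N^\ast$ lies in $N$), and $\overline G:=G/N$ is a subgroup of index $[K:G]$.

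\textbf{Finiteness and freeness.} Since $S$ is integral over $S^{\overline K}=B$ (orbit polynomials, as in \cref{lem:torus_cm}) and is finitely generated as an algebra, $S$ is a finite $B$-module. Because $B$ is Noetherian, the submodule $A$ is also finite over $B$. Now $A$ is an affine $\mathbb{R}$-domain, by \cref{lem:trdeg_torus_invariant} applied to $\overline G$ acting on $S$; that argument only uses finiteness of the acting group. $A$ is Cohen--Macaulay by the Reynolds-operator argument of \cref{lem:torus_cm}, and $B$ is a polynomial subalgebra over which $A$ is module-finite. Hence \cref{prop:cm_and_free} gives that $A$ is a finitely generated free $B$-module.

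\textbf{Rank, the main obstacle.} The rank of a free module equals $\dim_{\operatorname{Frac}(B)}\bigl(A\otimes_B\operatorname{Frac}(B)\bigr)$. Since $A$ is a domain finite over $B$, this equals $[\operatorname{Frac}(A):\operatorname{Frac}(B)]$. Let $L=\operatorname{Frac}(S)$. The finite group $\overline K$ acts faithfully on $S$, hence on $L$. By Artin's theorem $L/L^{\overline K}$ is Galois with group $\overline K$, and $L^{\overline H}=\operatorname{Frac}(S^{\overline H})$ for $\overline H\in\{\overline G,\overline K\}$. The latter holds because any fraction can be rewritten with an invariant denominator by multiplying numerator and denominator by the conjugates of the denominator. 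By the Galois correspondence,
\begin{equation}
[\operatorname{Frac}(A):\operatorname{Frac}(B)]=[L^{\overline G}:L^{\overline K}]=[\overline K:\overline G]=[K:G].
\end{equation}
The delicate points are faithfulness of the action and the identification of invariant fraction fields; I would verify faithfulness explicitly via characters $X^\lambda$, $\lambda\in N^\ast$.
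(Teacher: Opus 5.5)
Your proposal is correct and follows essentially the same route as the paper. Both arguments pass to a common translation lattice that is normal in $K$, get finiteness of $P_{\mathrm{torus}}^G$ over $P_{\mathrm{torus}}^K$ from integrality plus Noetherianity, get freeness from the Cohen--Macaulay criterion, and compute the rank as the Galois degree $[L^{\overline G}:L^{\overline K}]=[\overline K:\overline G]=[K:G]$. The only cosmetic difference is the choice of common lattice: the paper uses $mT(K)$ with $m=[T(K):T(G)]$, whereas you use the normal core of $T(G)$ in $K$; both are full-rank, contained in $T(G)$, and $K$-stable, so either works.
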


\begin{proof}

Since $G\leq K$, their translation lattices satisfy
$T(G)\leq T(K)$, with finite index. Set
\begin{equation}
    m:=[T(K):T(G)],
    \quad
    T:=mT(K).
\end{equation}
Since the finite group $T(K)/T(G)$ has order $m$, one has
$T=mT(K)\subseteq T(G)$. Moreover, every element of $K$
preserves $T(K)$ and therefore also preserves $mT(K)$.
Hence $T$ is normal in $K$, and thus also in $G$. In
particular, $T$ is a common finite-index translation
subgroup of $G$ and $K$. 

Let $R$ be the torus algebra associated with $T$, and set $\overline{G}:=G/T$ and $\overline{K}:=K/T$. Then $\overline{G}$ and $\overline{K}$ are finite groups acting faithfully on $R$, with
$\overline{G}\leq\overline{K}$, and $R^G=R^{\overline{G}}$ and $R^K=R^{\overline{K}}$. These invariant rings coincide with
$P_{\mathrm{torus}}^G$ and $P_{\mathrm{torus}}^K$,
respectively. Since $\overline{K}$ is finite, $R$ is module-finite over
$R^{\overline{K}}$. Therefore,
$R^{\overline{G}}\subseteq R$ is also a finite
$R^{\overline{K}}$-module, because
$R^{\overline{K}}$ is Noetherian. By
\cref{lem:torus_cm}, $R^{\overline{G}}$ is
Cohen--Macaulay, while $R^{\overline{K}}$ is polynomial.
Hence \cref{prop:cm_and_free} shows that
$R^{\overline{G}}$ is a finite free
$R^{\overline{K}}$-module.

Let $F:=\operatorname{Frac}(R)$. For a finite group acting
faithfully on a domain, finite-group Galois theory gives
\begin{equation}
    [F:F^{\overline{G}}]=|\overline{G}|,
    \quad
    [F:F^{\overline{K}}]=|\overline{K}|.
\end{equation}
Moreover,
$\operatorname{Frac}(R^{\overline{G}})=F^{\overline{G}}$
and
$\operatorname{Frac}(R^{\overline{K}})=F^{\overline{K}}$.
Thus,
\begin{equation}
\begin{aligned}
\left[
\operatorname{Frac}(R^{\overline{G}}):
\operatorname{Frac}(R^{\overline{K}})
\right]
=
\frac{|\overline{K}|}{|\overline{G}|}
=
[\overline{K}:\overline{G}]
=
[K:G].
\end{aligned}
\end{equation}

A basis of $R^{\overline{G}}$ as an
$R^{\overline{K}}$-module induces a spanning set of
$\operatorname{Frac}(R^{\overline{G}})$ as a vector space
over $\operatorname{Frac}(R^{\overline{K}})$. Since the
dimension of this vector space equals
\begin{equation}
\left[
\operatorname{Frac}(R^{\overline{G}}):
\operatorname{Frac}(R^{\overline{K}})
\right]
=
[K:G],
\end{equation}
any generating set has cardinality at least $[K:G]$.
Since the module is free, its basis elements remain
$\operatorname{Frac}(R^{\overline{K}})$-linearly
independent and hence form a vector-space basis.
Therefore, the number of basis elements is exactly
$[K:G]$.

\end{proof}

If $G$ admits an affine reflection supergroup $W_a$, then by \cref{cor:torus_invariants_poly_iff_reflection} the invariant ring of $W_a$ has a polynomial structure. Consequently we obtain that the primary invariants in \cref{thm:hironaka_decomposition_poly} can be chosen as the generator of the polynomial ring.

\begin{corollary}
\label{cor:hironaka_decomposition_rank}
Let $G$ be a n-dimensional crystallographic group. If $G$ admits an affine reflection group $W_a$ as a supergroup, then one can choose $\theta_1,\ldots,\theta_n$ to be the primary invariants of $P_{\mathrm{torus}}^{W_a}$. In this case, the number of secondary invariants $r$ is exactly the index $[W_a:G]$.
\end{corollary}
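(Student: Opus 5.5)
The plan is to obtain the corollary directly from the preceding proposition, applied with $K=W_a$. The only input it needs beyond the group inclusion is that $P_{\mathrm{torus}}^{W_a}$ is a polynomial algebra over $\mathbb{R}$ in $n$ variables.

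\textbf{Step 1: polynomiality of the affine reflection invariants.} Write $W_a=W(\Phi)\ltimes L(\Phi^\vee)$. I would check that the induced action of $P(W_a)=W(\Phi)$ on the reciprocal lattice $T(W_a)^\ast=L(\Phi^\vee)^\ast$ is the Weyl group action on the weight lattice $\Lambda(\Phi)$. This is the standard duality: the dual of the coroot lattice is the weight lattice, and $W$ acts orthogonally, so it preserves the pairing. Since $W_a$ is symmorphic, \cref{cor:torus_invariants_poly_iff_reflection} then gives
\[
P_{\mathrm{torus}}^{W_a}=\mathbb{R}[\theta_1,\ldots,\theta_n].
\]
Concretely, the $\theta_i$ are the real generators built from the orbit sums $\vartheta_i$ in \cref{lem:real_weyl_weight_lattice_invariants}. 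These $\theta_i$ are algebraically independent and lie in $P_{\mathrm{torus}}^{G}$, because $G\le W_a$.

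\textbf{Step 2: freeness and rank.} The proposition just proved, applied to $G\le W_a$, states that $P_{\mathrm{torus}}^G$ is a finite free module over $P_{\mathrm{torus}}^{W_a}=\mathbb{R}[\theta_1,\dots,\theta_n]$ of rank $[W_a:G]$. Let $g_1,\dots,g_r$ be a basis. Then every $f\in P_{\mathrm{torus}}^G$ can be written uniquely as $f=\sum_i p_i(\theta_1,\dots,\theta_n)\,g_i$. This is exactly a Hironaka decomposition in the sense of \cref{thm:hironaka_decomposition_poly}: the $\theta_i$ serve as primary invariants, with $A$ module-finite over them, and $r=[W_a:G]$.

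\textbf{Step 3: the translation-lattice subtlety.} This is the step I expect to need the most care. The torus algebra $P_{\mathrm{torus}}$ depends on the lattice, and $T(G)$ may be a proper sublattice of $T(W_a)$. We need $P_{\mathrm{torus}}^{W_a}$, formed with respect to $T(W_a)$, to sit inside the algebra built on $T(G)$ and to agree with the ring used in the proposition. The proposition handles this by passing to a common lattice $T=mT(W_a)$ and its torus algebra $R$, using $R^{\overline{W_a}}=P_{\mathrm{torus}}^{W_a}$.

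I would justify that identification by noting two things. First, functions periodic for a coarser lattice are automatically periodic for a finer one. Second, $W_a$-invariance forces periodicity under $T(W_a)$, so the $W_a$-invariant trigonometric polynomials are the same whichever lattice is used. With this in place, polynomiality from Step 1 transfers to $R^{\overline{W_a}}$, and the rank statement follows.
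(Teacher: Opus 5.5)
Your proposal is correct and follows the paper's route. You get polynomiality of $P_{\mathrm{torus}}^{W_a}$ from \cref{cor:torus_invariants_poly_iff_reflection}, and you usefully spell out that $L(\Phi^\vee)^\ast=\Lambda(\Phi)$. You then apply the preceding proposition with $K=W_a$ to get freeness of rank $[W_a:G]$. Your Step 3 makes explicit the identification $R^{\overline{W_a}}=P_{\mathrm{torus}}^{W_a}$, which the paper's proof of that proposition simply asserts.

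One wording slip: it is functions periodic for the finer lattice $T(W_a)$ that are automatically periodic for its coarser sublattices $T(G)$ and $mT(W_a)$, not the other way round. The containment you actually use is the correct one.
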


\subsection{Approximation of Continuous Functions}
\label{sec:app_c_function}

\begin{theorem}
\label{thm:hironaka_func_with_higher_sym}
Let $G$ be an $n$-dimensional
crystallographic group. If $G$ admits an affine reflection group $W_{a}$ as a supergroup, there exist fixed
$G$-invariant functions $g_1,\dots,g_r$, where
$r=[W_a:G]$, such that for every
$f\in C_G(\mathbb R^n)$ and every $\epsilon>0$, there exist
functions $h_1,\dots,h_r\in C_{W_a}(\mathbb R^n)$ satisfying
\begin{equation}
    \int_{\Omega}
    \left|
        f(\mathbf{x})-\sum_{i=1}^{r}h_i(\mathbf{x})g_i(\mathbf{x})
    \right|^2
    \dd x
    <\epsilon,
\end{equation}
where $\Omega$ is a fundamental cell.
\end{theorem}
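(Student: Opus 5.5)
The plan is to reduce the statement to the polynomial Hironaka decomposition of \cref{cor:hironaka_decomposition_rank} and then use density of trigonometric polynomials. First fix the conjugation so that $G\le W_a$, and let $T$ be a common finite-index translation lattice, as in the proof of the proposition preceding \cref{cor:hironaka_decomposition_rank}. Then take $\theta_1,\dots,\theta_n$ to be the real polynomial generators of $P_{\mathrm{torus}}^{W_a}$ given by \cref{cor:torus_invariants_poly_iff_reflection} and \cref{lem:real_weyl_weight_lattice_invariants}. Take $g_1,\dots,g_r$ to be a free basis of $P_{\mathrm{torus}}^G$ over $\mathbb{R}[\theta_1,\dots,\theta_n]$, with $r=[W_a:G]$ by \cref{cor:hironaka_decomposition_rank}. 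These $g_i$ are trigonometric polynomials, hence continuous and $G$-invariant, and they are fixed independently of $f$.

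Next I approximate $f\in C_G(\mathbb{R}^n)$ by an element of $P_{\mathrm{torus}}^G$. Since $f$ is continuous and $T(G)$-periodic, it is $T$-periodic. By Stone--Weierstrass (or Fejér means) on the torus $\mathbb{R}^n/T$, there is a real trigonometric polynomial $q\in P_{\mathrm{torus}}$ with $\sup|f-q|<\delta$. Apply the Reynolds operator $\rho_{\overline G}$ from the proof of \cref{lem:torus_cm}, i.e. average $q$ over coset representatives of $\overline G=G/T$. Because $f$ is $G$-invariant and each $g\in G$ is an isometry, $\sup|f-\rho_{\overline G}(q)|\le\sup|f-q|<\delta$. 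The averaged function $\tilde q:=\rho_{\overline G}(q)$ is still a trigonometric polynomial, because $G$ maps characters of $T^\ast$ to scalar multiples of characters; the translation parts of non-symmorphic elements only contribute phases. So $\tilde q\in P_{\mathrm{torus}}^G$.

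By \cref{thm:hironaka_decomposition_poly} with the primary invariants chosen above, $\tilde q=\sum_{i=1}^r p_i(\theta_1,\dots,\theta_n)g_i$ exactly. Set $h_i:=p_i(\theta_1,\dots,\theta_n)$. Each $h_i$ is a polynomial in $W_a$-invariant trigonometric polynomials, so $h_i\in C_{W_a}(\mathbb{R}^n)$. Then
\[
\int_\Omega\Bigl|f-\sum_{i=1}^r h_ig_i\Bigr|^2\,\dd\mathbf{x}\le|\Omega|\,\delta^2<\epsilon
\]
once $\delta$ is small enough. This gives a uniform bound, which is stronger than the $L^2$ bound required.

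The main obstacle is bookkeeping, not analysis. One must check that the torus algebra in \cref{thm:hironaka_decomposition_poly} and \cref{cor:hironaka_decomposition_rank} can be taken relative to the common lattice $T$, rather than $T(G)$ or $T(W_a)$. The earlier proposition already works with $R$ built from $T$, and $P_{\mathrm{torus}}^G=R^{\overline G}$ does not depend on this choice because $G$-invariance forces $T(G)$-periodicity. One must also check that the non-symmorphic case is covered: \cref{lem:torus_cm} is stated for every crystallographic group, so Cohen--Macaulayness, and with it the freeness used in the corollary, holds for non-symmorphic $G$ too. A second point to confirm is that $G$ acts on $P_{\mathrm{torus}}$ by genuine algebra automorphisms, so that Reynolds averaging preserves the trigonometric-polynomial class.
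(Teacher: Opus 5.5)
Your proposal is correct and follows essentially the same route as the paper's proof: approximate $f$ by a trigonometric polynomial, apply the Reynolds operator to land in $P_{\mathrm{torus}}^G$, and then invoke the exact free decomposition of \cref{cor:hironaka_decomposition_rank}, with primary invariants taken from $P_{\mathrm{torus}}^{W_a}$. The only difference is that you approximate uniformly (Stone--Weierstrass) and use the sup-norm contraction of averaging, whereas the paper uses $L^2$ density and $L^2$ contractivity of the Reynolds operator; your version yields a slightly stronger uniform estimate at no extra cost.
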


\begin{proof}
Let $\Gamma_G$ denote the finite effective image of $G$ acting
on the torus associated with the common translation lattice, and
let $\Omega$ be a fundamental cell. Since trigonometric
polynomials are dense in $L^2(\Omega)$, there exists
$p\in P_{\mathrm{torus}}$ such that
\begin{equation}
\|f-p\|_{L^2(\Omega)}<\sqrt{\epsilon}.
\end{equation}

Apply the Reynolds operator associated with $\Gamma_G$ and set
\begin{equation}
p_G
:=
\frac{1}{|\Gamma_G|}
\sum_{\gamma\in\Gamma_G}\gamma(p).
\end{equation}
Then $p_G\in P_{\mathrm{torus}}^G$. Since $f$ is
$G$-invariant and the action of $\Gamma_G$ preserves the
measure on the torus, the Reynolds operator is contractive in
$L^2(\Omega)$. Hence
\begin{equation}
\begin{aligned}
\|f-p_G\|_{L^2(\Omega)}
&=
\left\|
\frac{1}{|\Gamma_G|}
\sum_{\gamma\in\Gamma_G}
\gamma(f-p)
\right\|_{L^2(\Omega)}\\
&\leq
\frac{1}{|\Gamma_G|}
\sum_{\gamma\in\Gamma_G}
\|\gamma(f-p)\|_{L^2(\Omega)}\\
&=
\|f-p\|_{L^2(\Omega)}
<
\sqrt{\epsilon}.
\end{aligned}
\end{equation}

By \cref{cor:hironaka_decomposition_rank}, the invariant
polynomial $p_G$ admits an exact decomposition
\begin{equation}
p_G
=
\sum_{i=1}^r h_i g_i,
\quad
h_i\in P_{\mathrm{torus}}^{W_a}
\subseteq C^{W_a}(\mathbb{R}^n),\quad r = [W_a:G].
\end{equation}
Consequently,
\begin{equation}
\int_{\Omega}
\left|
f(\mathbf{x})
-
\sum_{i=1}^r h_i(\mathbf{x})g_i(\mathbf{x})
\right|^2
\,d\mathbf{x}
=
\|f-p_G\|_{L^2(\Omega)}^2
<
\epsilon.
\end{equation}
\end{proof}

\begin{corollary}
\label{cor:hironaka_approximation_n_eq_2_3}
Let $n\in\{2,3\}$ and let $G$ be an $n$-dimensional
crystallographic group. After replacing $G$ by a linearly
conjugate realization, there exists an affine reflection group
$W_a$ such that $G\leq W_a$. Consequently, there exist fixed
$G$-invariant functions $g_1,\dots,g_r$, where
$r=[W_a:G]$, such that for every
$f\in C_G(\mathbb R^n)$ and every $\epsilon>0$, there exist
functions $h_1,\dots,h_r\in C_{W_a}(\mathbb R^n)$ satisfying
\begin{equation}
    \int_{\Omega}
    \left|
        f(\mathbf{x})-\sum_{i=1}^{r}h_i(\mathbf{x})g_i(\mathbf{x})
    \right|^2
    \dd x
    <\epsilon,
\end{equation}
where $\Omega$ is a fundamental cell.
\end{corollary}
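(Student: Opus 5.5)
This corollary follows by combining the existence of an affine reflection supergroup in dimensions two and three with the general approximation result. The only care needed is with the linear conjugation.

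First, by \cref{thm:affine_reflection_supergroup_n_eq_2_3} (equivalently \cref{thm:existence_of_higher_Wa_n=2} when $n=2$), there is $A\in\GL(n,\mathbb R)$ and an affine reflection group $W_a$ with $AGA^{-1}\le W_a$. Replace $G$ by $G':=AGA^{-1}$. This is again a crystallographic group in $\mathbb R^n$: it is discrete and contains $n$ independent translations. The only subtlety is that $G'$ may not sit inside $E(n)$ for the standard metric. However, it is orthogonal with respect to the transported inner product, and this is all the earlier invariant-theoretic arguments use, since they only involve the lattice and the finite point group acting on the torus algebra. With this in mind, the hypothesis of \cref{thm:hironaka_func_with_higher_sym} holds for $G'\le W_a$.

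Second, apply \cref{thm:hironaka_func_with_higher_sym} to $G'$. This yields fixed $G'$-invariant functions $g_1,\dots,g_r$ with $r=[W_a:G']$. For any $f\in C_{G'}(\mathbb R^n)$ and $\epsilon>0$, it also yields $h_i\in C_{W_a}(\mathbb R^n)$ with the stated $L^2(\Omega)$ bound on a fundamental cell. Inside that theorem, the count $r$ comes from \cref{cor:hironaka_decomposition_rank} via the polynomial structure of $P_{\mathrm{torus}}^{W_a}$ given by \cref{cor:torus_invariants_poly_iff_reflection}. The decomposition of the Reynolds-projected trigonometric approximant comes from \cref{thm:hironaka_decomposition_poly}.

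Third, the phrase ``after replacing $G$ by a linearly conjugate realization'' means the conclusion is stated for $G'$, so nothing more is strictly needed. If one prefers statements about the original $G$, pull back: $f\mapsto f\circ A^{-1}$ is a bijection $C_G\to C_{G'}$, and $\Omega'=A\Omega$ is a fundamental cell for $G'$. The change of variables multiplies the integral by $|\det A|$, which is absorbed by rescaling $\epsilon$. The pulled-back coefficients $h_i\circ A$ are then invariant under $A^{-1}W_aA$.

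The main obstacle is the first step, namely checking that the earlier results really apply to the non-Euclidean realization $G'$. I would address it by noting that every ingredient (the torus algebra, Reynolds averaging over the finite quotient, and density of trigonometric polynomials in $L^2(\Omega)$) depends only on the lattice $T(G')$ and the finite linear action. Each of these is preserved under linear conjugation, and the measure-preservation used for contractivity of the Reynolds operator still holds because the finite group acts by lattice automorphisms of determinant $\pm1$.
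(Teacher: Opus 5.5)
Your proposal is correct and follows the same route as the paper, which simply invokes \cref{thm:affine_reflection_supergroup_n_eq_2_3} to obtain $W_a$ and then applies \cref{thm:hironaka_func_with_higher_sym} directly. The subtlety you flag about the conjugated group leaving $E(n)$ does not actually arise: an affine reflection group is by definition a subgroup of $E(n)$, so $AGA^{-1}\le W_a\le E(n)$ automatically. Your pull-back remark in the third step is a harmless addition, consistent with the paper's note on skew reflections in the original coordinates.
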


\begin{proof}
The existence of $W_a$ follows from
\cref{thm:affine_reflection_supergroup_n_eq_2_3}, and the
approximation statement then follows directly from
\cref{thm:hironaka_func_with_higher_sym}.
\end{proof}

\clearpage
\section{Secondary Invariant Construction}
\label{sec:sec_inv_r2}

\subsection{Secondary Invariant for Non-Symmorphic Crystallographic Groups}
\label{sec:sec_inv_non-symmorphic}

The results for symmorphic crystallographic groups are presented in
\citet{kimRingInvariantReal2001}.
Our primary discussion focuses on the treatment of non-symmorphic crystallographic groups.
We adopt the approach from \citet{kimRingInvariantReal2001}.
In the subsequent section, we will introduce a method based on Fourier expansions,
for which a calculation example is provided.

For the case of $n = 2$, the non-symmorphic crystallographic groups are:
$pg$ (a subgroup of $p2mm$ in the $pm$ arithmetic class);
$p2mg$ and $p2gg$ (both subgroups of $p2mm$ in the $p2mm$ arithmetic class);
and $p4gm$ (a subgroup of $p4mm$ in the $p4mm$ arithmetic class).
In our discussion, it is necessary to halve the period of the translation subgroup along the direction corresponding to the glide plane.

The calculation of primary and secondary invariants for a non-symmorphic
crystallographic group $G$ uses a chain
$H\triangleleft G\subset K$, where $K$ is an affine reflection supergroup
and $H$ is a symmorphic subgroup satisfying $[G:H]=2$.
In the first step, we set $A:=P_{\mathrm{torus}}^{K}$, whose generators
serve as the primary invariants, and express
$P_{\mathrm{torus}}^{H}$ as a free $A$-module.
In the second step, we choose $\tau\in G\setminus H$ and compute its action
on an $A$-basis of secondary invariants. Since $\tau\in K$, it fixes $A$
pointwise, while $\tau^{2}\in H$ implies that its induced action on
$P_{\mathrm{torus}}^{H}$ is an involution. Therefore,
\begin{equation}
    P_{\mathrm{torus}}^{G}
    =
    \bigl(P_{\mathrm{torus}}^{H}\bigr)^{\tau},
\end{equation}
so the secondary invariants for $G$ are obtained from the $1$ eigenspace
of this involution. In the examples below, the chosen secondary basis
diagonalizes the action, and it therefore suffices to retain the $[K:G]$
sign-invariant basis elements.
Here, the primes in $p2mm'$ and $p4mm'$ indicate the refined translation
lattices specified in the corresponding cases.

\paragraph{$pg$.}
Choose the affine reflection supergroup $K=p2mm'$, whose translation
lattice is obtained by adjoining the half-translation
$(0,\tfrac12)$. Thus
\begin{equation}
    P_{\mathrm{torus}}^{K}
    =
    \mathbb{R}[c_{1},c(2y)]
    =
    \mathbb{R}[c_{1},c_{2}^{2}].
\end{equation}
The index-$2$ symmorphic subgroup is $H=p1$, and
\begin{equation}
    P_{\mathrm{torus}}^{H}
    =
    \mathbb{R}[c_{1},c_{2}](1,s_{1})(1,s_{2})
    =
    \mathbb{R}[c_{1},c_{2}^{2}](1,c_{2})(1,s_{1})(1,s_{2}).
\end{equation}
Let $g\in G\setminus H$ be the glide reflection
$g(x,y)=(-x,y+\tfrac12)$, acting by
\begin{equation}
    (c_{1},s_{1},c_{2},s_{2})
    \mapsto
    (c_{1},-s_{1},-c_{2},-s_{2}).
\end{equation}
Thus the three factors $c_{2}$, $s_{1}$, and $s_{2}$ are sign-changing,
and the fixed secondary basis consists of their products containing an
even number of these factors. Hence $[K:G]=4$ and
\begin{equation}
    P_{\mathrm{torus}}^{G}
    =
    P_{\mathrm{torus}}^{pg}
    =
    \mathbb{R}[c_{1},c_{2}^{2}]
    \bigl(1,\ s_{1}c_{2},\ s_{1}s_{2},\ c_{2}s_{2}\bigr).
\end{equation}

\paragraph{$p2mg$.}
Choose the affine reflection supergroup $K=p2mm'$, whose translation
lattice is obtained by adjoining the half-translation
$(\tfrac12,0)$. Thus
\begin{equation}
    P_{\mathrm{torus}}^{K}
    =
    \mathbb{R}[c(2x),c_{2}]
    =
    \mathbb{R}[c_{1}^{2},c_{2}].
\end{equation}
The index-$2$ symmorphic subgroup is $H=p2$, and
\begin{equation}
    P_{\mathrm{torus}}^{H}
    =
    \mathbb{R}[c_{1},c_{2}](1,s_{1}s_{2})
    =
    \mathbb{R}[c_{1}^{2},c_{2}](1,c_{1})(1,s_{1}s_{2}).
\end{equation}
Let $m\in G\setminus H$ be the reflection across the line
$x=\tfrac14$,
\begin{equation}
    m(x,y)=\left(\tfrac12-x,y\right),
\end{equation}
which acts by
\begin{equation}
    (c_{1},s_{1},c_{2},s_{2})
    \mapsto
    (-c_{1},s_{1},c_{2},s_{2}).
\end{equation}
In the displayed secondary basis, $c_{1}$ is sign-changing, whereas
$s_{1}s_{2}$ is fixed. Therefore, the fixed basis elements are
$1$ and $s_{1}s_{2}$. Hence $[K:G]=2$ and
\begin{equation}
    P_{\mathrm{torus}}^{G}
    =
    P_{\mathrm{torus}}^{p2mg}
    =
    \mathbb{R}[c_{1}^{2},c_{2}](1,s_{1}s_{2}).
\end{equation}

\paragraph{$p2gg$.}
Choose the affine reflection supergroup $K=p2mm'$, whose translation
lattice is obtained by adjoining the half-translations
$(\tfrac12,0)$ and $(0,\tfrac12)$. Thus
\begin{equation}
    P_{\mathrm{torus}}^{K}
    =
    \mathbb{R}[c(2x),c(2y)]
    =
    \mathbb{R}[c_{1}^{2},c_{2}^{2}].
\end{equation}
The index-$2$ symmorphic subgroup is $H=p2$, and
\begin{equation}
    P_{\mathrm{torus}}^{H}
    =
    \mathbb{R}[c_{1},c_{2}](1,s_{1}s_{2})
    =
    \mathbb{R}[c_{1}^{2},c_{2}^{2}]
    (1,c_{1})(1,c_{2})(1,s_{1}s_{2}).
\end{equation}
Let $g\in G\setminus H$ be the glide reflection
\begin{equation}
    g(x,y)=\left(\tfrac12-x,y+\tfrac12\right),
\end{equation}
which acts by
\begin{equation}
    (c_{1},s_{1},c_{2},s_{2})
    \mapsto
    (-c_{1},s_{1},-c_{2},-s_{2}).
\end{equation}
Thus $c_{1}$, $c_{2}$, and $s_{1}s_{2}$ are all sign-changing.
The fixed secondary basis therefore consists of the products containing
an even number of these three factors. Hence $[K:G]=4$ and
\begin{equation}
    P_{\mathrm{torus}}^{G}
    =
    P_{\mathrm{torus}}^{p2gg}
    =
    \mathbb{R}[c_{1}^{2},c_{2}^{2}]
    \bigl(
        1,\ c_{1}c_{2},\
        c_{1}s_{1}s_{2},\
        c_{2}s_{1}s_{2}
    \bigr).
\end{equation}

\paragraph{$p4gm$.}
Choose the affine reflection supergroup $K=p4mm'$, whose translation
lattice is obtained by adjoining the half-translations
$(\tfrac12,0)$ and $(0,\tfrac12)$. Thus
\begin{equation}
    P_{\mathrm{torus}}^{K}
    =
    \mathbb{R}[c(2x)+c(2y),c(2x)c(2y)]
    =
    \mathbb{R}[c_{1}^{2}+c_{2}^{2},c_{1}^{2}c_{2}^{2}].
\end{equation}
The index-$2$ symmorphic subgroup is $H=p4$, and
\begin{align}
    P_{\mathrm{torus}}^{H}
    &=
    \mathbb{R}[c_{1}+c_{2},c_{1}c_{2}]
    \bigl(1,(c_{1}-c_{2})s_{1}s_{2}\bigr) \\
    &=
    \mathbb{R}[c_{1}^{2}+c_{2}^{2},c_{1}^{2}c_{2}^{2}]
    (1,c_{1}+c_{2})(1,c_{1}c_{2})
    \bigl(1,(c_{1}-c_{2})s_{1}s_{2}\bigr).
\end{align}
Let $m\in G\setminus H$ be the reflection across the line
$y=-x+\tfrac12$,
\begin{equation}
    m(x,y)
    =
    \left(\tfrac12-y,\tfrac12-x\right),
\end{equation}
which acts by
\begin{equation}
    (c_{1},s_{1},c_{2},s_{2})
    \mapsto
    (-c_{2},s_{2},-c_{1},s_{1}).
\end{equation}
Writing
\begin{equation}
    u=c_{1}+c_{2},
    \quad
    v=c_{1}c_{2},
    \quad
    w=(c_{1}-c_{2})s_{1}s_{2},
\end{equation}

Among the secondary basis, the fixed elements are
$1$, $v$, $w$, and $vw$. Hence $[K:G]=4$ and
\begin{align}
    P_{\mathrm{torus}}^{G}
    &=
    P_{\mathrm{torus}}^{p4gm}
    =
    \mathbb{R}[c_{1}^{2}+c_{2}^{2},c_{1}^{2}c_{2}^{2}]
    (1,c_{1}c_{2})
    \bigl(1,(c_{1}-c_{2})s_{1}s_{2}\bigr) \\
    &=
    \mathbb{R}[c_{1}^{2}+c_{2}^{2},c_{1}^{2}c_{2}^{2}]
    \bigl(
        1,\ c_{1}c_{2},\
        (c_{1}-c_{2})s_{1}s_{2},\
        (c_{1}-c_{2})c_{1}c_{2}s_{1}s_{2}
    \bigr).
\end{align}

\subsection{Adjusted Coordinate Conventions}
\label{sec:adj_conventions}

For the centered rectangular groups $cm$ and $c2mm$, the primitive
rhombic basis used in \citet{kimRingInvariantReal2001} differs from the
conventional rectangular basis used in the ITA. Let
$(\mathbf e_{1},\mathbf e_{2})$ denote the conventional rectangular basis,
and let
$(\widetilde{\mathbf e}_{1},\widetilde{\mathbf e}_{2})$ denote the
corresponding primitive rhombic basis, defined by
\begin{equation}
    \widetilde{\mathbf e}_{1}
    =
    \tfrac12(\mathbf e_{2}-\mathbf e_{1}),
    \quad
    \widetilde{\mathbf e}_{2}
    =
    \tfrac12(\mathbf e_{2}+\mathbf e_{1}).
\end{equation}
Equivalently,
\begin{equation}
    \begin{bmatrix}
        \widetilde{\mathbf e}_{1} &
        \widetilde{\mathbf e}_{2}
    \end{bmatrix}
    =
    \begin{bmatrix}
        \mathbf e_{1} &
        \mathbf e_{2}
    \end{bmatrix}
    C,
    \quad
    C
    =
    \begin{bmatrix}
        -\tfrac12 & \tfrac12 \\
        \tfrac12 & \tfrac12
    \end{bmatrix}.
\end{equation}
If a point $\mathbf r$ has coordinates $(x,y)$ in the conventional basis
and $(\widetilde{x},\widetilde{y})$ in the primitive basis, then
\begin{equation}
    \mathbf r
    =
    \begin{bmatrix}
        \mathbf e_{1} &
        \mathbf e_{2}
    \end{bmatrix}
    \begin{bmatrix}
        x\\
        y
    \end{bmatrix}
    =
    \begin{bmatrix}
        \widetilde{\mathbf e}_{1} &
        \widetilde{\mathbf e}_{2}
    \end{bmatrix}
    \begin{bmatrix}
        \widetilde{x}\\
        \widetilde{y}
    \end{bmatrix}.
\end{equation}
Therefore, the coordinate transformation is
\begin{equation}
    \begin{bmatrix}
        \widetilde{x}\\
        \widetilde{y}
    \end{bmatrix}
    =
    C^{-1}
    \begin{bmatrix}
        x\\
        y
    \end{bmatrix}
    =
    \begin{bmatrix}
        -1 & 1\\
        1 & 1
    \end{bmatrix}
    \begin{bmatrix}
        x\\
        y
    \end{bmatrix}
    =
    \begin{bmatrix}
        y-x\\
        x+y
    \end{bmatrix}.
\end{equation}
Rather than explicitly substituting this coordinate transformation into
the primitive-cell formulas, we derive the corresponding ITA-coordinate
expressions directly by expanding the invariant modules inside a common
affine reflection supergroup.

\paragraph{$cm$.}
For bookkeeping, write $pm_{+}$ and $cm_{+}$ for the orientations whose
mirror or glide axes are parallel to the $y$-axis, and write $pm_{-}$ and
$cm_{-}$ for those whose axes are parallel to the $x$-axis.

Let $K=p2mm'$ be the affine reflection supergroup whose translation
lattice contains the half-translations
$(\tfrac12,0)$ and $(0,\tfrac12)$. Its invariant ring is
\begin{equation}
    A
    :=
    P_{\mathrm{torus}}^{K}
    =
    \mathbb{R}[c(2x),c(2y)]
    =
    \mathbb{R}[c_{1}^{2},c_{2}^{2}].
\end{equation}

Consider first $G=cm_{+}$. Its index-$2$ symmorphic subgroup is
$H=pm_{+}$. Since the mirror of $pm_{+}$ is parallel to the $y$-axis,
it acts by $(x,y)\mapsto(-x,y)$. Hence
\begin{equation}
    P_{\mathrm{torus}}^{pm_{+}}
    =
    \mathbb{R}[c_{1},c_{2}](1,s_{2})
    =
    A(1,c_{1})(1,c_{2})(1,s_{2}).
\end{equation}
The group $cm_{+}$ is obtained from $pm_{+}$ by adjoining the glide
reflection with axis $x=\tfrac14$,
\begin{equation}
    g(x,y)
    =
    \left(\tfrac12-x,y+\tfrac12\right),
\end{equation}
which acts on the trigonometric generators by
\begin{equation}
    (c_{1},s_{1},c_{2},s_{2})
    \mapsto
    (-c_{1},s_{1},-c_{2},-s_{2}).
\end{equation}
Thus $c_{1}$, $c_{2}$, and $s_{2}$ are sign-changing. The fixed
secondary basis therefore consists of the products containing an even
number of these three factors. Hence $[K:cm_{+}]=4$ and
\begin{equation}
    P_{\mathrm{torus}}^{cm_{+}}
    =
    A
    \bigl(
        1,\ c_{1}c_{2},\ c_{1}s_{2},\ c_{2}s_{2}
    \bigr).
\end{equation}
Equivalently,
\begin{equation}
    P_{\mathrm{torus}}^{cm_{+}}
    =
    \mathbb{R}[c_{1}^{2},c_{2}^{2}]
    \bigl(
        1,\ c_{1}c_{2},\ c_{1}s_{2},\ c_{2}s_{2}
    \bigr).
\end{equation}

Swapping $x$ and $y$ gives the other orientation,
\begin{equation}
    P_{\mathrm{torus}}^{cm_{-}}
    =
    \mathbb{R}[c_{1}^{2},c_{2}^{2}]
    \bigl(
        1,\ c_{1}c_{2},\ c_{2}s_{1},\ c_{1}s_{1}
    \bigr).
\end{equation}

\paragraph{$c2mm$.}
After fixing a common translation lattice and origin, the group $c2mm$
is generated by $cm_{+}$ and $cm_{-}$. Therefore,
\begin{equation}
    P_{\mathrm{torus}}^{c2mm}
    =
    P_{\mathrm{torus}}^{cm_{+}}
    \cap
    P_{\mathrm{torus}}^{cm_{-}}.
\end{equation}
The intersection is taken inside the common free $A$-module
\begin{equation}
    P_{\mathrm{torus}}
    =
    A(1,c_{1})(1,s_{1})(1,c_{2})(1,s_{2}).
\end{equation}
The displayed secondary bases for
$P_{\mathrm{torus}}^{cm_{+}}$ and
$P_{\mathrm{torus}}^{cm_{-}}$ are subsets of this common $A$-basis.
Their only common basis elements are $1$ and $c_{1}c_{2}$. It follows
that
\begin{equation}
    P_{\mathrm{torus}}^{c2mm}
    =
    A(1,c_{1}c_{2})
    =
    \mathbb{R}[c_{1}^{2},c_{2}^{2}]
    \bigl(1,c_{1}c_{2}\bigr).
\end{equation}
Consequently, $P_{\mathrm{torus}}^{c2mm}$ has rank $2$ over
$P_{\mathrm{torus}}^{K}$, and hence
\begin{equation}
    [K:c2mm]
    =
    [p2mm':c2mm]
    =
    2.
\end{equation}

\clearpage
\section{Symmetric Connectivity Criterion}
\label{sec:sym_conn_criterion}

In what follows, we fix the translation subgroup of the planar group $p1$ by the two primitive vectors
\begin{equation}
    \mathbf e_1=(1,0),\quad \mathbf e_2=(0,1), 
\end{equation}
and all occurrences of connected mean path-connected. Let
\begin{equation}
    \pi:\mathbb R^2\to \mathbb T^2:=\mathbb R^2/\mathbb Z^2 
\end{equation}
be the quotient (covering) map. For a path-connected subset $C\subset\mathbb T^2$, the full preimage $\pi^{-1}(C)\subset\mathbb R^2$ may have
several path-connected components. For any such component $\widetilde C$, define its stabilizer
\begin{equation}
    H(\widetilde C)=\{g\in\mathbb Z^2\mid \widetilde C+g=\widetilde C\}. 
\end{equation}

\begin{lemma}\label{lem:stabilizer_well_defined}
Let $C\subset\mathbb T^2$ be nonempty and path-connected. If $\widetilde C_1,\widetilde C_2$ are any two path-connected components
of $\pi^{-1}(C)$, then there exists $k\in\mathbb Z^2$ such that $\widetilde C_2=\widetilde C_1+k$, and moreover
\begin{equation}
    H(\widetilde C_1)=H(\widetilde C_2). 
\end{equation}
Hence $H(\widetilde C)$ depends only on $C$, and we may write it as $H(C)$.
\end{lemma}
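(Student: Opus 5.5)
The proof has two parts. First we show that any two components of $\pi^{-1}(C)$ differ by an integer translation, using the covering-space structure of $\pi$. Then the equality of stabilizers follows from commutativity of $\mathbb Z^2$.

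\textbf{Setup.} Note that $\pi^{-1}(C)$ is invariant under the deck group $\mathbb Z^2$, since $\pi(\mathbf x+g)=\pi(\mathbf x)$. Each translation $\mathbf x\mapsto \mathbf x+k$ is a homeomorphism of $\mathbb R^2$ preserving $\pi^{-1}(C)$. It therefore permutes the path-connected components of $\pi^{-1}(C)$: if $\widetilde C$ is a component, then $\widetilde C+k$ is path-connected, so it lies in a single component $\widetilde C'$. Applying $-k$ gives $\widetilde C'-k\subseteq \widetilde C$, hence $\widetilde C+k=\widetilde C'$.

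\textbf{Transitivity.} Pick $\mathbf x_1\in\widetilde C_1$ and $\mathbf x_2\in\widetilde C_2$. Since $C$ is path-connected, choose a path $\gamma:[0,1]\to C$ from $\pi(\mathbf x_1)$ to $\pi(\mathbf x_2)$. By the path-lifting property of the covering map $\pi$, $\gamma$ has a lift $\widetilde\gamma$ starting at $\mathbf x_1$. Its image lies in $\pi^{-1}(C)$ and is path-connected, so it lies in $\widetilde C_1$. The endpoint $\widetilde\gamma(1)$ satisfies $\pi(\widetilde\gamma(1))=\pi(\mathbf x_2)$, so $\widetilde\gamma(1)=\mathbf x_2-k$ for some $k\in\mathbb Z^2$. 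Hence $\mathbf x_2\in\widetilde C_1+k$. By the setup step, $\widetilde C_1+k$ is a component. It shares the point $\mathbf x_2$ with $\widetilde C_2$, and components are either disjoint or equal, so $\widetilde C_2=\widetilde C_1+k$.

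\textbf{Stabilizers.} For $g\in\mathbb Z^2$,
\[
\widetilde C_2+g=\widetilde C_2 \iff \widetilde C_1+k+g=\widetilde C_1+k \iff \widetilde C_1+g=\widetilde C_1,
\]
where the last step translates by $-k$ and uses commutativity of addition. Thus $H(\widetilde C_1)=H(\widetilde C_2)$, and $H(C)$ is well defined.

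\textbf{Main subtlety.} The only delicate point is justifying path lifting for an arbitrary path-connected subset $C$ that need not be open. This is harmless, because lifting is performed in the whole torus via the covering $\mathbb R^2\to\mathbb T^2$, and the lift automatically stays in $\pi^{-1}(C)$.
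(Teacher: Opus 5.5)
Your proof is correct and follows essentially the same route as the paper. Both arguments lift a path in $C$ through the covering $\pi$, use that integer translations permute the path-connected components of $\pi^{-1}(C)$, and conclude from the fact that distinct components are disjoint. The only difference is organization: the paper first shows that each component surjects onto $C$ and then picks a point of $\widetilde C_2$ lying over $\pi(x_1)$, whereas you lift a path from $\pi(x_1)$ to $\pi(x_2)$ directly. Your explicit check of $H(\widetilde C_1)=H(\widetilde C_2)$, via commutativity of $\mathbb Z^2$, fills in a step the paper's proof leaves implicit.
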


\begin{proof}
We first show that every path-connected component
$\widetilde C$ of $\pi^{-1}(C)$ projects onto $C$.
Fix $x\in\widetilde C$. For any $\bar y\in C$, choose a path
in $C$ from $\pi(x)$ to $\bar y$. Its lift starting at $x$
lies in $\pi^{-1}(C)$ and, since it begins in
$\widetilde C$, remains in $\widetilde C$. Its endpoint
projects to $\bar y$. Therefore,
$$
\pi(\widetilde C)=C.
$$

Pick $x_1\in\widetilde C_1$ and set $\bar x:=\pi(x_1)\in C$. Since $\pi(\widetilde C_2)=C$, there exists
$x_2\in\widetilde C_2$ with $\pi(x_2)=\bar x$. Then $x_2-x_1\in\mathbb Z^2$; write $k:=x_2-x_1$ so that
$x_2=x_1+k$. The translation $T_k(x)=x+k$ satisfies $\pi\circ T_k=\pi$, hence $T_k(\pi^{-1}(C))=\pi^{-1}(C)$.
Therefore $T_k(\widetilde C_1)$ is path-connected, contained in $\pi^{-1}(C)$, and contains $x_2$. Since $T_k$ restricts to a homeomorphism of
$\pi^{-1}(C)$, it maps path-connected components to
path-connected components. Hence $T_k(\widetilde C_1)$ is a
path-connected component of $\pi^{-1}(C)$. Since it contains
$x_2$, it must equal $\widetilde C_2$. Therefore,
$$
\widetilde C_2
=
T_k(\widetilde C_1)
=
\widetilde C_1+k.
$$
\end{proof}

\begin{lemma}
\label{lem:two_p1_invariant_components_intersect}
Let $E,F\subset\mathbb R^2$ be nonempty, $\mathbb Z^2$-invariant subsets, i.e.
\begin{equation}
    E+(m,n)=E,\quad F+(m,n)=F,\quad \forall (m,n)\in\mathbb Z^2. 
\end{equation}
If both $E$ and $F$ are path-connected, then $E\cap F\neq\varnothing$.
\end{lemma}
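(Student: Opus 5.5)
We will prove the statement by producing, inside $E$, a curve that crosses a large rectangle from left to right, and, inside $F$, a curve that crosses the same rectangle from bottom to top. A planar crossing lemma then forces the two curves to meet. Throughout, ``path'' means a continuous map from a compact interval.

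\textbf{Step 1: bi-infinite curves in $E$ and $F$.} Fix $x_0\in E$. By $\mathbb Z^2$-invariance, $x_0+\mathbf e_1\in E$. Since $E$ is path-connected, there is a path $\gamma:[0,1]\to E$ with $\gamma(0)=x_0$ and $\gamma(1)=x_0+\mathbf e_1$. Concatenating the translates $\gamma+k\mathbf e_1$ for $k\in\mathbb Z$ gives a continuous map $\Gamma_E:\mathbb R\to E$ with the following properties:
\begin{itemize}
\item $\Gamma_E(t+1)=\Gamma_E(t)+\mathbf e_1$;
\item the image of $\Gamma_E$ lies in a horizontal strip $\{|y-y_0|\le M\}$, where $M$ bounds the vertical excursion of the compact path $\gamma$;
\item the first coordinate of $\Gamma_E(t)$ tends to $\pm\infty$ as $t\to\pm\infty$.
\end{itemize}
In the same way, starting from $p_0\in F$ and a path in $F$ from $p_0$ to $p_0+\mathbf e_2$, we obtain $\Gamma_F:\mathbb R\to F$. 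Its image lies in a vertical strip $\{|x-x_1|\le M'\}$, and its second coordinate tends to $\pm\infty$ as $t\to\pm\infty$.

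\textbf{Step 2: reduction to a crossing configuration.} Choose a closed rectangle $R=[a,b]\times[c,d]$ whose vertical extent $[c,d]$ strictly contains $[y_0-M,y_0+M]$ and whose horizontal extent $[a,b]$ strictly contains $[x_1-M',x_1+M']$.

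For $\Gamma_E$, the first coordinate takes the value $a$ at some parameter and the value $b$ at some later parameter. Let $t_1$ be the first time after which the first coordinate reaches $b$, and let $t_0<t_1$ be the last time before $t_1$ at which it equals $a$. The restriction $\alpha:=\Gamma_E|_{[t_0,t_1]}$ then has the following properties:
\begin{itemize}
\item its first coordinate stays in $[a,b]$;
\item its second coordinate stays in $[y_0-M,y_0+M]\subset(c,d)$, because of the strip;
\item its endpoints lie on the left and right sides of $R$.
\end{itemize}
Thus $\alpha$ is a path in $R$ joining the left side to the right side.

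Symmetrically, restricting $\Gamma_F$ gives a path $\beta\subset R$ in $F$ joining the bottom side to the top side.

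\textbf{Step 3: crossing lemma (main obstacle).} The remaining claim is the standard topological fact: a path in a rectangle joining the two vertical sides must intersect any path in the rectangle joining the two horizontal sides. I would prove it via the Brouwer fixed point theorem, following Maehara's argument.

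After rescaling, take $R=[-1,1]^2$ and parametrize $\alpha(s)=(\alpha_1(s),\alpha_2(s))$ and $\beta(t)=(\beta_1(t),\beta_2(t))$ over $s,t\in[-1,1]$, with
\begin{equation}
\alpha_1(\pm1)=\pm1,\qquad \beta_2(\pm1)=\pm1 .
\end{equation}
Suppose, for contradiction, that the paths are disjoint. Then the quantity
\begin{equation}
N(s,t)=\max\{|\alpha_1(s)-\beta_1(t)|,\ |\alpha_2(s)-\beta_2(t)|\}
\end{equation}
is strictly positive everywhere. Define $\Phi:[-1,1]^2\to[-1,1]^2$ by
\begin{equation}
\Phi(s,t)=\Bigl(\tfrac{\beta_1(t)-\alpha_1(s)}{N(s,t)},\ \tfrac{\alpha_2(s)-\beta_2(t)}{N(s,t)}\Bigr).
\end{equation}
This map is continuous and takes values in the boundary of the square, so by Brouwer it has a fixed point $(s,t)$ with $\max(|s|,|t|)=1$.

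We check that each boundary case is impossible:
\begin{itemize}
\item If $s=1$, then $\alpha_1(s)=1\ge\beta_1(t)$, so the first component of $\Phi$ is $\le0$, contradicting that it equals $s=1$.
\item If $s=-1$, then $\alpha_1(s)=-1\le\beta_1(t)$, so the first component is $\ge0$, contradicting $s=-1$.
\item If $t=\pm1$, the same sign argument applies to the second component, since $\beta_2(\pm1)=\pm1$ bounds $\alpha_2(s)$ on the corresponding side.
\end{itemize}
This contradiction shows the paths intersect.

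Since $\alpha\subset E$ and $\beta\subset F$, the intersection point lies in $E\cap F$, so $E\cap F\neq\varnothing$. The only delicate points are the sign bookkeeping in the Brouwer argument and the choice of $R$ in Step 2, which guarantees that the truncated curves genuinely stay inside $R$ with endpoints on the correct sides.
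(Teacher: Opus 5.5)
Your argument is correct, and it takes a genuinely different route from the paper's.

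\textbf{The paper's route.} It projects the two paths (from $p$ to $p+\mathbf e_1$ in $E$, and from $q$ to $q+\mathbf e_2$ in $F$) to loops in $\mathbb T^2$ representing the classes $(1,0)$ and $(0,1)$. It then uses the mod-$2$ intersection number to conclude that these loops meet. Finally, it uses the $\mathbb Z^2$-invariance of $F$ to lift a point of $\pi(E)\cap\pi(F)$ to a point of $E\cap F$.

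\textbf{Your route.} You stay in the plane. Unrolling the path into a bi-infinite $\mathbf e_1$-periodic curve with bounded vertical excursion (and symmetrically for $F$) plays the role of the homology classes. Maehara's Brouwer-based crossing lemma plays the role of intersection theory. Because the crossing point is found directly in $\mathbb R^2$, no lifting step is needed.

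\textbf{What each buys.} Your version is fully elementary and works for merely continuous paths. The mod-$2$ intersection number in the cited reference is defined for smooth transverse maps, so the paper tacitly relies on the standard approximation step: disjoint compact loops have positive distance and can be perturbed to smooth, homotopic, still-disjoint loops. Your version also proves slightly more. You only use $E+\mathbf e_1=E$ and $F+\mathbf e_2=F$, whereas the paper's lifting step needs full $\mathbb Z^2$-invariance of $F$. The paper's version is shorter and fits the covering-space language used in the connectivity theorem that follows.

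\textbf{Two cosmetic points.} In Step 2, $t_1$ should be defined as the first time, after a chosen parameter where the first coordinate equals $a$, at which the first coordinate equals $b$. Also, the Brouwer argument only uses $\beta_1\in[-1,1]$ and $\alpha_2\in[-1,1]$, so confining $\alpha_1$ to $[a,b]$ is more than you need.
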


\begin{proof}
Let $\pi:\mathbb R^2\to\mathbb T^2$ be the quotient map. We first show $\pi(E)\cap\pi(F)\neq\varnothing$.

Pick $p\in E$. Since $p+\mathbf e_1\in E$ and $E$ is path-connected, there exists a path
\begin{equation}
    \gamma_x:[0,1]\to E,\quad \gamma_x(0)=p,\ \gamma_x(1)=p+\mathbf e_1. 
\end{equation}
Set $\alpha:=\pi\circ\gamma_x$, which is a loop in $\mathbb T^2$ based at $\pi(p)$.
Likewise, pick $q\in F$. Since $q+\mathbf e_2\in F$ and $F$ is path-connected, there exists a path
\begin{equation}
    \gamma_y:[0,1]\to F,\quad \gamma_y(0)=q,\ \gamma_y(1)=q+\mathbf e_2, 
\end{equation}
and set $\beta:=\pi\circ\gamma_y$, a loop in $\mathbb T^2$ based at $\pi(q)$.

Lift $\alpha$ to $\widetilde\alpha$ with $\widetilde\alpha(0)=p$. By uniqueness of path lifting,
$\widetilde\alpha=\gamma_x$, hence $\widetilde\alpha(1)-\widetilde\alpha(0)=\mathbf e_1$. Similarly, the lift of
$\beta$ starting at $q$ satisfies $\widetilde\beta(1)-\widetilde\beta(0)=\mathbf e_2$. Thus, under the standard
identification $\pi_1(\mathbb T^2)\cong\mathbb Z^2$, the loops $\alpha,\beta$ represent the classes $(1,0)$ and $(0,1)$.

By intersection theory on surfaces (e.g. the mod-$2$ intersection number; see Sec. 2.4 of \citet{guilleminDifferentialTopology1974}), the mod-$2$ intersection number of two loops depends only on their homology classes in
$H_1(\mathbb T^2;\mathbb Z_2)$, and the two coordinate generators $(1,0)$ and $(0,1)$ have mod-$2$ intersection equal to $1$. Hence $\alpha$ and $\beta$ cannot be disjoint, so $\alpha([0,1])\cap\beta([0,1])\neq\varnothing$. Consequently,
\begin{equation}
    \pi(E)\cap\pi(F)\neq\varnothing. 
\end{equation}

Now take $\bar z\in\pi(E)\cap\pi(F)$. Choose $e\in E$ and $f\in F$ with $\pi(e)=\pi(f)=\bar z$.
Then $e-f\in\mathbb Z^2$. Let $t:=e-f\in\mathbb Z^2$, so $f+t=e$. Since $F$ is $\mathbb Z^2$-invariant, $f+t\in F$,
hence $e\in E\cap F$. Therefore $E\cap F\neq\varnothing$.
\end{proof}

\begin{theorem}
\label{thm:2x2_gamma_connectivity}
Let $S\subset\mathbb R^2$ be a nonempty,
$\mathbb Z^2$-invariant set. Fix
\begin{equation}
    A=[0,2)\times[0,2),
    \quad
    B=[0,1)\times[0,1),
    \quad
    \Gamma
    =
    \bigl(\{0\}\times[0,1)\bigr)
    \cup
    \bigl([0,1)\times\{0\}\bigr)
    \subset B.
\end{equation}
Assume that
$S\cap A$ has finitely many path-connected components and
that every path-connected component of $S\cap A$ intersects
$\Gamma$. Then $S$ is path-connected.
\end{theorem}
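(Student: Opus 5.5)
The plan is to show that $S$ has only finitely many path components, that $\mathbb Z^2$ permutes them, and then to use \cref{lem:two_p1_invariant_components_intersect} to force them all to coincide. The $2\times 2$ supercell is used only to guarantee that translating a point of $B$ by $\mathbf e_1$, $\mathbf e_2$ or $\mathbf e_1+\mathbf e_2$ keeps it inside $A$. The hypothesis on $\Gamma\subset B$ then brings such a point back into $B$ within $S$.

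\textbf{Step 1: finitely many components meet $B$.} Let $C_1,\dots,C_k$ be the path components of $S$ that meet $B$. Every point of $S\cap B$ lies in some path component of $S\cap A$, and each such component lies inside a single path component of $S$. Since $S\cap A$ has finitely many components, $k$ is finite, and $k\ge 1$ because $S\neq\varnothing$ and $S$ is $\mathbb Z^2$-invariant. Conversely, every path component $D$ of $S$ has a translate $D+\mathbf t$, with $\mathbf t\in\mathbb Z^2$, that meets $B$. Since $D+\mathbf t$ is again a path component of $S$, this gives $D+\mathbf t=C_i$ for some $i$.

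\textbf{Step 2: translations permute the $C_i$.} Fix $i$ and pick $x\in C_i\cap B$. Then $x+\mathbf e_1\in S\cap A$, and by hypothesis the component of $S\cap A$ containing $x+\mathbf e_1$ meets $\Gamma\subset B$. Hence the path component $C_i+\mathbf e_1$ of $S$ meets $B$, so $C_i+\mathbf e_1=C_{\sigma_1(i)}$. The same argument with $\mathbf e_2$ gives a map $\sigma_2$. These maps are injective, because translation is a bijection on path components, and so they are permutations of the finite set $\{C_1,\dots,C_k\}$. Combined with Step 1, $\mathbb Z^2$ acts on the finite set of all path components of $S$, and these are exactly $C_1,\dots,C_k$.

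\textbf{Step 3: the components must coincide.} Set $N:=k!$. Then $\sigma_1^N=\sigma_2^N=\mathrm{id}$, so every $C_i$ is invariant under the lattice $N\mathbb Z^2$. Apply the linear map $x\mapsto x/N$: the sets $C_i/N$ are nonempty, path-connected and $\mathbb Z^2$-invariant. By \cref{lem:two_p1_invariant_components_intersect}, $C_i/N\cap C_j/N\neq\varnothing$ for all $i,j$, so $C_i\cap C_j\neq\varnothing$. Distinct path components are disjoint, so $k=1$ and $S=C_1$ is path-connected.

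The main obstacle is Step 2. That is where the specific geometry must be used: $B+\mathbf e_1$ and $B+\mathbf e_2$ lie in $A$, and the hypothesis must apply to every component of $S\cap A$, including components that only meet the shifted cells. Without this, strip-like examples such as $S=\mathbb Z\times\mathbb R$ would break the permutation argument. For this example the component $\{1\}\times[0,2)$ of $S\cap A$ misses $\Gamma$, which is exactly what the hypothesis rules out. I will also check carefully that translating a path component of $S$ gives a path component of $S$, which follows because translation is a homeomorphism preserving $S$.
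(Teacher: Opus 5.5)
Your proof is correct and follows the same strategy as the paper. Finitely many components meet $B$ because $S\cap A$ has finitely many components; translation by $\mathbf e_1,\mathbf e_2$ permutes them because $B+\mathbf e_i\subset A$ and $\Gamma\subset B$; and a rescaling reduces the conclusion to \cref{lem:two_p1_invariant_components_intersect}. The difference is organizational and slightly more economical: you work directly with the path components of $S$ and make them all invariant under the common sublattice $k!\,\mathbb Z^2$, so a single application of the intersection lemma to an arbitrary pair $C_i,C_j$ does the job. The paper instead passes through the components of $\pi(S)$ and \cref{lem:stabilizer_well_defined}, and uses the lemma twice: first to show each $\pi^{-1}(C_j)$ is path-connected, then to show $\pi(S)$ has only one component.
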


\begin{proof}

Let $C:=\pi(S)\subset\mathbb T^2$ and write its decomposition into path-connected components as
\begin{equation}
    C=\bigsqcup_{j\in J}C_j.
\end{equation}
Since $S$ is $\mathbb Z^2$-invariant, one has $S=\pi^{-1}(C)$. Indeed, the inclusion $S\subset\pi^{-1}(C)$ is immediate.
Conversely, if $x\in\pi^{-1}(C)$, then
$\pi(x)=\pi(y)$ for some $y\in S$. Hence
$x-y\in\mathbb Z^2$, and the $\mathbb Z^2$-invariance of
$S$ implies $x\in S$.

Fix $j\in J$. Let $\mathcal L_j$ denote the collection of
path-connected components of $\pi^{-1}(C_j)$ that intersect
$B$:
\begin{equation}
    \mathcal L_j
    :=
    \left\{
        \widetilde C
        \;\middle|\;
        \widetilde C
        \text{ is a path-connected component of }
        \pi^{-1}(C_j),
        \quad
        \widetilde C\cap B\neq\varnothing
    \right\}.
\end{equation}

The set $\mathcal L_j$ is nonempty. Indeed, choose any
path-connected component $\widetilde C$ of
$\pi^{-1}(C_j)$ and a point $x\in\widetilde C$. There exists
$k\in\mathbb Z^2$ such that $x-k\in B$. By
\cref{lem:stabilizer_well_defined},
$\widetilde C-k$ is another path-connected component of
$\pi^{-1}(C_j)$, and it intersects $B$.

We next show that $\mathcal L_j$ is finite. First observe
that every path-connected component of $\pi^{-1}(C_j)$ is
also a path-connected component of $S$. Indeed, any path in
$S$ starting in $\pi^{-1}(C_j)$ projects to a path in $C$
starting in $C_j$, and therefore in $C_j$. For each $\widetilde C\in\mathcal L_j$, choose $x_{\widetilde C}\in\widetilde C\cap B$ and let $D_{\widetilde C}$ be the path-connected component
of $S\cap A$ containing $x_{\widetilde C}$. Since
$\widetilde C$ is a path-connected component of $S$, one has $D_{\widetilde C}\subset\widetilde C$. Consequently, distinct elements of $\mathcal L_j$ determine
distinct path-connected components of $S\cap A$. Since
$S\cap A$ has finitely many path-connected components,
$\mathcal L_j$ is finite.

For $i\in\{1,2\}$, define
\begin{equation}
    \tau_i:\mathcal L_j\longrightarrow\mathcal L_j,
    \quad
    \tau_i(\widetilde C)
    =
    \widetilde C+\mathbf e_i.
\end{equation}
We first verify that $\tau_i$ is well-defined. Let
$\widetilde C\in\mathcal L_j$, and choose $x\in\widetilde C\cap B$. By \cref{lem:stabilizer_well_defined},
$\widetilde C+\mathbf e_i$ is a path-connected component of
$\pi^{-1}(C_j)$. Moreover,
\begin{equation}
    x+\mathbf e_i
    \in
    (\widetilde C+\mathbf e_i)
    \cap
    (B+\mathbf e_i)
    \subset
    S\cap A,
\end{equation}
because $B+\mathbf e_i\subset A$. Let $D_i$ be the path-connected component of $S\cap A$
containing $x+\mathbf e_i$. Since
$\widetilde C+\mathbf e_i$ is a path-connected component of
$S$, one has $D_i\subset\widetilde C+\mathbf e_i$. By the hypothesis of the theorem, $D_i\cap\Gamma\neq\varnothing$. Since $\Gamma\subset B$, it follows that $(\widetilde C+\mathbf e_i)\cap B\neq\varnothing$. Therefore, $\widetilde C+\mathbf e_i\in\mathcal L_j$, and $\tau_i$ is well-defined. Each $\tau_i$ is injective, because
\begin{equation}
    \widetilde C_1+\mathbf e_i
    =
    \widetilde C_2+\mathbf e_i
    \quad\Longrightarrow\quad
    \widetilde C_1=\widetilde C_2.
\end{equation}
Since $\mathcal L_j$ is finite, $\tau_i$ is a permutation.

Fix $\widetilde C\in\mathcal L_j$. Since $\tau_1$ and
$\tau_2$ are permutations, there exist integers
$r_1,r_2\geq1$ such that
\begin{equation}
    \widetilde C+r_1\mathbf e_1=\widetilde C,
    \quad
    \widetilde C+r_2\mathbf e_2=\widetilde C.
\end{equation}
Thus $\widetilde C$ is invariant under the full-rank
sublattice
\begin{equation}
    \Lambda
    :=
    r_1\mathbb Z\mathbf e_1
    \oplus
    r_2\mathbb Z\mathbf e_2.
\end{equation}

We now show that $\widetilde C$ is invariant under the
primitive translations. Consider $\widetilde C$ and $\widetilde C+\mathbf e_1$. Both sets are nonempty, path-connected, and
$\Lambda$-invariant. Define the invertible linear map
\begin{equation}
    \Phi:\mathbb R^2\longrightarrow\mathbb R^2,
    \quad
    \Phi(u,v)
    =
    r_1u\,\mathbf e_1+r_2v\,\mathbf e_2.
\end{equation}
Then $\Phi^{-1}(\widetilde C)$ and $\Phi^{-1}(\widetilde C+\mathbf e_1)$ are nonempty, path-connected, and $\mathbb Z^2$-invariant.
\cref{lem:two_p1_invariant_components_intersect}
therefore gives
\begin{equation}
    \Phi^{-1}(\widetilde C)
    \cap
    \Phi^{-1}(\widetilde C+\mathbf e_1)
    \neq\varnothing.
\end{equation}
Applying $\Phi$, we obtain $\widetilde C \cap (\widetilde C+\mathbf e_1)\neq\varnothing$. By Lemma~\ref{lem:stabilizer_well_defined}, $\widetilde C$ and $\widetilde C+\mathbf e_1$ are path-connected components of the same set
$\pi^{-1}(C_j)$. Since two path-connected components are
either equal or disjoint, it follows that $\widetilde C+\mathbf e_1=\widetilde C$. Hence $\mathbf e_1\in H(\widetilde C)$. The same argument, with $\mathbf e_2$ in place of
$\mathbf e_1$, gives $\mathbf e_2\in H(\widetilde C)$. Therefore, $H(\widetilde C)=\mathbb Z^2$.

Let $\widetilde C'$ be any other path-connected component of
$\pi^{-1}(C_j)$. By
Lemma~\ref{lem:stabilizer_well_defined}, there exists
$k\in\mathbb Z^2$ such that $\widetilde C'=\widetilde C+k$. Since $H(\widetilde C)=\mathbb Z^2$, one has $\widetilde C+k=\widetilde C$. Thus $\widetilde C'=\widetilde C$, proving that
$\pi^{-1}(C_j)$ has exactly one path-connected component.
Therefore $\pi^{-1}(C_j)$ is path-connected and $\mathbb Z^2$-invariant.

It remains to show that $C$ has only one path-connected
component. Suppose, for contradiction, that there exist
distinct indices $j_1,j_2\in J$. Then $\pi^{-1}(C_{j_1})$ and $\pi^{-1}(C_{j_2})$ are nonempty, path-connected, and $\mathbb Z^2$-invariant.
By \cref{lem:two_p1_invariant_components_intersect}, $\pi^{-1}(C_{j_1})\cap \pi^{-1}(C_{j_2})\neq\varnothing$. However, $\pi^{-1}(C_{j_1}\cap C_{j_2})=\varnothing$ because $C_{j_1}$ and $C_{j_2}$ are distinct
path-connected components of $C$. This is a contradiction. Therefore, $C$ has exactly one path-connected component.
Since its full preimage was shown above to be path-connected, and since $S=\pi^{-1}(C)$, we conclude that $S$ is path-connected.

\end{proof}

\clearpage
\section{Experimental Details}
\label{sec:exp_detail}

For continuous 2D representations, we adopt a hash-grid encoding with bilinear interpolation, inspired by InstantNGP~\citep{mullerInstantNeuralGraphics2022}. Unlike original InstantNGP, which feeds multi-level interpolated embeddings into an MLP, we remove the MLP for more stable latent-space SDS optimization and directly average embeddings across levels. We apply random rotations and translations to input coordinates to reduce grid artifacts. We set the number of levels and hash-table capacity to $L=16$ and $T=2^{19}$, respectively. Since the latent space typically contains denser information, we adopt a relatively narrow and high-resolution range with $N_{\min}=128$ and $N_{\max}=256$ with latent resolution of $128 \times 128$ for \cref{sec:pattern_results}, \cref{sec:paper_cutting_results} and \cref{sec:topology_results}. While for the pixel-space baselines in \cref{sec:topology_results}, we set $N_{\min}=8$ and $N_{\max}=128$ with image resolution of $256 \times 256$. For the pixel-space optimization in \cref{sec:metamaterial_results} we set $N_{\min}=2$ and $N_{\max}=32$ with image resolution of $64 \times 64$.
 
\subsection{Pattern Design}
\label{sec:pattern_design_detail}

In \cref{sec:pattern_results}, the lattice parameters for all 17 planar groups are specified by the symmetry-operation markers shown in \cref{fig:sd_label} and are used for both visualization and comparison in the text-conditioned generation experiments. For comparisons with other symmetrization methods, we set the lattice-vector lengths $a$ and $b$ to half the image side length, and $\gamma=2\pi/3$.

\begin{figure*}[!htbp]
\centering
\includegraphics[width=0.8\linewidth]{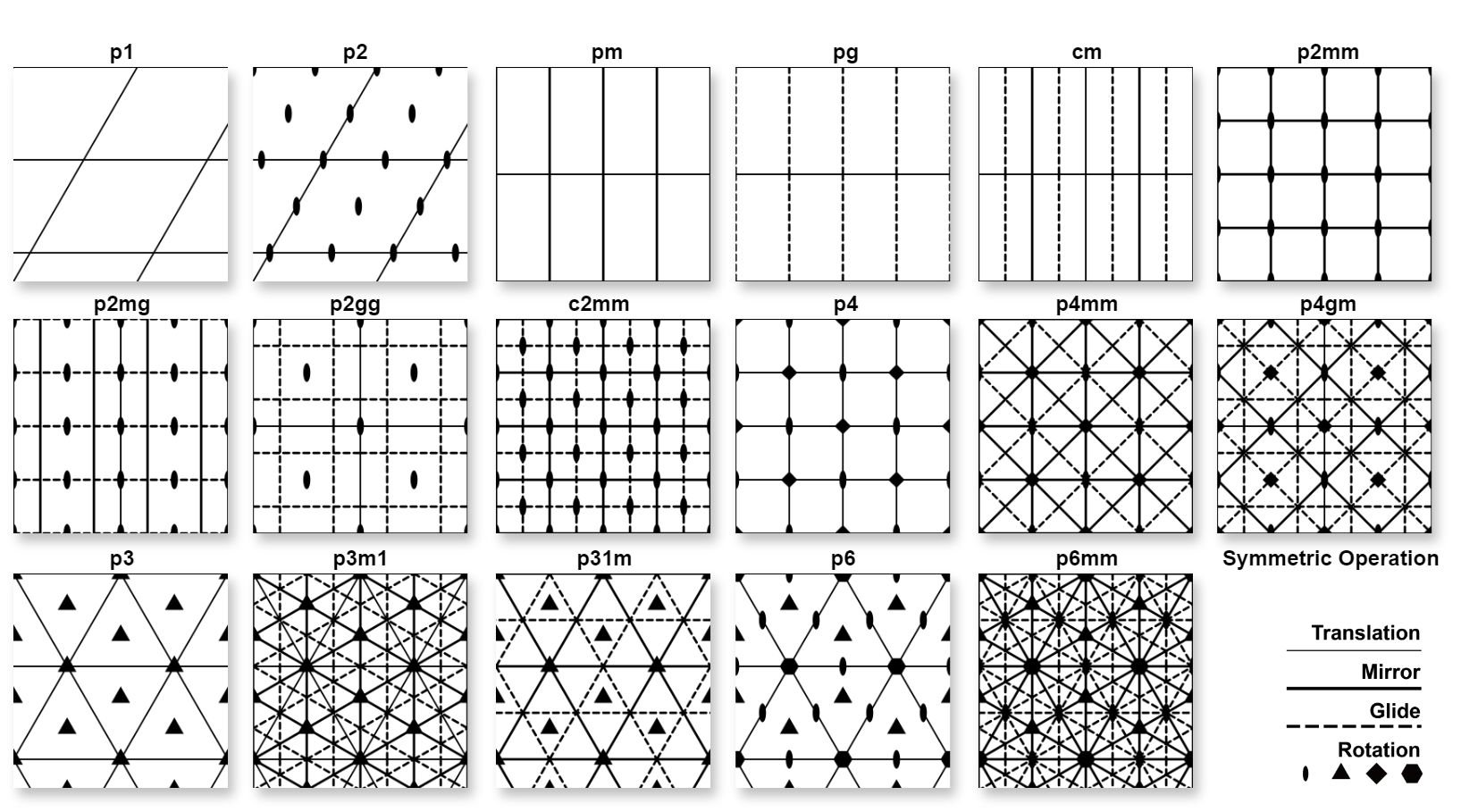}
\caption{Reference images illustrating the symmetry operations of the 17 planar groups with markers.}
\label{fig:sd_label}
\end{figure*}

\subsubsection{Visualization}
\label{sec:vis_detail}

For the generative tasks, during SDS optimization, we use the positive prompt: \textit{stained-glass mosaic fragments, simple polygon shards with thick lead outlines}, and the negative prompt: \textit{lowres, bad anatomy, error, extra digit, fewer digits, worst quality, watermark}. We perform optimization in the latent space. The optimization process is conducted for a total of $200$ steps using the AdamW optimizer with a learning rate of $0.01$. We employ a dynamic guidance strategy, and the CFG scale is linearly annealed from an initial value of $100$ to a final value of $7.5$. We also employ a linear annealing strategy for the timestep, decreasing from $0.4T$ to $0$ (where $T$ represents the total diffusion timesteps). Following the SDS convergence, we apply a refinement stage to enhance image quality and correct potential artifacts. We perturb the optimized latent code by injecting noise corresponding to $t=0.4T$ and subsequently denoise it back to $t=0$ using the standard diffusion sampling process with $100$ inference steps. The resulting images are shown in \cref{fig:sd_1_to_17}.

\subsubsection{Comparison with Text-conditioned Generation}
\label{sec:mllm_setting_detail}

For our method, we adopt the SDS-based optimization and refinement pipeline described in \cref{sec:vis_detail}, with the following modifications. We run SDS optimization for $800$ steps with a learning rate of $0.03$, anneal the diffusion timestep from $0.5T$ to $0$, and inject noise at $t=0.5T$ before the final refinement denoising. The negative prompt remains unchanged, while the positive prompts are specified later. The final generation results are presented in \cref{fig:sd_17_to_17}.

\begin{figure*}[!htbp]
    \centering
    \includegraphics[width=0.85\linewidth]{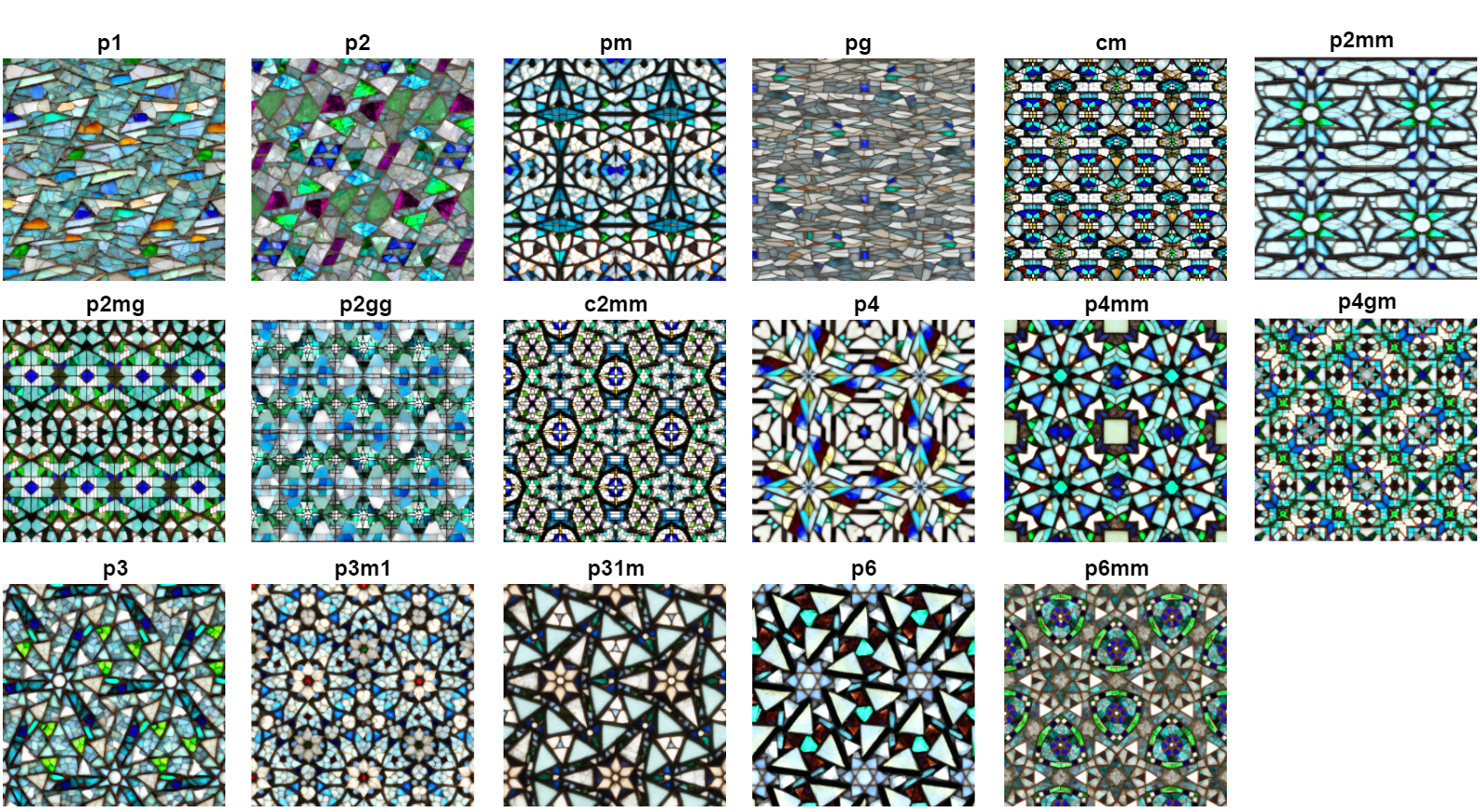}
    \caption{Generated patterns for visualization across the 17 plane symmetry groups.}
    \label{fig:sd_1_to_17}
\end{figure*}

\begin{figure*}[!htbp]
    \centering
    \includegraphics[width=0.85\linewidth]{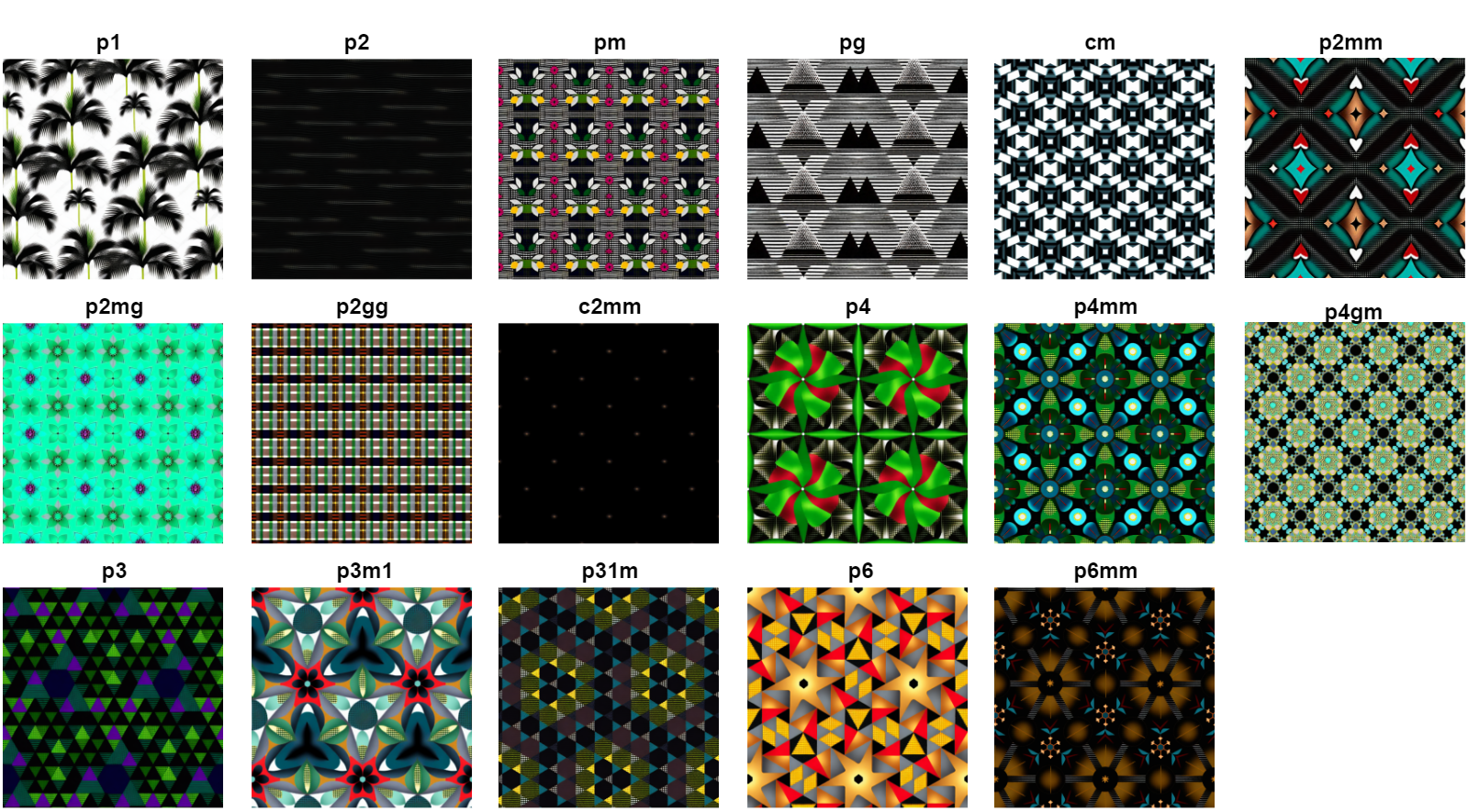}
    \caption{Generated patterns for comparison across the 17 plane symmetry groups.}
    \label{fig:sd_17_to_17}
\end{figure*}

\begin{figure*}[!htbp]
    \centering
    \includegraphics[width=0.9\linewidth]{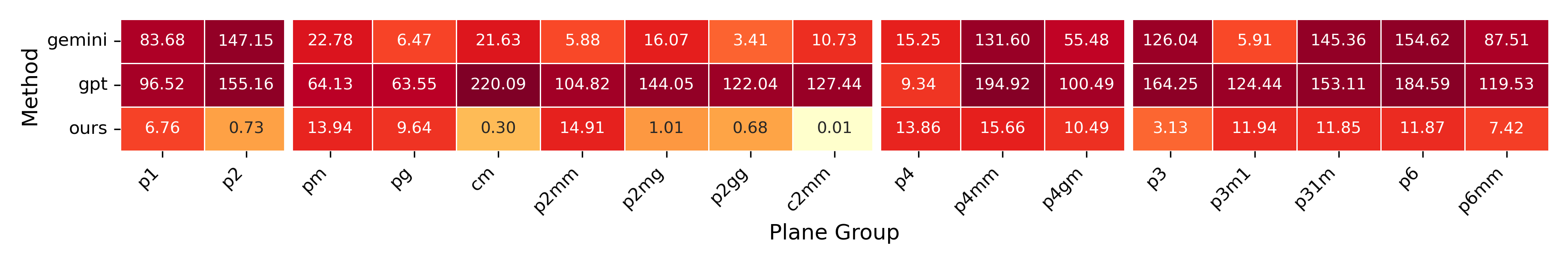}
    \caption[MSE Comparison across 17 Plane Groups for Different Methods.]{\textbf{MSE comparison across 17 plane groups for different methods.} The plane groups are grouped by lattice type. Entries in each cell report $\mathrm{MSE}\times 10^3$, while cell colors are determined by $\log_{10}(\mathrm{MSE})$. Our method achieves lower errors across most groups, demonstrating superior symmetry preservation.}
    \label{fig:mse_detail}
\end{figure*}

\textbf{Prompts Collection.}
As mentioned in Sec.~\ref{sec:experiment}, we curated a set of 17 text prompts, each corresponding to one of the 17 symmetry groups. These prompts serve as the basis for our comparative evaluation. The original concepts were derived from the examples listed in Tab.~9 of \citet{shubnikovSymmetryScienceArt1974}. We utilized Gemini to label these examples and simplify the descriptions into concise text prompts suitable for text-to-image generation. The complete mapping between symmetry groups and their corresponding prompts is detailed below

\begin{tcolorbox}[breakable, colback=TiffanyBlue!5!white, colframe=TiffanyBlue!75!black, width=\textwidth, title={Symmetry Groups with Corresponding Prompts}]
\begin{description}
\setlength{\itemsep}{4pt}
\item[p1:] \textit{palm fronds, seamless pattern, repeating, high contrast}
\item[p2:] \textit{Horizontal striped pattern, vertical lines and running waves, seamless, repeating, high contrast}
\item[pm:] \textit{floral line pattern, seamless, repeating, diagonal}
\item[pg:] \textit{geometric triangle pattern, seamless, repeating, high contrast}
\item[cm:] \textit{square forming a continuous meander maze, right-angled lines, seamless, repeating, geometric}
\item[p2mm:] \textit{heart-shaped frames with arrowheads, seamless, repeating, high contrast}
\item[p2mg:] \textit{lotus flowers, seamless, repeating, geometric}
\item[p2gg:] \textit{rectangular blocks, checkerboard, seamless, repeating, geometric}
\item[c2mm:] \textit{ornamental pattern, diamond grid, seamless, repeating}
\item[p4:] \textit{pinwheel pattern, seamless, repeating, tiled}
\item[p4mm:] \textit{radial circle pattern, seamless, repeating, geometric}
\item[p4gm:] \textit{arabesque, seamless, repeating, symmetric}
\item[p3:] \textit{triangular geometric pattern, seamless, repeating}
\item[p3m1:] \textit{petal tiling pattern, seamless, repeating, curved}
\item[p31m:] \textit{honeycomb geometric pattern, seamless, repeating}
\item[p6:] \textit{star motif pattern, triangle tiling, seamless, repeating}
\item[p6mm:] \textit{snowflake pattern, seamless, repeating, geometric}
\end{description}
\end{tcolorbox}

\textbf{Prompt Templates of MLLMs.}
The prompt templates used for baseline models, including GPT-5.2, Gemini 3 Pro, and SD 2.1, are defined as follows. For direct generation, the models are instructed to generate images directly from the text description without explicit symmetry constraints, using the template: \textit{Generate a square image based on the prompt: [pos\_prompt].} For conditional generation, we provide an auxiliary visual reference to guide MLLMs toward a specific plane symmetry group, using the template: \textit{Based on the input prompt and the [Group Name] plane symmetry group in the reference image, draw a square picture. DO NOT draw markers or lines. Prompt: [pos\_prompt].}

\textbf{Details of Post-Symmetrization.}
In post-symmetrization, we strictly enforce symmetry on images. We project the generated non-perfect images into our symmetric parameterization space. Let $I_{\text{ref}}$ be the input image generated by a baseline model. We initialize our symmetric generator, denoted as a parameterized lattice representation $\mathcal{G}_\phi$, where $\phi$ represents the learnable parameters of the symmetric feature field. The lattice configuration is scaled (typically by a factor of 8) to accommodate high-resolution optimization ($1024 \times 1024$). We optimize the parameters $\phi$ such that the generated symmetric image $I_{\text{sym}} = \mathcal{G}_\phi$ approximates $I_{\text{ref}}$ by minimizing the MSE loss
\begin{equation*}
\mathcal{L}_{\text{MSE}} = \| \mathcal{G}_{\phi} - I_{\text{ref}} \|^2_2.
\end{equation*}

The optimization is conducted using the AdamW optimizer with a 0.1 learning rate and 500 steps. The quantitative MSE results are reported in \cref{fig:mse_detail}.

\subsubsection{Comparison with Other Symmetrization}

For both method, we adopt the SDS-based optimization and refinement pipeline described in \cref{sec:vis_detail}, with the following modifications. We conduct an ablation over the output resolution. In this experiment, we do not use a negative prompt and evaluate on the $p1$ group. The positive prompts are kept the same as the 17 prompts used in \cref{sec:mllm_setting_detail}.

We implement the projection operator by constructing a finite Fourier basis associated with the target lattice and applying the orthogonal
projector obtained from its QR factorization. Given the lattice parameters, we first compute the corresponding reciprocal lattice and enumerate the integer reciprocal-lattice points $(h,k)$ within the Nyquist region. These frequencies define a band-limited periodic subspace. For each sampled pixel, we convert its Cartesian coordinate to the natural lattice coordinate $(u,v)$ and evaluate the Fourier basis functions $1$, $\cos(2\pi(hu+kv))$ and $\sin(2\pi(hu+kv))$. To avoid redundant basis functions, we keep only one representative from each pair of opposite reciprocal frequencies. Specifically, we retain the frequencies satisfying $h>0$ or $h=0,\ k>0$.

Stacking the basis values over all sample points gives a basis matrix $\mathbf{\Phi}$. We then orthonormalize $\mathbf{\Phi}$ by QR decomposition, obtaining $\mathbf{\Phi} = \mathbf{QR}$. The projection of a flattened image $\mathbf{x}$ onto this periodic subspace is computed as $\mathbf{QQ^T x}$. The same projection is applied independently to each image channel. In practice, we compute the projection over the full sampling grid rather than only over the parallelogram cell, which reduces boundary discontinuities and alleviates ringing artifacts.

\subsection{Paper-Cutting Design}
\label{sec:paper_cutting_detail}

\textbf{Details of fine-tuning.}
Our dataset consists of 140 Chinese paper-cutting images, each annotated with a regional style label. Based on these labels, we construct the text prompt for each training sample using the following template: \textit{"traditional chinese papercut art, [regional style], high quality, detailed, artistic, traditional craftsmanship, paper cutting, chinese folk art, intricate patterns, cultural heritage"}. The prompts are used as the text conditioning input during training.

We fine-tuned SDXL using a LoRA-based adaptation implemented with Diffusers. All training images were resized to 1024 $\times$ 1024, randomly cropped, randomly horizontally flipped, and normalized to the range $[-1,1]$. During fine-tuning, the VAE, both SDXL text encoders, and the original U-Net weights were frozen, and only the inserted LoRA parameters were optimized. LoRA adapters were added to the U-Net attention projection layers, with rank $r=4$ by default. The model was trained with batch size 1 for 1000 epochs using AdamW, with a learning rate of $1\times10^{-4}$, a constant learning-rate scheduler, and gradient clipping with a maximum norm of 1.0.

\textbf{Details of paper-cutting design.}
We select four representative symmetry groups: $p2mm$, $p4mm$, $p3m1$, and $p6mm$. To demonstrate generalization, we apply a shared prompt, \textit{red Chinese paper cutting, flowers} across all four groups (shown in the top row of \cref{fig:exp_papercut}a). In contrast, the bottom row displays results generated using prompts tailored specifically to each group. The detailed prompts are listed below:

\begin{tcolorbox}[breakable, colback=TiffanyBlue!5!white, colframe=TiffanyBlue!75!black, width=\textwidth, title={Prompts Used in Paper-Cutting Design}]
\begin{description}
\setlength{\itemsep}{4pt}
\item[p2mm:] \textit{x Chinese paper cutting, lantern}
\item[p4mm:] \textit{red Chinese paper cutting, copper}
\item[p3m1:] \textit{red Chinese paper cutting, star}
\item[p6mm:] \textit{red Chinese paper cutting, snowflakes}
\end{description}
\end{tcolorbox}

During optimization, we choose the lattice parameters to reduce redundancy in the optimized pattern. We additionally place the highest-order rotation center at the image center, which generally produces more visually balanced and aesthetically pleasing paper-cutting patterns. Let \(n_{x}\) and \(n_{y}\) denote the image resolution along the two axes. We set lattice parameters
\begin{equation}
    a=b=\frac{3n_{x}}{4},
    \qquad
    \phi=\frac{\pi}{6},
    \qquad
    (x_0,y_0)
    =
    \left(
    \frac{n_{x}}{2},
    \frac{n_{y}}{2}
    \right),
\end{equation}
with
\begin{equation}
    \gamma=
    \begin{cases}
        \pi/2,  & \text{for } p2mm \text{ and } p4mm,\\
        2\pi/3, & \text{for } p3m1 \text{ and } p6mm.
    \end{cases}
\end{equation}

For the connectivity constraint, we utilize the Virtual Temperature Method (VTM) with $\Gamma = \{0\} \times [0, 1) \cup [0, 1) \times \{0\}$ in the coordination determined by fundamental translation $\mathbf{a}$ and $\mathbf{b}$. We solve the heat-conduction equation on a $2\times2$ supercell. The mesh edge lengths are $1/2\times 1/2$, with the included angle $\gamma$ between $\mathbf a$ and $\mathbf b$, and we use a $128\times128$ mesh. The heat source ranges from $10^{-8}$ to $10^{-4}$, and the thermal conductivity ranges from $10^{-4}$ to $1$. The SIMP penalty is set to $5$. To approximate the maximum temperature, we use a differentiable $p$-norm aggregation with exponent $20$. We adopt four-node bilinear quadrilateral (Q1) shape functions with $2\times2$ Gauss quadrature. Before segmentation, we apply density filtering with radius $2$ and step size $1$.

In the cases, we keep the coefficient of the SDS loss fixed to $1$. For generation, we fix the learning rate to $10^{-2}$, the target volume fraction $\rho_{0}$ to $0.35$, and the connectivity penalty $\lambda_{\mathrm{conn}}$ to $10^{2}$. To identify the optimal configuration for paper-cutting results, we perform a grid search over the volume penalty $\lambda_{\mathrm{vol}}\in\{10^{3},3\times10^{3},5\times10^{3},10^{4}\}$, the end binary ratio in $\{0.0,0.5\}$, and the number of optimization steps in $\{200,400,800\}$. The weight of the rendered $z_{\theta}^{\mathrm{bin}}$ is increased linearly from $0$ to the specified end binary ratio during optimization. We employ a linearly annealed CFG scale, which starts from $100$ and decays to $7.5$ over the course of optimization. We also apply a linear timestep annealing strategy, decreasing the timestep from $T$ to $0$. Accordingly, $c_{\mathrm{solid}}$ and $c_{\mathrm{void}}$ are the spatially averaged latent vectors obtained by encoding pure red and pure white images, respectively, with the SDXL VAE.

\subsection{Topology Design}
As discussed in \cref{sec:topology_results}, we utilize the set of 12 test prompts in~\citet{zhongTopologyOptimizationTextguided2023} with lattice length $a$ and $b$ set to half the image side length. Details are as follows:

\begin{tcolorbox}[breakable, colback=TiffanyBlue!5!white, colframe=TiffanyBlue!75!black, width=\textwidth, title={Prompts Used in Topology Design}]
\begin{description}
\setlength{\itemsep}{4pt}
\item[] \textit{golden, Baroque style}
\item[] \textit{rainbow-color, spider web style}
\item[] \textit{red, koi, Chinese paper cutting style}
\item[] \textit{Autumn branches}
\item[] \textit{wood appliques, simple}
\item[] \textit{kaleidoscope art}
\item[] \textit{modern, dream, wavy texture}
\item[] \textit{rosewood texture}
\item[] \textit{floral ornament}
\item[] \textit{Persian carpet style}
\item[] \textit{Art Deco}
\item[] \textit{Art Nouveau}
\end{description}
\end{tcolorbox}

To mitigate potential artifacts during SDS optimization, we append the suffixes \textit{, tessellation, pure white background} to the prompts. Additionally, a negative prompt is employed: \textit{black, shadow, lowres, bad anatomy, error, extra digit, fewer digits, worst quality, watermark, 3d, shadow, blur, artifact, deformed, distorted, noisy}. As our SDS optimization operates within the latent space, it necessitates a latent representation of the background color. To achieve this, we encode a pure white image using the VAE encoder and utilize the resulting latent code as the white background in the latent space.

Both the baseline and our method share a unified physical simulation environment to ensure a fair comparison. VTM setting follows \cref{sec:paper_cutting_detail}. For the mechanical constraint, we employ homogenized finite element analysis (FEA) on oblique lattice elements. The mesh edge lengths are $1\times1$, with the included angle $\gamma$ between $\mathbf a$ and $\mathbf b$; we use a $64\times64$ mesh during training and a $128\times128$ mesh during testing. We adopt the plane-stress constitutive matrix, with the solid Young's modulus normalized to $E_0=1$, the ersatz-material modulus set to $E_{\min}=10^{-6}$, Poisson's ratio $\nu=0.3$, and the SIMP penalty factor set to $p=10$. We again use Q1 quadrilateral elements with $2\times2$ Gauss quadrature. Before segmentation, we apply density filtering with radius $3$ and step size $1$. After segmentation, we apply a Heaviside projection before FEA:
$$
\bar{\rho}
=
\frac{\tanh(\beta \eta)+\tanh\!\big(\beta(\rho-\eta)\big)}
{\tanh(\beta \eta)+\tanh\!\big(\beta(1-\eta)\big)} ,
$$
where $\beta$ is linearly annealed from $1$ to $8$, and the threshold is $\eta=0.3$. The target volume fraction for all topology design experiments is fixed at $\rho_{0} = 0.45$. In all cases, we keep the coefficient of the mechanical loss fixed to $1$. The loss weights and optimization schedules differ between the baseline and our method as follows.

For baseline, we optimize the topology for $401$ steps. The loss weights are configured as: volume penalty $\lambda_{\text{vol}} = 3 \times 10^4$, connectivity penalty $\lambda_{\text{conn}} = 10^2$, and the semantic CLIP loss weight $\lambda_{\text{clip}} = 5 \times 10^3$. For our method, we have optimization process of $801$ steps to ensure convergence of the generative objective. The physical constraint weights remain consistent with the baseline ($\lambda_{\text{vol}} = 3 \times 10^4$, $\lambda_{\text{conn}} = 10^2$) to enforce comparable structural validity. The SDS loss weight is set to $\lambda_{\text{sds}} = 0.3$. We employ a linearly annealing CFG scale, starting at $50$ and decaying to $7.5$ over the optimization. We also employ a linear annealing strategy for the timestep, decreasing from $T$ to $0$. Following the SDS convergence, we apply a refinement stage identical to that described in \cref{sec:pattern_design_detail}. 

For topology optimization, $c_{\mathrm{void}}$ is chosen as the averaged latent vector across latent pixels obtained by feeding a pure white image into the SD 2.1 VAE encoder. Notably, this SD 2.1-derived void representation yields substantially better optimization performance than its SDXL counterpart. Nevertheless, an SDXL model fine-tuned on the paper-cutting dataset described in \cref{sec:paper_cutting_detail} may also facilitate the generation of planar patterns for topology design.

\subsection{Metamaterial Design}

\textbf{Data Generation.}
We generate the $p1$ metamaterial training set using homogenization-based topology optimization. Each unit cell is discretized on a $64\times 64$ square finite-element grid, where each element has a density variable. The material stiffness is interpolated by a SIMP-type model with penalization $p=5$ and $E_{\min}=10^{-6}$. We use periodic boundary conditions for numerical homogenization and optimize the density field to maximize the homogenized bulk modulus under a volume fraction constraint $0.5$. 

Each sample is initialized from a uniform density field with a softened circular region at the center follows \citet{xia2015design}, together with a small periodic Fourier perturbation to introduce diversity. We use Fourier modes up to $5$ and perturb the initial density with amplitude $0.03$. During optimization, sensitivities are smoothed by a periodic sensitivity filter with radius $3$, and the density variables are updated by the optimality criteria method with move limit $0.1$. Each design is optimized for at most $200$ iterations and terminated early when the maximum density change is below $0.01$. The final continuous density field is binarized by thresholding at $0.5$, resulting in a $64\times64$ binary unit-cell mask. No symmetry other than the basic translational periodicity is imposed during data generation.

\textbf{Diffusion model training.}
We train an unconditional diffusion model on the generated $p1$ unit-cell masks. All samples are represented as single-channel $64\times64$ images and normalized to $[-1,1]$. The denoising network is a convolutional U-Net with sinusoidal timestep embeddings, residual blocks, group normalization, SiLU activations, and encoder-decoder skip connections. The base channel width is set to $64$, with channel multipliers $(1,2,2,2)$. We use a standard noise-prediction objective with $T=1000$ diffusion steps and a linear noise schedule from $\beta_1=10^{-4}$ to $\beta_T=0.02$. At each training iteration, we uniformly sample a timestep $t$, perturb the clean image $\mathbf{x}_0$ as
\begin{equation}
    \mathbf{x}_t=\sqrt{\bar{\alpha}_t}\mathbf{x}_0+\sqrt{1-\bar{\alpha}_t}\epsilon, \epsilon\sim\mathcal{N}(0,I).
\end{equation}

Here
\begin{equation}
    \bar{\alpha}_t=\prod_{s=1}^{t}(1-\beta_s),
\end{equation}

is the cumulative signal-preserving coefficient. We train the U-Net to predict the added noise $\epsilon$ using an MSE loss. The model is optimized with Adam using a batch size of $128$ and a learning rate of $10^{-4}$ for $100$ epochs. No symmetry labels or symmetry-specific data are used during training. For visualization, we periodically generate samples using DDIM sampling with $100$ denoising steps.

\textbf{SDS optimization.}
For the generative tasks driven by SDS, we perform optimization in the parametric symmetric representation space. For each prescribed planar group, the representation is instantiated with one density channel and lattice parameters $a=b=64$ and $\gamma=\pi/2$, consistent with the square unit-cell domain. Each sample is optimized for $300$ steps using the AdamW optimizer with a learning rate of $1\times10^{-1}$, $\beta=(0.9,0.99)$, and $\epsilon=10^{-15}$. The diffusion guidance uses $T=1000$ timesteps, and the SDS timestep is sampled from the range $[0.02T,0.98T]$. During optimization, we pass a normalized progress ratio $i/(N-1)$ to the SDS objective, where $i$ denotes the current optimization step and $N=300$ is the total number of steps. The output of the symmetric representation is transformed by a $\tanh$ activation before being fed into the SDS loss. After optimization, the resulting continuous density field is clipped to $[-1,1]$, rescaled to $[0,1]$, and binarized using a threshold of $0.5$ to obtain the final unit-cell mask.

\begin{figure}[H]
    \centering
    \includegraphics[width=0.7\linewidth]{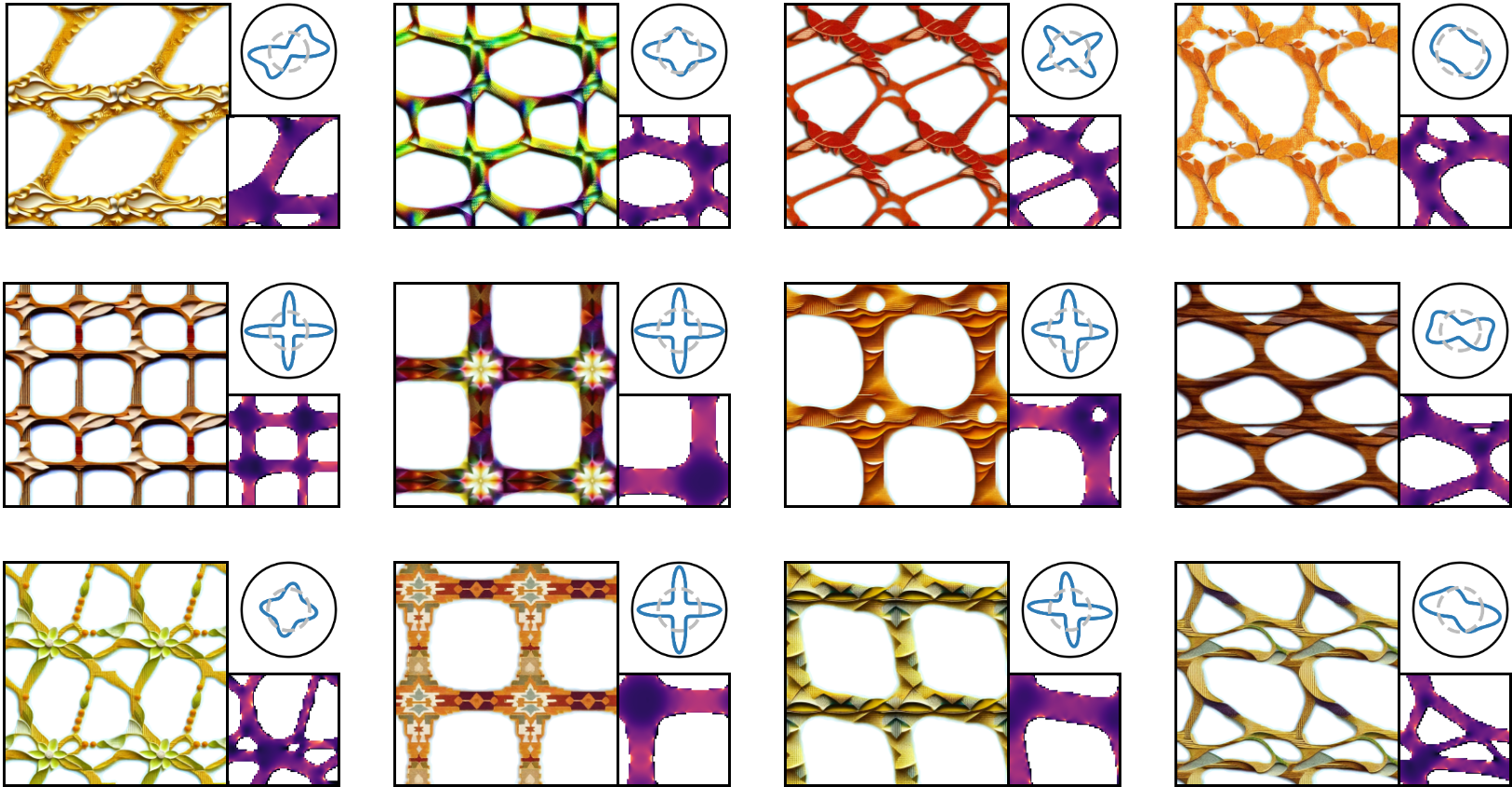}
    \caption[Visualization of Topology Design under $p1$ Symmetry]{\textbf{Visualization of topology design under $p1$ symmetry.} Left: topology-optimized designs. Top right: directional Young’s modulus, where blue curves indicate values across directions and the gray circle denotes the mean. Bottom right: energy distribution of the bulk modulus in unit cell.}
    \label{fig:sdto_p1}
    \vspace{0.5cm}
    \includegraphics[width=0.7\linewidth]{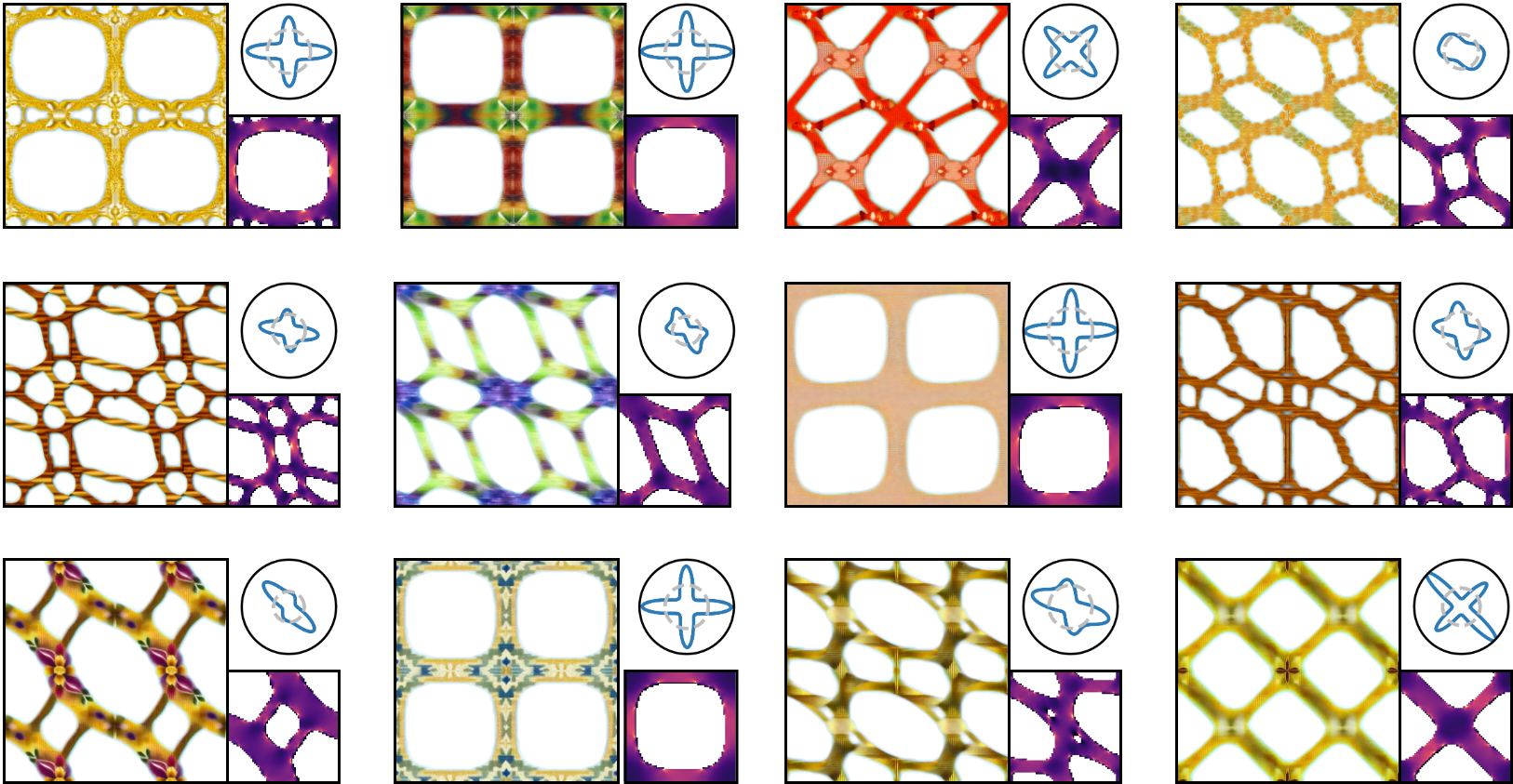}
    \caption[Visualization of Topology Design under $p2$ Symmetry]{\textbf{Visualization of topology design under $p2$ symmetry.} Visualization settings are the same as in \cref{fig:sdto_p1}.}
    \label{fig:sdto_p2}
    \vspace{0.5cm}
    \includegraphics[width=0.7\linewidth]{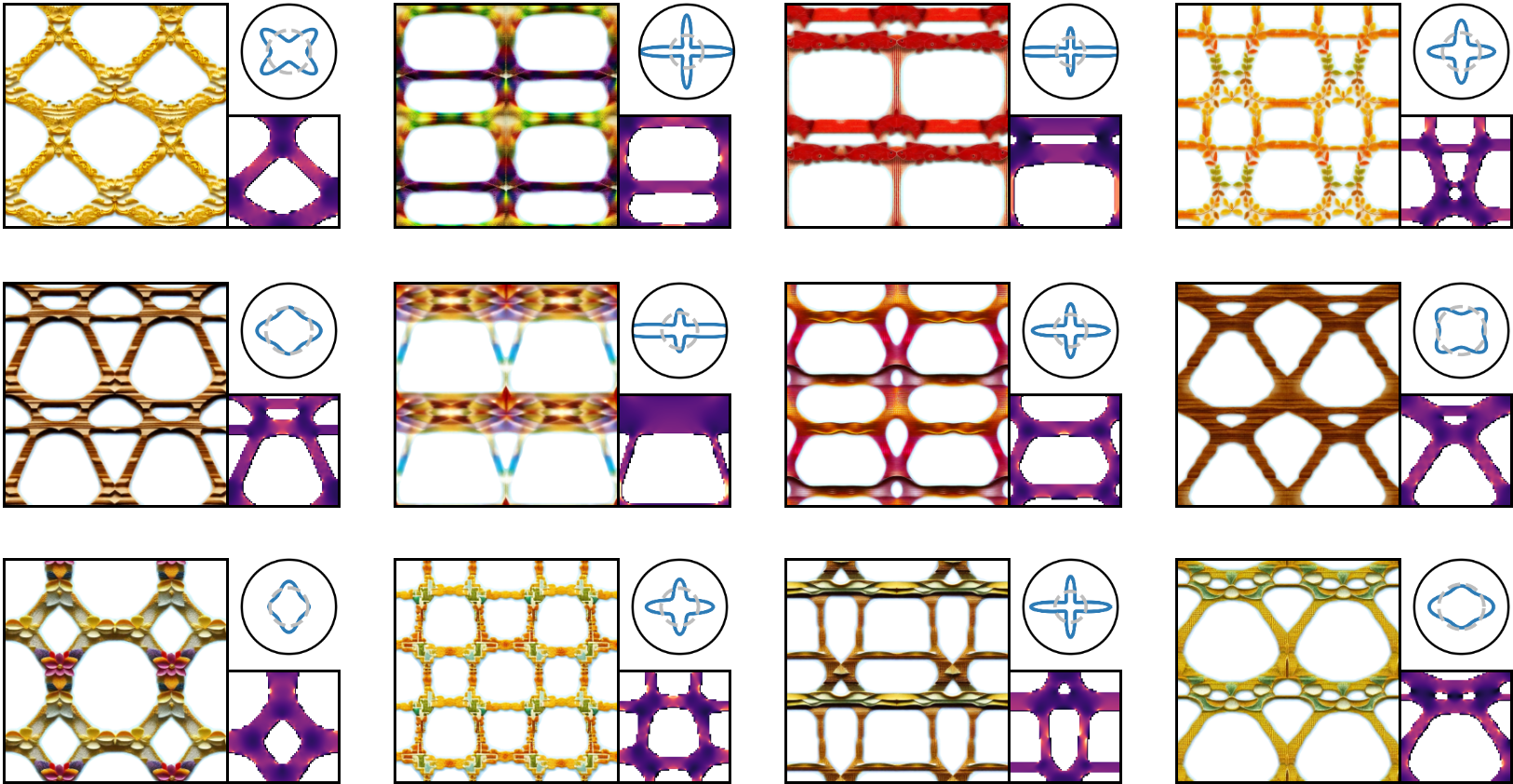}
    \caption[Visualization of Topology Design under $pm$ Symmetry]{\textbf{Visualization of topology design under $pm$ symmetry.} Visualization settings are the same as in \cref{fig:sdto_p1}.}
    \label{fig:sdto_pm}
\end{figure}

\clearpage
\section{Table of Affine Reflection Supergroups and Secondary Invariants}
\label{sec:sec_inv_table}

The symmetric continuous representation in \cref{sec:formulation} relies on the Hironaka-type decomposition: a $G$-invariant field can be written as a linear combination of a finite set of fixed $G$-invariant basis functions (secondary invariants) with coefficient fields that enjoy a higher affine reflection symmetry $W_a$ (cf. \cref{eq:hironaka_param,thm:hironaka_decomposition_func}). For practical use, the only group-dependent ingredient is the explicit choice of these basis functions $\{\eta_i\}_{i=1}^r$, where $r=[W_a\!:\!G]$. This section tabulates non-trivial $\eta_i$ for all planar groups (we omit trivial $\eta_1 = 1$), together with a compatible embedding $G\subset W_a$ and the associated lattice generators $(\mathbf a,\mathbf b)$ of $W_a$. The table serves as a plug-in recipe: once the target symmetry group $G$ and lattice are fixed, we directly obtain $(W_a,\mathbf a,\mathbf b)$ and the corresponding $\eta_i$, and then parameterize $G$-symmetric continuous fields via \cref{eq:hironaka_param}.

For the oblique groups $p1$ and $p2$, embedding into an affine reflection supergroup may require a linear coordinate transformation that maps the oblique lattice to a rectangular one, after which $W_a$ can be taken as $p2mm$. This transformation is used
only in the theoretical construction and is not applied to the
parameterization in our implementation. Instead, we retain the
original lattice coordinates and directly impose the corresponding
conjugated reflections on the coefficient fields, which act as skew
reflection symmetries in the original coordinates.

All notations in \cref{sec:sec_inv_table} follow the same conventions as in
\cref{sec:formulation}. In particular, $c_i$ and $s_i$ denote the cosine and sine associated with the lattice vectors. For the hexagonal lattice, the secondary invariants appearing in the table are defined as
\begin{align*}
\phi_1^{-+} &= s_1 + s_2 - (c_1 s_2 + c_2 s_1), \\
\phi_2^{--} &= s_1 - s_2 + c_1 s_2 - c_2 s_1 + 2(c_1 - c_2)(c_1 s_2 + c_2 s_1).
\end{align*}

\begin{table}[h]
\centering
\caption{Table of Affine Reflection Supergroups and Secondary Invariants}
\renewcommand{\arraystretch}{1.2}
\resizebox{0.9\linewidth}{!}{
\begin{tabular}{llllllllc}
\toprule
Lattice & $G$ & $W_a$ & $\mathbf{a}$ & $\mathbf{b}$ & $\eta_2$ & $\eta_3$ & $\eta_4$ & $r$ \\
\midrule
Oblique     & $p1$   & \multirow{9}{*}{$p2mm$} & $\mathbf{a}$     & $\mathbf{b}$     & $s_1$         & $s_2$                 & $s_1s_2$                           & $4$ \\
Oblique     & $p2$   &                         & $\mathbf{a}$     & $\mathbf{b}$     & $s_1s_2$       & $-$                   & $-$                                & $2$ \\
Rectangular & $pm$   &                         & $\mathbf{a}$     & $\mathbf{b}$     & $s_{1}$        & $-$                   & $-$                                & $2$ \\
Rectangular & $pg$   &                         & $\mathbf{a}$     & $\mathbf{b}/2$   & $s_{1}c_{2}$    & $s_{1}s_{2}$          & $c_{2}s_{2}$                       & $4$ \\
Rectangular & $cm$   &                         & $\mathbf{a}/2$   & $\mathbf{b}/2$   & $c_{1}c_{2}$    & $c_{1}s_{2}$          & $c_{2}s_{2}$                       & $4$ \\
Rectangular & $p2mm$ &                         & $\mathbf{a}$     & $\mathbf{b}$     & $-$            & $-$                   & $-$                                & $1$ \\
Rectangular & $p2mg$ &                         & $\mathbf{a}/2$   & $\mathbf{b}$     & $s_{1}s_{2}$    & $-$                   & $-$                                & $2$ \\
Rectangular & $p2gg$ &                         & $\mathbf{a}/2$   & $\mathbf{b}/2$   & $c_{1}c_{2}$    & $c_{1}s_{1}s_{2}$     & $s_{1}c_{2}s_{2}$                  & $4$ \\
Rectangular & $c2mm$ &                         & $\mathbf{a}/2$   & $\mathbf{b}/2$   & $c_{1}c_{2}$            & $-$          & $-$                                & $2$ \\
\midrule
Square      & $p4$   & \multirow{3}{*}{$p4mm$} & $\mathbf{a}$     & $\mathbf{b}$     & $(c_1-c_2)s_1s_2$ & $-$                 & $-$                                & $2$ \\
Square      & $p4gm$ &                         & $\mathbf{a}/2$   & $\mathbf{b}/2$   & $c_{1}c_{2}$    & $(c_{1}-c_{2})s_{1}s_{2}$ & $(c_{1}-c_{2})c_{1}c_{2}s_{1}s_{2}$ & $4$ \\
Square      & $p4mm$ &                         & $\mathbf{a}$     & $\mathbf{b}$     & $-$            & $-$                   & $-$                                & $1$ \\
\midrule
Hexagonal   & $p3$   & \multirow{5}{*}{$p6mm$} & $\mathbf{a}$     & $\mathbf{b}$     & $\phi_1^{-+}$   & $\phi_2^{--}$         & $\phi_3^{+-}=\phi_1^{-+}\phi_2^{--}$ & $4$ \\
Hexagonal   & $p3m1$ &                         & $\mathbf{a}$     & $\mathbf{b}$     & $\phi_1^{-+}$   & $-$                   & $-$                                & $2$ \\
Hexagonal   & $p31m$ &                         & $\mathbf{a}$     & $\mathbf{b}$     & $\phi_2^{--}$   & $-$                   & $-$                                & $2$ \\
Hexagonal   & $p6$   &                         & $\mathbf{a}$     & $\mathbf{b}$     & $\phi_3^{+-}$   & $-$                   & $-$                                & $2$ \\
Hexagonal   & $p6mm$ &                         & $\mathbf{a}$     & $\mathbf{b}$     & $-$            & $-$                   & $-$                                & $1$ \\
\bottomrule
\end{tabular}
}
\end{table}

\clearpage

\end{document}